\def \codeURL {\url{https://github.com/SprocketLab/WS-Structured-Prediction}}
 \author[$\dagger$]{Harit Vishwakarma}
 \author[$\dagger$]{Nicholas Roberts}
 \author[$\dagger$]{Frederic Sala}
 \affil[$\dagger$]{Dept. of Computer Sciences, University of Wisconsin-Madison}
 \affil[ ]{\footnotesize{\texttt{\{hvishwakarma, nick11roberts, fredsala\}@cs.wisc.edu}}}
\algnewcommand{\LeftComment}[1]{\Statex \(\triangleright\) #1}
\newtheorem{lemma}{Lemma}
\newtheorem{definition}{Definition}
\newtheorem*{theorem*}{Theorem}
\def \E {\mathbb{E}}
\def \R {\mathbb{R}}
\def \lf {\lambda}
\def \M {\mathcal{M}}
\def \Y {\mathbb{Y}}
\def \P {\mathbb{P}}
\def \hn {\hat{\nu}}
\def \b0 {{\bf 0}}
\def \bb {{\bf b }}
\def \bd {{\bf d }}
\def \bu {{\bf u }}
\def \bv {{\bf v }}
\def \bx {{\bf x }}
\def \by {{\bf y }}
\def \bz {{\bf z }}
\def \bC {{\bf C }}
\def \bD {{\bf D }}
\def \bI {{\bf I }}
\def \bK {{\bf K }}
\def \bL {{\bf L }}
\def \bM {{\bf M }}
\def \bP {{\bf P }}
\def \bQ {{\bf Q }}
\def \bT {{\bf T }}
\def \bU {{\bf U }}
\def \bV {{\bf V }}
\def \bZ {{\bf Z }}
\def \cB {\mathcal{B}}
\def \cF {\mathcal{F}}
\def \cO {\mathcal{O}}
\def \cX {\mathcal{X}}
\def \cY {\mathcal{Y}}
\def \psR {\mathbb{R}^{d^+,d^-}}
\def \blm {\boldsymbol {\lambda }}
\def \bmu {\boldsymbol {\mu }}
\def \bth {\boldsymbol {\theta }}
\def \beps {\boldsymbol {\epsilon}}
\def \bSg {\boldsymbol {\Sigma}}
\newcommand{\ldef}{\vcentcolon=}
\newcommand{\rdef}{=\vcentcolon}
\newcommand{\eqname}[1]{\tag*{#1}}
\DeclareMathOperator*{\argmin}{arg\,min}
\title{\textbf{Lifting Weak Supervision To Structured Prediction}}
\begin{document}

\maketitle

\begin{abstract}
Weak supervision (WS) is a rich set of techniques that produce pseudolabels by aggregating easily obtained but potentially noisy label estimates from a variety of sources.
WS is theoretically well understood for binary classification, where simple approaches enable consistent estimation of pseudolabel noise rates.
Using this result, it has been shown that downstream models trained on the pseudolabels have generalization guarantees nearly identical to those trained on clean labels.
While this is exciting, users often wish to use WS for \emph{structured prediction}, where the output space consists of more than a binary or multi-class label set: e.g. rankings, graphs, manifolds, and more. 
Do the favorable theoretical properties of WS for binary classification lift to this setting?
We answer this question in the affirmative for a wide range of scenarios.
For labels taking values in a finite metric space, we introduce techniques new to weak supervision based on pseudo-Euclidean embeddings and tensor decompositions, providing a nearly-consistent noise rate estimator.
For labels in constant-curvature Riemannian manifolds, we introduce new invariants that also yield consistent noise rate estimation.
In both cases, when using the resulting pseudolabels in concert with a flexible downstream model, we obtain generalization guarantees nearly identical to those for models trained on clean data.
Several of our results, which can be viewed as robustness guarantees in structured prediction with noisy labels, may be of independent interest. 
Empirical evaluation validates our claims and shows the merits  of the proposed method\footnote{\codeURL}. 
\end{abstract}

\section{Introduction}

Weak supervision (WS) is an array of methods used to construct pseudolabels for training supervised models in label-constrained settings. The standard workflow \cite{Ratner16, Ratner18, fu2020fast} is to assemble a set of cheaply-acquired labeling functions---simple heuristics, small programs, pretrained models, knowledge base lookups---that produce multiple noisy estimates of what the true label is for each unlabeled point in a training set. These noisy outputs are modeled and aggregated into a single higher-quality pseudolabel. Any conventional supervised end model can be trained on these pseudolabels. This pattern has been used to deliver excellent performance in a range of domains in both research and industry settings \cite{Dunnmon20, re2019overton, Bach20}, bypassing the need to invest in large-scale manual labeling. Importantly, these successes are usually found in binary or small-cardinality classification settings. 

While exciting, users often wish to use weak supervision in \emph{structured prediction} (SP) settings, where the output space consists of more than a binary or multiclass label set \cite{Bakir, Kuleshov15}. In such cases, there exists meaningful algebraic or geometric structure to exploit. Structured prediction includes, for example, learning rankings used for recommendation systems \cite{Korba18}, regression in metric spaces \cite{Petersen19}, learning on manifolds \cite{Rudi18}, graph-based learning \cite{Graber19}, and more. 

An important advantage of WS in the standard setting of binary classification is that it sometimes yields models with nearly the same generalization guarantees as their fully-supervised counterparts. Indeed, the penalty for using pseudolabels instead of clean labels is only a multiplicative constant. This is a highly favorable tradeoff since acquiring more unlabeled data is easy. This property leads us to ask the key question for this work: \textbf{does weak supervision for structured prediction preserve generalization guarantees?} We answer this question in the affirmative, justifying the application of WS to settings far from its current use.

Generalization results in WS rely on two steps \cite{Ratner19, fu2020fast}: (i) showing that the estimator used to learn the model of the labeling functions is consistent, thus recovering the noise rates for these noisy voters, and (ii) using a noise-aware loss to de-bias end-model training \cite{natarajan2013Noise}. Lifting these two results to structured prediction is challenging. The only available weak supervision technique suitable for SP is that of \cite{Shin22}. It suffers from several limitations. First, it relies on the availability of isometric embeddings of metric spaces into $\mathbb{R}^d$---but does not explain how to find these. Second, it does not tackle downstream generalization at all. We resolve these two challenges.

We introduce results for a wide variety of structured prediction problems, requiring only that the labels live in some metric space. We consider both finite and continuous (manifold-valued) settings. For finite spaces, we apply two tools that are new to weak supervision. The approach we propose combines isometric \emph{pseudo-Euclidean embeddings} with \emph{tensor decompositions}---resulting in a nearly-consistent noise rate estimator. In the continuous case, we introduce a label model suitable for the so-called \emph{model spaces}---Riemannian manifolds of constant curvature---along with extensions to even more general spaces. In both cases, we show generalization results when using the resulting pseudolabels in concert with a flexible end model from \cite{Ciliberto2016, Rudi18}.


\paragraph{Contributions:}
\begin{itemize}
  \setlength\itemsep{0em}
    \item New techniques for performing weak supervision in finite metric spaces based on isometric pseudo-Euclidean embeddings and tensor decomposition algorithms,
    \item Generalizations to manifold-valued regression in constant-curvature manifolds, 
    \item Finite-sample error bounds for noise rate estimation in each scenario,
    \item Generalization error guarantees for training downstream models on pseudolabels,
    \item  Experiments confirming the theoretical results and showing improvements over \cite{Shin22}.
\end{itemize}

\section{Background and Problem Setup}
\label{sec:setup}
Our goal is to theoretically characterize how well learning with pseudolabels (built with weak supervision techniques) performs in structured prediction. We seek to understand the interplay between the noise in WS sources and the generalization performance of the downstream structured prediction model. We provide brief background and introduce our problem and some useful notation.

\subsection{Structured Prediction}
Structured prediction (SP) involves predicting labels in spaces with rich structure. Denote the label space by $\cY$. Conventionally $\cY$ is a set, e.g., $\cY = \{-1, +1\}$ for binary classification. In the SP setting, $\cY$ has some additional algebraic or geometric structure. In this work we assume that $\cY$ is a metric space with metric (distance) $d_\cY$. This covers many types of problems, including
\begin{itemize}
  \setlength\itemsep{0em}
    \item Rankings, where $\cY = S_\rho$, the symmetric group on $\{1, \ldots, \rho\}$, i.e., labels are permutations,
    \item Graphs, where $\cY = \mathcal{G}_\rho$, the space of graphs with vertex set $V = \{1, \ldots, \rho\}$, 
    \item Riemannian manifolds, including $\cY = \mathbb{S}_d$, the sphere, or $\mathbb{H}_d$, the hyperboloid. 
\end{itemize}

\paragraph{Learning and Generalization in Structured Prediction} 
In conventional supervised learning we have a dataset $\{(x_1,y_1), \ldots ,(x_n,y_n)\}$ of i.i.d. samples drawn from distribution $\rho$ over $\cX \times \cY$. 
As usual, we seek to learn a model that generalizes well to points not seen during training. Let $\cF = \{f:\cX \mapsto \cY\}$ be a family of functions from $\cX$ to $\cY$. Define the risk $R(f)$ for $f\in \cF$ and $f^*$ as
\begin{align}
    R(f) = \int_{\mathcal{X} \times \mathcal{Y}} d^2_{\mathcal{Y}}(f(x), y) d\rho(x, y) \qquad f^* \in \argmin_{f\in \cF} R(f).
    \label{eq:risk}
\end{align}
For a large class of settings (including all of those we consider in this paper), \cite{Ciliberto2016, Rudi18} have shown that the estimator $\hat{f}$ is consistent: 
\begin{align}
 \hat{f}(x) = \arg\min_{y \in \cY} F(x,y) \qquad F(x,y) \ldef \frac{1}{n}\sum_{i=1}^n \alpha_i(x)d^2_\cY(y,y_i),
\label{eq:f-hat}
\end{align}
where $\alpha(x) = (\bK+ \nu \bI)^{-1} \bK_x$. Here, $\bK$ is the kernel matrix for a p.d. kernel $k: \mathcal{X} \times \mathcal{X} \rightarrow \mathbb{R}$, so that $\bK_{i,j} = k(x_i, x_j)$, $(\bK_x)_i = k(x, x_i)$, and $\nu$ is a regularization parameter. The procedure here is to first compute the weights $\alpha$ and then to perform the optimization in \eqref{eq:f-hat} to make a prediction.

An exciting contribution of \cite{Ciliberto2016, Rudi18} is the generalization bound 
\[R(\hat{f}) \le R(f^*) + \cO(n^{-\frac{1}{4}}),\]
that holds with high probability, as long as there is no label noise. The key question we tackle is \emph{does the use of pseudolabels instead of true labels $y_i$ affect the generalization rate?} 





\subsection{Weak Supervision}
In WS, we cannot access \emph{any} of the ground-truth labels $y_i$. Instead we observe for each $x_i$ the noisy votes $\lambda_{1,i}, \ldots, \lambda_{m,i}$. These are $m$ weak supervision outputs provided by \emph{labeling functions} (LFs) $s_a$, where $s_a : \mathcal{X} \rightarrow \cY$ and $\lambda_{a,i} = s_a(x_i)$. A two step process is used to construct pseudolabels. First, we learn a \emph{noise model} (also called a label model) that determines how reliable each source $s_a$ is. That is, we must learn $\bth$ for $P_{\bth}(\lf_{1}, \lf_{2}, \ldots, \lf_m |y)$---without having access to any samples of $y$. Second, the noise model is used to infer a distribution (or its mode) for each point: $P_{\bth}(y_i | \lf_{1,i}, \dots, \lf_{m,i})$.

We adopt the noise model from \cite{Shin22}, which is suitable for our SP setting:
\begin{equation}
   P_{\bth}(\lambda_1, \ldots, \lambda_m|Y=y) = \frac{1}{Z} \exp\left(-\sum_{a=1}^m \theta_a d^2_\cY(\lambda_a,y) { - \sum_{(a,b) \in E} \theta_{a,b} d^2_{\mathcal{Y}}(\lambda_a, \lambda_b)}\right). 
   \label{eq:lbl-model-orig-a}
\end{equation}
$Z$ is the normalizing partition function, $\bth = [\theta_1, \ldots, \theta_m]^T > 0$ are \emph{canonical} parameters{, and $E$ is a set of correlations}. The model can be described in terms of the \emph{mean} parameters $\mathbb{E}[d^2_{\cY}(\lf_a, y)]$. Intuitively, if $\theta_a$ is large, the typical distance from $\lf_a$ to $y$ is small 
and the LF is reliable; if $\theta_a$ is small, the LF is unreliable. 
This model is appropriate for several reasons. It is an exponential family model with useful theoretical properties. It subsumes popular special cases of noise, including, for regression, zero-mean multivariate Gaussian noise; for permutations, a generalization of the popular Mallows model; for the binary case, it produces a close relative of the Ising model.

Our goal is to form estimates $\hat{\bth}$ in order to construct pseudolabels. One way to build such pseudolabels is to compute
$    \tilde{y} = \argmin_{z \in \cY} 1/m \sum_{a=1}^m \hat{\theta}_a d^2_{\cY}(z, \lambda_{a})$. 
Observe how the estimated parameters $\hat{\theta}_a$ are used to weight the labeling functions, ensuring that more reliable votes receive a larger weight. 

We are now in a position to state the main research question for this work: 

{\bf {Do there exist estimation approaches yielding $\hat{\bth}$ that produce pseudolabels $\tilde{y}$ that maintain the same generalization error rate $\cO(n^{-1/4})$ when used in \eqref{eq:f-hat}, or a modified version of \eqref{eq:f-hat}?}}


\section{Noise Rate Recovery in Finite Metric Spaces}
In the next two sections we handle finite metric spaces. Afterwards we tackle continuous (manifold-valued) spaces. 
We first discuss learning the noise parameters $\bth$, then the use of pseudolabels.

\paragraph{Roadmap} For finite metric spaces with $|\cY| = r $, we apply two tools new to weak supervision. First, we embed $\cY$ into a \emph{pseudo-Euclidean} space \cite{goldfarb85psuedo}. These spaces generalize Euclidean space, enabling isometric (distance-preserving) embeddings for any metric. Using pseudo-Euclidean spaces make our analysis slightly more complex, but we gain the isometry property, which is critical.

Second, we form three-way tensors from embeddings of observed labeling functions. Applying tensor product decomposition algorithms \cite{anandkumar2014tensor}, we can recover estimates of the mean parameters $\hat{\mathbb{E}}[d^2_{\cY}(\lf_a, y)]$ and ultimately $\hat{\theta}_a$. Finally, we reweight the model \eqref{eq:f-hat} to preserve generalization. 

The intuition behind this approach is the following. First, we need a technique that can provide consistent or nearly-consistent estimates of the parameters in the noise model. Second, we need to handle any finite metric space. Techniques like the one introduced in \cite{fu2020fast} handle the first---but do not work for generic finite metric spaces, only binary labels and certain sequences. Techniques like the one in \cite{Shin22} handle any metric space---but only have consistency guarantees in highly restrictive settings (e.g., it requires an isometric embedding, that the distribution over the resulting embeddings is isomorphic to certain distributions, the true label only takes on two values). Pseudo-Euclidean embeddings used with tensor decomposition algorithms meet both requirements

\begin{figure}
    \centering
    \includegraphics[width=\linewidth]{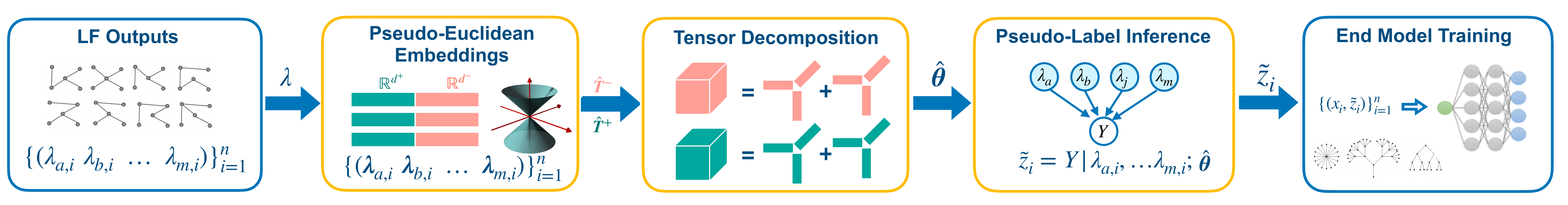}
    \vspace{-10pt}
    \caption{Illustration of our weak supervision pipeline for the finite label space setting.}
    \label{fig:my_label}
\end{figure}
\subsection{Pseudo-Euclidean Embeddings}

Our first task is to embed the metric space into a continuous space---enabling easier computation and potential dimensionality reduction.
A standard approach is multi-dimensional scaling (MDS) \cite{KruskalWish1978}, which embeds $\cY$ into $\mathbb{R}^d$. A downside of MDS is that not all metric spaces embed (isometrically) into Euclidean space, as the square distance matrix $\bD$ must be positive semi-definite. 

A simple and elegant way to overcome this difficulty is to instead use \emph{pseudo-Euclidean} spaces for embeddings. These pseudo-spaces do not require a p.s.d. inner product. As an outcome, any finite metric space can be embedded into a pseudo-Euclidean space with \emph{no distortion} \cite{goldfarb85psuedo}---so that distances are exactly preserved. Such spaces have been applied to similarity-based learning methods \cite{Pekalska01, laub2006information, Pekalska06}.
%
A vector $\bu$ in a pseudo-Euclidean space $\psR$ has two parts: $\bu^+ \in \R^{d^+}$ and $\bu^- \in \R^{d^-}$. The dot product and the squared distance between any two vectors $\bu,\bv$ are $\langle \bu, \bv\rangle_{\phi} = \langle \bu^+,\bv^+ \rangle - \langle \bu^-,\bv^- \rangle$ and $d^2_{\phi}(\bu,\bv) = ||\bu^{+}-\bv^{+}||_2^2 - ||\bu^{-}-\bv^{-}||_2^2$.  These properties enable isometric embeddings: the distance can be decomposed into two components that are individually induced from p.s.d. inner products---and can thus be embedded via MDS. Indeed, pseudo-Euclidean embeddings effectively run MDS for each component (see Algorithm~\ref{alg:finite} steps 4-9). To recover the original distance, we obtain $||\bu^{+}-\bv^{+}||_2^2$ and $ ||\bu^{-}-\bv^{-}||_2^2$ and subtract.

{\it Example:} To see why such embeddings are advantageous, we compare with a one-hot vector representation (whose dimension is $|\cY|$). Consider a tree with a root node and three branches, each of which is a path with $t$ nodes. Let $\cY$ be the nodes in the tree with the shortest-hops distance as the metric. The pseudo-Euclidean embedding dimension is just $d=3$;  see Appendix for more details. The one-hot embedding dimension is $d=|\cY| = 3t+1$---arbitrarily larger! 

Now we are ready to apply these embeddings to our problem. Abusing notation, we write $\blm_a$ and $\by$ for the pseudo-Euclidean embeddings of $\lf_a, y$, respectively. We have that $d^2_{\cY}(\lf_a, y) = d^2_{\phi}(\blm_a, \by)$, so that there is no loss of information from working with these spaces.
%
In addition, we write the mean as $\bmu_{a,y}=\E[\blm_a|\by]$ and the covariance as $\bSg_{a,y}$. 
Our goal is to obtain an accurate estimate $\hat{\bmu}_{a,y} =\hat{\E}[\blm_a|\by]$, which we will use to estimate the mean parameters $\E[d^2_{\cY}(\lambda_a,y)]$.
If we could observe $y$, it would be easy to empirically estimate  $\bmu_{a,y}$---but we do not have access to it. Our approach will be to apply tensor decomposition for multi-view mixtures \cite{anandkumar2014sample}.

\subsection{Multi-View Mixtures and Tensor Decompositions}
In a multi-view mixture model, multiple views $\{\lambda_{a}\}_{a=1}^m$ of a latent variable $Y$ are observed. These views are independent when conditioned on $Y$. 
We treat the positive and negative components $\blm_{a}^+ \in \R^{d^+}$ and $\blm_{a}^{-} \in \R^{d^-}$ of our pseudo-Euclidean embedding as separate multi-view mixtures:
\begin{equation}
    \blm_a^+|\by \sim \bmu_{a,y}^+ + \sigma \sqrt{d^+} \cdot \beps_a^+ \quad \text{ and } \quad \blm_a^-|\by \sim \bmu_{a,y}^- + \sigma \sqrt{d^-} \cdot \beps_a^- \qquad \forall a \in [m],
    \label{eq:proxy-lbl-model}
\end{equation}
where $\bmu^+_{a,y} = \E[\blm_a^+ | \by]$, $\bmu^-_{a,y} = \E[\blm_a^- | \by]$  and  $\beps_a^+, \beps_a^- $ are mean zero random vectors with covariances $\frac{1}{d^+} \bI_{d^+}, \frac{1}{d^-} \bI_{d^-}$ respectively. Here $\sigma^2$ is a proxy variance whose use is described in Assumption~\ref{asm:proxyvar}.

\begin{algorithm}[t]
\begin{algorithmic}[1]
\caption{Algorithm for Pseudolabel Construction} \label{alg:finite}
\Require{ Labeling function outputs $\bL = \{(\lambda_{1,i}, \ldots,  \lambda_{m,i})\}_{i=1}^n$,  Label Space $\cY=\{y_0,\ldots, y_{r-1}\}$}
\Ensure{ Pseudolabels for each data point $\bZ = \{\tilde{z}_i\}_{i=1}^n$}
 \Statex
 \LeftComment{ Step 1: Compute pseudo-Euclidean Embeddings}
 \State {Construct matrices $\bD \in \R^{r\times r}$, $\bD_{ij}= d^2_{\cY}(y_i,y_j)$ and $\bM\in \R^{r\times r}$, $ \bM_{ij} = \frac{1}{2}(\bD_{0i}^2 + \bD_{0j}^2 - \bD_{ij}^2) $ }
  \State{Compute eigendecomposition of $\bM$ and let $\bM = \bU \bC \bU^T$ }
  \State{Set $l^+, l^-$ be indices of positive and negative eigenvalues sorted by their magnitude}
 \State{Let $d^+ = |l^+|,\quad d^- = |l^-|$} i.e. the sizes of lists $l^+$ and $l^-$ respectively.
  \State{Construct permutation matrix $\bI_{perm} \in \R^{r\times (d^++d^-)}$ by concatenating $l^+, l^-$ in order }
 \State{$\bar{\bC} = \bC\bI_{perm}, \bar{\bU} =\bU \bI_{perm} $ }
 
  \State{$\Y = \bar{\bU}^T \bar{\bC}^{\frac{1}{2}} \in \R^{r\times (d^++d^-)}$ and let this define the mapping $g:\cY \mapsto \Y$ } 
 \Statex 
 \LeftComment{Step 2: Parameter Estimation Using Tensor Decomposition}
 \For{$a \gets 1$ to $m-3$} 
  \State{Obtain embeddings $\blm_{a,i} = g(\lambda_{a,i}), \blm_{b,i} = g(\lambda_{b,i}), \blm_{c,i} = g(\lambda_{c,i})  \quad \forall i \in [n] $ where $a,b,c$ are uncorrelated }
   \State{Construct tensors $\hat{\bT}^+$ and $\hat{\bT}^{-}$ as defined in  \eqref{eq:T-hat-split} for triplet $(a,b,c)$}
 \State{$\hat{\bmu}_{a,y}^+, \hat{\bmu}_{b,y}^{+},\hat{\bmu}_{c,y}^{+}$ = \texttt{TensorDecomposition}($\hat{\bT}^+$) }
 
 \State {$\hat{\bmu}_{a,y}^-, \hat{\bmu}_{b,y}^{-},\hat{\bmu}_{c,y}^{-}$ = \texttt{TensorDecomposition}($\hat{\bT}^-$)}
 \State {$s^+_{a,y} = \min_{z \in\{-1,+1\}} \phi(z\cdot\hat{\bmu}^+_{a,y},\by^+)$ and similarly  $s^+_{b,y},s^+_{c,y},s^-_{a,y},s^-_{b,y},s^-_{c,y}$ }
 \State {$\hat{\bmu}^+_{a,y} = s^+_{a,y}\cdot\hat{\bmu}^+_{a,y}$ and similarly correct signs of $\hat{\bmu}^+_{b,y},\hat{\bmu}^+_{c,y},\hat{\bmu}^-_{a,y},\hat{\bmu}^-_{b,y},\hat{\bmu}^-_{c,y}$ }
  \EndFor
\Statex
 
 \LeftComment{Step 3: Infer Pseudo-Labels}
   \State{$ \tilde{Z}^{(i)}=\tilde{z}_i \sim  Y | \lambda_a = \lambda_a^{(i)} ,\ldots \lambda_m = \lambda_m^{(i)} ; \hat{\bth}  $}
   \Statex 
   \State {\Return $\{\tilde{z}_i\}_{i=1}^n$}
 \end{algorithmic}

\end{algorithm}

We cannot directly estimate these parameters from  observations of $\blm_a$, due to the fact that $\by$ is not observed. However, we can observe various moments of the outputs of the LFs such as tensors of outer products of LF triplets. We require that for each $a$ such a triplet  exists. Then, 
\begin{equation}
\bT^+ \ldef  \E[ \blm_a^+ \otimes \blm_b^+ \otimes \blm_c^+] = \sum_{y\in \cY_s} w_{y} \bmu_{a,y}^+\otimes \bmu_{b,y}^+\otimes \bmu_{c,y}^+  \text{ and }  \hat{\bT}^+ \ldef  \frac{1}{n}\sum_{i=1}^n  \blm_{a,i}^+\otimes \blm_{b,i}^+\otimes \blm_{c,i}^+.
\label{eq:T-hat-split}
\end{equation} 
Here $w_y$ are the mixture probabilities (prior probabilities of $Y$) and $\cY_s = \{y:w_y>0\}$. We similarly define $\bT^-$ and $ \hat{\bT}^{-} $.
We then obtain estimates $\hat{\bmu}_{a,y}^+,\hat{\bmu}_{a,y}^-$ using an algorithm from \cite{anandkumar2014tensor} with minor modifications to handle pseudo-Euclidean rather than Euclidean space. The overall approach is shown in Algorithm~\ref{alg:finite}. We have three key assumptions for our analysis,
\begin{restatable}{assumption}{} The support of $P_Y$, i.e., $k = |\{y:w_y>0\}|$ 
and the label space $\cY$ is such that $\min(d^+,d^-)\ge k$,
$||\bmu^+_{a,y}||_2 = 1 ,||\bmu^-_{a,y}||_2 = 1$ for $a \in [m],y \in \cY$.
\label{asm:support}
\end{restatable}
\begin{restatable}{assumption}{}{(Bounded angle between $\bmu$ and $\by$}) Let $\omega(\bu,\bv)$ denote the angle between any two vectors $\bu,\bv$ in a Euclidean space. We assume that $\omega(\bmu^+_{a,y},\by^+) \in [0,\pi/2-c)$, $\omega(\bmu^-_{a,y},\by^-) \in [0,\pi/2-c)$ $\forall a \in [m]$, and $y \in \cY_s$, for some sufficiently small $c \in (0,\pi/4] $ such that $\sin(c) \ge \max(\epsilon_0(d^+),\epsilon_0(d^-))$, where $\epsilon_0(d)$ is defined for some $n > n_0$ samples in \eqref{eq:epsilon-plus}. 
\label{ass:angle-bounded} 
\end{restatable}

\begin{restatable}{assumption}{} $\sigma$ is such that the recovery error with model \eqref{eq:proxy-lbl-model} is at least as large as with \eqref{eq:lbl-model-orig-a} .
\label{asm:proxyvar}
\end{restatable}

These enable providing guarantees on recovering the mean vector magnitudes \eqref{asm:support} and signs \eqref{ass:angle-bounded} and simplify the analysis \eqref{asm:support}, \eqref{asm:proxyvar}; all three can be relaxed at the expense of a more complex analysis.

Our first theoretical result shows that we have near-consistency in estimating the mean parameters in \eqref{eq:lbl-model-orig-a}. We use standard notation $\tilde{\cO}$ ignoring logarithmic factors.

\begin{restatable}[]{theorem}{meanParamRecovery}
Let $\hat{\bmu}^+_{a,y}, \hat{\bmu}^-_{a,y}$ be the estimates of ${\bmu}^+_{a,y}, {\bmu}^-_{a,y}$ returned by Algorithm \ref{alg:finite} with input $\hat{\bT}^+,\hat{\bT}^-$ constructed using isometric pseudo-Euclidean embeddings (in $\R^{d^+,d^-}$). Suppose Assumptions 1 and 2 are met, a sufficiently large number of samples $n$ are drawn from the model in ~\eqref{eq:lbl-model-orig-a}, 
and $k=|\cY_s|$. Then there exists a constant $C_0>0$ such that with high probability $ \forall a \in [m]$ and $y\in \cY_s$,
\begin{align*}
   |\theta_a -\hat{\theta}_a| \le  C_0\Big|\E[d^2_\cY(\lambda_a,y)] -  \hat{\E}[d^2_\cY(\lambda_a,y)] \Big | \le   \epsilon(d^+) + \epsilon(d^-) , 
\end{align*}


where
\begin{equation}
       \epsilon(d) \ldef \begin{cases} \tilde{\cO}\Big ( k\sqrt{\frac{d}{n}} \Big  ) + \tilde{\cO}\Big ( \frac{\sqrt{k}}{d} \Big ) \quad & \text{ if } \, \sigma^2 = \Theta(1), \\ 
       \tilde{\cO}\Big ( \sqrt{\frac{k}{n}} \Big  ) + \tilde{\cO}\Big ( \frac{\sqrt{k}}{d} \Big ) \quad &\text{ if } \, \sigma^2 = \Theta(\frac{1}{d}).
       \end{cases}
       \label{eq:epsilon-plus}
   \end{equation}
\label{thm:meanfinite}
\end{restatable}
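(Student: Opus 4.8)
The plan is to split the two inequalities. The left one is a deterministic statement about the exponential family \eqref{eq:lbl-model-orig-a}: since that model is minimal, differentiating $\log Z(\bth)$ twice shows that the Jacobian of the map from canonical parameters $\bth$ to mean parameters $\big(\E_{\bth}[d^2_\cY(\lambda_a,Y)]\big)_a$ is, up to sign, the covariance matrix of the sufficient statistics, which is positive definite on the interior of the parameter domain. Hence this map is a diffeomorphism with locally Lipschitz inverse, and on a compact neighborhood containing both $\bth$ and $\hat{\bth}$ we obtain $|\theta_a-\hat{\theta}_a| \le C_0\,\big|\E[d^2_\cY(\lambda_a,y)]-\hat{\E}[d^2_\cY(\lambda_a,y)]\big|$ with $C_0$ the supremum over that neighborhood of the operator norm of the inverse covariance; that $\hat{\bth}$ lies in this neighborhood follows by a short bootstrapping argument once the mean error is known to be small. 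Since $\hat{\theta}_a$ is obtained by moment matching against the estimated (conditional) mean parameters, the whole problem reduces to bounding the mean-parameter error.

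For the right inequality, I would first use that the pseudo-Euclidean embedding is an exact isometry, so $d^2_\cY(\lambda_a,y)=d^2_\phi(\blm_a,\by)=\|\blm_a^+-\by^+\|_2^2-\|\blm_a^--\by^-\|_2^2$; taking conditional expectations,
\[
\E\big[d^2_\cY(\lambda_a,y)\mid Y=y\big]=\big(\|\bmu_{a,y}^+-\by^+\|_2^2+\mathrm{tr}\,\bSg_{a,y}^+\big)-\big(\|\bmu_{a,y}^--\by^-\|_2^2+\mathrm{tr}\,\bSg_{a,y}^-\big),
\]
with the analogous expression (hatted means, empirical covariances) for $\hat{\E}$. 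Using $\|\bmu_{a,y}^\pm\|_2=1$ from Assumption~\ref{asm:support} and the triangle inequality, the difference of the squared-distance terms is controlled by $\|\hat{\bmu}_{a,y}^\pm-\bmu_{a,y}^\pm\|_2$, while the difference of the trace terms---equal to $\sigma^2 d^\pm$ under the proxy model \eqref{eq:proxy-lbl-model}---concentrates at the usual $\tilde{\cO}(n^{-1/2})$ rate; Assumption~\ref{asm:proxyvar} guarantees that analyzing the proxy model only overestimates the error, so no bias is lost.

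It remains to bound $\|\hat{\bmu}_{a,y}^\pm-\bmu_{a,y}^\pm\|_2$. For a valid triplet $(a,b,c)$, conditional independence of the views given $Y$ together with the mean-zero noise yields the exact rank-$k$ identity $\bT^+=\sum_{y\in\cY_s}w_y\,\bmu_{a,y}^+\otimes\bmu_{b,y}^+\otimes\bmu_{c,y}^+$ (and likewise for $\bT^-$), the components being linearly independent because $\min(d^+,d^-)\ge k$. A Bernstein-type bound for the empirical average of the bounded rank-one tensors $\blm_{a,i}^+\otimes\blm_{b,i}^+\otimes\blm_{c,i}^+$ gives $\|\hat{\bT}^\pm-\bT^\pm\|=\tilde{\cO}\big(\sigma^3 (d^\pm)^{3/2} n^{-1/2}\big)+\tilde{\cO}(n^{-1/2})$, where the first term collapses to $\tilde{\cO}(n^{-1/2})$ when $\sigma^2=\Theta(1/d)$ and scales with $d$ when $\sigma^2=\Theta(1)$---precisely the dichotomy in \eqref{eq:epsilon-plus}. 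Feeding this into the perturbation analysis of the tensor power method \cite{anandkumar2014tensor}, applied to each Euclidean component and combined via the sign test of Algorithm~\ref{alg:finite} (which picks whichever of $\pm\hat{\bmu}_{a,y}^+$ is closer to $\by^+$, and likewise for the negative part), gives $\|\hat{\bmu}_{a,y}^\pm-\bmu_{a,y}^\pm\|_2=\tilde{\cO}\big(\mathrm{poly}(k)\cdot\|\hat{\bT}^\pm-\bT^\pm\|\big)+\tilde{\cO}\big(\sqrt{k}/d^\pm\big)$, the second term being the residual whitening/approximation error bundled with the proxy-model bias. Assumption~\ref{ass:angle-bounded} is what makes the sign test correct with high probability: the bounded-angle condition forces $\bmu_{a,y}^\pm$ to be strictly closer to $\by^\pm$ than $-\bmu_{a,y}^\pm$ is (this is where $\epsilon_0(d)$ enters). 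Assembling the three pieces, collecting the $n$- and $d$-dependent terms, and taking a union bound over the $\tilde{\cO}(mk)$ pairs $(a,y)$ with $m$ fixed yields the right inequality; combined with the first paragraph this gives the left one.

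The main obstacle is the last paragraph: getting the concentration of the empirical pseudo-Euclidean tensors with the stated joint dependence on $d^\pm$ and $\sigma^2$, and then pushing it through the tensor-power-method perturbation bounds---which must be adapted to the two-component pseudo-Euclidean setting and whose whitening step is what produces the $\tilde{\cO}(\sqrt{k}/d)$ term---without letting the $\mathrm{poly}(k)$ condition-number factors degrade the claimed $k$-dependence. The exponential-family inversion and the sign correction are comparatively routine once Assumptions~\ref{asm:support}--\ref{asm:proxyvar} are in force.
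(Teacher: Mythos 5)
Your plan matches the paper's proof in structure: both (i) invoke the tensor-decomposition guarantees of Anandkumar et al.\ separately on the positive and negative pseudo-Euclidean components, (ii) use Assumption~\ref{ass:angle-bounded} to disambiguate the sign returned by the tensor power method, (iii) expand the mean parameter $\E[d^2_\phi(\blm_a,\by)]$ into squared-norm and inner-product pieces controlled by the recovered $\hat{\bmu}^\pm_{a,y}$, and (iv) convert mean-parameter error to canonical-parameter error via the Hessian of the log-partition function of the exponential family. The only small omission is that the paper also propagates the prior-weight estimation error $|w_y-\hat{w}_y|\le\cO(\epsilon(d^\pm)/k)$ through the outer expectation $\E_{\by}[\cdot]$, which your plan glosses over; this is a bookkeeping step that does not change the rate, and otherwise the route is essentially identical to the paper's.
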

We  interpret Theorem~\ref{thm:meanfinite}. It is a nearly direct application of \cite{anandkumar2014sample}. There are two noise cases for $\sigma$. In the high-noise case, $\sigma$ is independent of dimension $d$ (and thus $|\cY|$). Intuitively, this means the average distance balls around each LF begin to overlap as the number of points grows---explaining the multiplicative $k$ term. If the noise scales down as we add more embedded points, this problem is removed, as in the low-noise case.
In both cases, the second error term comes from using the algorithm of \cite{anandkumar2014tensor} and is independent of the sampling error. Since $k=\Theta(d)$, this term goes down with $d$.
The first error term is due to sampling noise and goes to zero in the number of samples $n$.
Note the tradeoffs of using the embeddings. If we used one-hot encoding, $d = |\cY|$, and in the high-noise case, we would pay a very heavy cost for $\sqrt{d/n}$. However, while sampling error is minimized when using a very small $d$, we pay a cost in the second error term. This leads to a tradeoff in selecting the appropriate embedding dimension.

\section{Generalization Error for Structured Prediction in Finite Metric Spaces}
We have access to labeling function outputs $\lambda_{1,i},\ldots, \lambda_{m,i}$ for points $x_i$ and noise rate estimates $\hat{\theta}_a,\ldots,\hat{\theta}_m$. How can we use these to infer  unobserved labels $y$ in \eqref{eq:f-hat}?
Our approach is based on \cite{natarajan2013Noise, rooyen18Noise},where the underlying loss function is modified to deal with noise. Analogously, we modify \eqref{eq:f-hat} in such a way that the generalization guarantee is nearly preserved. 




\subsection{Prediction with Pseudolabels}
First, we construct the posterior distribution $P_{\hat{\bth}}(Y=y|\lambda)$. We use our estimated noise model $P_{\hat{\bth}}(\lambda|Y)$ and the prior $P(Y=y)$. We create pseudo-labels for each data point by drawing a random sample from the posterior distribution conditioned on the output of labeling functions:
$    \tilde{Z}^{(i)}=\tilde{z}_i \sim  Y | \lambda_a = \lambda_a^{(i)} ,\ldots, \lambda_m = \lambda_m^{(i)} ; \hat{\bth}   .$
We thus observe $(x_1,\tilde{z}_1),\ldots, (x_n,\tilde{z}_n) $ where $\tilde{z}_i$  is sampled as above. 
To overcome the effect of noise we create a perturbed version of the distance function using the noise rates, generalizing \cite{natarajan2013Noise}. This requires us to characterize the noise distribution induced by our inference procedure. In particular we seek the probability that $\tilde{Z}=y_j$ when the true label is $y_j$. This can be expressed as follows. 
Let $\cY^m$ denote the $m$-fold Cartesian product of $\cY$ and let $\Lambda_u = (\lambda_1^{(u)},\ldots, \lambda_m^{(u)})$ denote its $u^{th}$ entry. We write 
   \begin{equation}
   \bP_{ij} = P_{\bth}(\tilde{Z} = y_j | Y= y_i) = \sum_{ u=1}^{|\cY^m|} P_{\bth}(\tilde{Z}=y_j | \Lambda = \Lambda^{(u)} ) \cdot P_{\bth}( \Lambda = \Lambda^{(u)} |Y = y_i) .
   \label{eq:dist-P}
   \end{equation}
We define $\bQ_{ij} = P_{\hat{\bth}}(\tilde{Z} = y_j | Y = y_i )$ using $\hat{\bth}$. $\bP$ is the noise distribution induced by the true parameters $\bth$ and $\bQ$ is an approximation obtained from inference with the \emph{estimated} parameters $\hat{\bth}$. 
With this terminology, we can define the perturbed version of the distance function and a corresponding replacement of \eqref{eq:f-hat}:
\begin{equation}
    \tilde{d}_q(T,\tilde{Y}=y_j)  \ldef \sum_{i=1}^k (\bQ^{-1})_{ji} d^2_{\cY} (T,Y=y_i)  \quad \forall y_j \in \cY_s,
    \label{eq:tilde-d-weak}
\end{equation}
\begin{equation}
\tilde{F}_q(x,y) \ldef \frac{1}{n}\sum_{i=1}^n \alpha_i(x)\tilde{d}_q(y,\tilde{z}_i) \qquad \hat{f}_{q}(x) = \arg\min_{y \in \cY} \tilde{F}_q(x,y).
\label{eq:f-hat-q}
\end{equation}

We similarly define $\tilde{d}_p, \tilde{F}_p, \hat{f}_p$ using the true noise distribution $\bP$. The perturbed distance $\tilde{d}_p$ is an unbiased estimator of the true distance. However we do not know the true noise distribution $\bP$ hence we cannot use it for prediction. Instead we use $\tilde{d}_q$. Note that $\tilde{d}_q$ is no longer an unbiased estimator---its bias can be expressed as function of the parameter recovery error bound in Theorem~\ref{thm:meanfinite}. 


\subsection{Bounding the Generalization Error}
What can we say about the excess risk $R(\hat{f}_{q}) - R(f^*)$? 
Note that compared to the prediction based on clean labels, there are two additional sources of error. One is the noise in the labels (i.e., even if we know the true $\bP$, the quality of the pseudolabels is imperfect). The other is our estimation procedure for the noise distribution. We must address both sources of error. 

Our analysis uses the following assumptions on the minimum and maximum singular values  $\sigma_{\min}(\bP)$ , $\sigma_{\max}(\bP)$ and the condition number $\kappa(\bP)$ of true noise matrix $\bP$ and the function $F$. Additional detail is provided in the Appendix.
\begin{restatable}{assumption}{}{(Noise model is not arbitrary) }
The true parameters $\bth$ are such that  $\sigma_{\min}(\bP) > 0$, and the condition number $\kappa(\bP)$ is sufficiently small.
\label{asm:noise-good}
\end{restatable}
\begin{restatable}{assumption}{}{(Normalized features)} $|\alpha(x)| \le 1$, for all $x \in \cX$.
\label{asm:normalized}
\end{restatable}
\begin{restatable}{assumption}{}(Proxy strong convexity)
The function $F$ in \eqref{eq:f-hat} satisfies the following property with some $\beta >0$. As we move away from the minimizer of $F$, the function increases and the rate of increase is proportional to the distance between the points:
 \begin{equation}
    F\big(x,f(x)\big) \ge F\big(x,\hat{f}(x)\big)+ \beta \cdot d_\cY^2\big(f(x),\hat{f}(x)\big) \qquad \forall x\in \cX, \forall f\in \cF.
 \end{equation}
\label{asm:strong-cvx}
\end{restatable}
\vspace{-10pt}
With these assumptions, we provide a generalization result for prediction with pseudolabels,
\begin{restatable}[]{theorem}{thmGenErrTrueNoise} (Generalization Error )
Let $\hat{f}$ be the minimizer as defined in \eqref{eq:f-hat} over the clean labels and let $\hat{f}_q$ (defined in \eqref{eq:f-hat-q}) be the minimizer over the noisy labels obtained from inference in Algorithm \ref{alg:finite}. Suppose Assumptions \ref{asm:noise-good},\ref{asm:normalized},\ref{asm:strong-cvx} hold. Then for 
$\epsilon_2 = k^{5/2}\cdot \tilde{\cO}(\epsilon(d^+) + \epsilon(d^-)) \cdot\Big(1 + \frac{\kappa(\bP)}{\sigma_{\min}(\bP)}\Big) $
    and $ c_1 = 1+\frac{\sqrt{k}}{\sigma_{\min}(\bP)}$,
     with high probability, 
\begin{equation}
    R(\hat{f}_q) \le R(f^*) + \cO(n^{-\frac{1}{4}}) + \tilde{\cO} \Big( \frac{c_1}{\beta}n^{-\frac{1}{2}}  \Big) + \tilde{\cO} \Big(\frac{3\epsilon_2}{\beta}n^{-\frac{1}{2}} \Big). 
\end{equation}
\label{thm:gen_err_true_noise}
\end{restatable}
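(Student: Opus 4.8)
The plan is to decompose the excess risk $R(\hat f_q) - R(f^*)$ into three contributions and bound each in turn. Write
\[
R(\hat f_q) - R(f^*) = \underbrace{\big(R(\hat f) - R(f^*)\big)}_{(\mathrm{I})} + \underbrace{\big(R(\hat f_p) - R(\hat f)\big)}_{(\mathrm{II})} + \underbrace{\big(R(\hat f_q) - R(\hat f_p)\big)}_{(\mathrm{III})},
\]
where $\hat f$ is the clean-label estimator of \eqref{eq:f-hat}, $\hat f_p$ is the estimator built from the perturbed distance $\tilde d_p$ using the \emph{true} noise matrix $\bP$, and $\hat f_q$ is the one actually used, built from $\tilde d_q$ with the estimated $\bQ$. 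Term (I) is exactly the clean-label bound of \cite{Ciliberto2016, Rudi18}, namely $\cO(n^{-1/4})$, invoked as a black box. Term (II) is the price of label noise even when $\bP$ is known exactly: here the key observation is that $\tilde d_p$ is an unbiased estimator of $d^2_\cY$ (stated just after \eqref{eq:f-hat-q}), so $\E[\tilde F_p(x,y)] = F(x,y)$ for the clean $F$; the deviation is controlled by a concentration argument for the weighted sum $\tilde F_p(x,y) = \tfrac1n\sum_i \alpha_i(x)\tilde d_p(y,\tilde z_i)$, using $|\alpha(x)|\le 1$ (Assumption~\ref{asm:normalized}) and the fact that the entries of $\bP^{-1}$ are controlled by $\sigma_{\min}(\bP)$ and $\sqrt{k}$ (Assumption~\ref{asm:noise-good}). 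This yields a uniform bound $\sup_x |\tilde F_p(x,y) - F(x,y)| = \tilde\cO(c_1 n^{-1/2})$ with $c_1 = 1 + \sqrt{k}/\sigma_{\min}(\bP)$. Term (III) is the price of estimating $\bP$ by $\bQ$: since $\tilde d_q - \tilde d_p = \sum_i\big((\bQ^{-1})_{ji} - (\bP^{-1})_{ji}\big) d^2_\cY(T, y_i)$, the perturbation is governed by $\|\bQ^{-1} - \bP^{-1}\|$, which by standard matrix-perturbation bounds is $\lesssim \kappa(\bP)\sigma_{\min}(\bP)^{-1}\|\bQ - \bP\|$; in turn $\|\bQ-\bP\|$ is a Lipschitz function of $\hat{\bth} - \bth$ through \eqref{eq:dist-P} (summing $k^{O(1)}$ terms each of which is a smooth function of the parameters), so it is bounded by $k^{O(1)}\,\tilde\cO(\epsilon(d^+)+\epsilon(d^-))$ via Theorem~\ref{thm:meanfinite}. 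Collecting, this is the $\epsilon_2$ term.

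The mechanism that converts a bound on the \emph{function} $\tilde F_q$ into a bound on the \emph{risk} $R(\hat f_q)$ is Assumption~\ref{asm:strong-cvx} (proxy strong convexity). The argument is the standard one: if $|\tilde F_q(x,y) - F(x,y)| \le \eta$ uniformly in $y$ for each $x$, then since $\hat f_q(x)$ minimizes $\tilde F_q(x,\cdot)$ and $\hat f(x)$ minimizes $F(x,\cdot)$,
\[
\beta\, d^2_\cY\big(\hat f(x), \hat f_q(x)\big) \le F\big(x,\hat f_q(x)\big) - F\big(x,\hat f(x)\big) \le \big(\tilde F_q(x,\hat f_q(x)) + \eta\big) - \big(\tilde F_q(x,\hat f(x)) - \eta\big) \le 2\eta,
\]
so $d^2_\cY(\hat f(x),\hat f_q(x)) \le 2\eta/\beta$. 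Feeding this into the "comparison inequality" of \cite{Ciliberto2016} (which relates $R(g) - R(f^*)$ to $\E_x[d_\cY(g(x), \hat f(x))]$ plus the clean bound) converts the $\eta = \tilde\cO((c_1 + \epsilon_2)n^{-1/2})$ function-level error into the risk-level terms $\tilde\cO(c_1 \beta^{-1} n^{-1/2}) + \tilde\cO(\epsilon_2\beta^{-1} n^{-1/2})$ appearing in the statement, while the clean term $\cO(n^{-1/4})$ passes through unchanged. (The factor $3$ in front of $\epsilon_2$ is just bookkeeping from the triangle-inequality steps.)

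I expect the main obstacle to be term (II)/(III) concentration made uniform over $y\in\cY_s$ and the handling of $\bQ^{-1}$: one must be careful that $\bQ$ is actually invertible with the right singular-value control (this is where $\kappa(\bP)$ "sufficiently small" in Assumption~\ref{asm:noise-good} is used, via a Weyl/perturbation argument ensuring $\sigma_{\min}(\bQ) \ge \sigma_{\min}(\bP) - \|\bQ-\bP\| > 0$), and that the propagation of the parameter error $\hat{\bth}-\bth$ through the partition-function-laden expression \eqref{eq:dist-P} for $\bP_{ij}$ is genuinely Lipschitz with a constant that is only polynomial in $k$ — this is the source of the $k^{5/2}$ factor in $\epsilon_2$ and needs the boundedness afforded by Assumptions~\ref{asm:support} and~\ref{ass:angle-bounded}. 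The rest is assembling the three pieces with a union bound over the high-probability events from Theorem~\ref{thm:meanfinite} and the clean-label bound.
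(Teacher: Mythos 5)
Your proposal follows essentially the same route as the paper: show $\tilde{d}_p$ is unbiased; concentrate $\tilde{F}_p$ around $F$ uniformly over $y \in \cY_s$ (giving $c_1$); use matrix-perturbation bounds on $\bQ^{-1} - \bP^{-1}$ plus a Lipschitz argument in $\bth$ through \eqref{eq:dist-P} to control $\tilde{F}_q - \tilde{F}_p$; invoke Assumption~\ref{asm:strong-cvx} to turn the uniform function-level deviation $\eta$ into a pointwise bound $d^2_{\cY}(\hat{f}_q(x), \hat{f}(x)) \lesssim \eta/\beta$; and finally glue this onto the clean-label $\cO(n^{-1/4})$ bound. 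The one imprecision is the last step: there is no Ciliberto ``comparison inequality'' relating $R(g) - R(f^*)$ to $\E_x[d_\cY(g(x),\hat{f}(x))]$ — the paper instead expands $d^2_\cY(\hat{f}_q(x), y)$ directly via the squared triangle inequality $d^2(a,c) \le d^2(a,b) + d^2(b,c) + 2d(a,b)d(b,c)$ and Cauchy--Schwarz, which accomplishes exactly the conversion you want (the cross term contributes the extra $\tilde{\cO}(n^{-1/4})$); with that substitution your argument matches the paper's.
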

\vspace{-10pt}
\paragraph{Implications and Tradeoffs:}
We interpret each term in the bound. The first term is present even with access to the clean labels and hence unavoidable. The second term is the additional error we incur if we learn with the knowledge of the true noise distribution. 
The third term is due to the use of the estimated noise model. It is dominated by the noise rate recovery result in Theorem~\ref{thm:meanfinite}. If the third term goes to 0 (perfect recovery) then we obtain the rate $\cO(n^{-1/4})$, the same as in the case of access to clean labels. The third term is introduced by our noise rate recovery algorithm and has two terms: one dominated by $\tilde{\cO}(n^{-1/2})$ and the other on $\tilde{\cO}(\sqrt{k}/d)$ (see discussion of Theorem \ref{thm:meanfinite}). Thus we only pay an extra additive factor $\cO(\sqrt{k}/d)$ in the excess risk when using pseudolabels. 

\section{Manifold-Valued Label Spaces: Noise Recovery and Generalization}
\label{sec:continuous}
We introduce a simple recovery method for weak supervision in constant-curvature Riemannian manifolds. First we briefly introduce some background notation on these spaces, then provide our estimator and consistency result, then the downstream generalization result. Finally, we discuss extensions to symmetric Riemannian manifolds, an even more general class of spaces.
\paragraph{Background on Riemannian manifolds} The following is necessarily a very abridged background; more detail can be found in \cite{LeeManifolds, TuManifolds}. A smooth manifold $M$ is a space where each point is located in a neighborhood diffeomorphic to $\R^d$. Attached to each point $p \in \M$ is a \emph{tangent space} $T_pM$; each such tangent space is a $d$-dimensional vector space enabling the use of calculus.

A Riemannian manifold equips a smooth manifold with a Riemannian metric: a smoothly-varying inner product $\langle \cdot, \cdot \rangle_p$ at each point $p$. This tool allows us to compute angles, lengths, and ultimately, distances $d_{\M}(p,q)$ between points on the manifold as shortest-path distances. These shortest paths are called geodesics and can be parameterized as curves $\gamma(t)$, where $\gamma(0) = p$, or by tangent vectors $v \in T_pM$. The exponential map operation $\exp: T_p\M \rightarrow \M$ takes tangent vectors to manifold points. It enables switching between these tangent vectors: $\exp_p(v) = q$ implies that $d_{\M}(p,q) = \|v\|$.


\paragraph{Invariant} Our first contribution is a simple invariant that enables us to recover the error parameters. Note that  we cannot rely on the finite metric-space technique, since the manifolds we consider have an infinite number of points. Nor do we need an embedding---we have a continuous representation as-is. Instead, we propose a simple idea based on the law of cosines. Essentially, on average, the geodesic triangle formed by the latent variable $y \in \M$ and two observed LFs $\lambda_a, \lambda_b$, is a right triangle. This means it can be characterized by the (Riemannian) version of the Pythagorean theorem:

\begin{restatable}[]{lemma}{invariantlemma}
\label{lem:invariant}
For $\cY = \M$, a hyperbolic manifold, $y \sim P$ for some distribution $P$ on $\M$ and labeling functions $\lambda_a, \lambda_b$ drawn from \eqref{eq:lbl-model-orig-a},
$\mathbb{E}\cosh d_{\cY}(\lambda_a, \lambda_b) = \mathbb{E}\cosh d_{\cY}(\lambda_b, y)\mathbb{E}\cosh d_{\cY}(\lambda_b, y)$,
while for $\cY = \M$ a spherical manifold,
$\mathbb{E}\cos d_{\cY}(\lambda_a, \lambda_b) = \mathbb{E}\cos d_{\cY}(\lambda_b, y)\mathbb{E}\cos d_{\cY}(\lambda_b, y). $
\end{restatable}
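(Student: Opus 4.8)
The plan is to exploit the conditional-independence structure of the noise model together with the constant-curvature law of cosines applied in expectation. First I would fix the latent label $y \in \M$ and consider the geodesic triangle with vertices $y$, $\lambda_a$, $\lambda_b$. On a manifold of constant curvature the (hyperbolic or spherical) law of cosines relates the three side lengths and the angle at $y$; e.g. in the hyperbolic case, $\cosh d_\cY(\lambda_a,\lambda_b) = \cosh d_\cY(\lambda_a,y)\cosh d_\cY(\lambda_b,y) - \sinh d_\cY(\lambda_a,y)\sinh d_\cY(\lambda_b,y)\cos\vartheta$, where $\vartheta$ is the angle at $y$ between the two geodesics $y\to\lambda_a$ and $y\to\lambda_b$. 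The spherical case is the same identity with $\cosh\to\cos$, $\sinh\to\sin$ and a sign flip on the last term. So the statement reduces to showing that the expectation of the cross term vanishes and that the product term factors.

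The key observation is that, conditioned on $y$, the labeling functions $\lambda_a$ and $\lambda_b$ are independent (this is the multi-view / conditional-independence assumption underlying the noise model \eqref{eq:lbl-model-orig-a}, at least for an uncorrelated pair $(a,b)\notin E$). Moreover, the model \eqref{eq:lbl-model-orig-a} depends on $\lambda_a$ only through $d_\cY(\lambda_a,y)$, so conditioned on $y$ the distribution of $\lambda_a$ is isotropic in the tangent space $T_y\M$ — the direction of the initial tangent vector of the geodesic $y\to\lambda_a$ is uniform on the unit sphere of $T_y\M$ and independent of the radial part $d_\cY(\lambda_a,y)$. Hence the angle $\vartheta$ at $y$, conditioned on $y$, has $\mathbb{E}[\cos\vartheta \mid y] = 0$ by symmetry (for $d = \dim\M \ge 2$; the $d=1$ case is degenerate/trivial), and $\vartheta$ is independent of both radial lengths. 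Taking conditional expectation given $y$ of the law of cosines identity, the cross term drops out because $\mathbb{E}[\cos\vartheta\mid y]=0$ and factorization of independent quantities applies, leaving $\mathbb{E}[\cosh d_\cY(\lambda_a,\lambda_b)\mid y] = \mathbb{E}[\cosh d_\cY(\lambda_a,y)\mid y]\,\mathbb{E}[\cosh d_\cY(\lambda_b,y)\mid y]$. Then I would take expectation over $y\sim P$ — but here I need to be careful, since $\mathbb{E}_y$ of a product is not the product of $\mathbb{E}_y$'s in general; the cleanest route is to note that the statement as written must be read with the conditional structure in mind, or to invoke the additional modeling assumption that makes the per-$y$ conditional means $\mathbb{E}[\cosh d_\cY(\lambda_a,y)\mid y]$ constant in $y$ (which holds when the canonical parameter $\theta_a$ alone determines the radial law, independent of the basepoint $y$, as is the case for these homogeneous model spaces).

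Concretely the steps are: (1) state the constant-curvature law of cosines for the geodesic triangle $(y,\lambda_a,\lambda_b)$; (2) show that, conditioned on $y$, the noise model \eqref{eq:lbl-model-orig-a} restricted to a single uncorrelated LF induces a rotationally invariant law on $\lambda_a$ around $y$, so the angle at $y$ is uniform and independent of the side lengths; (3) conclude $\mathbb{E}[\cos\vartheta\mid y]=0$ and hence the cross term vanishes in conditional expectation; (4) use conditional independence of $\lambda_a,\lambda_b$ given $y$ to factor the product term; (5) use homogeneity of the model space to remove the dependence on $y$ and take the outer expectation over $y\sim P$, yielding the claimed identity. The spherical case is identical with the appropriate trig substitution.

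The main obstacle I anticipate is step (5): handling the outer expectation over $y$ cleanly. The cross-term cancellation and the product factorization both happen naturally \emph{conditioned on $y$}, but the statement quantifies over $y\sim P$ without an explicit conditioning, so either (a) the intended reading is that $P$ is such that the conditional radial moments are basepoint-independent (true for the homogeneous model spaces — the isometry group acts transitively, so $\mathbb{E}[\cosh d_\cY(\lambda_a, y)\mid y]$ does not depend on $y$), in which case $\mathbb{E}_y$ of each factor equals the constant and the product passes through; or (b) there is an implicit assumption I should surface. I would make this homogeneity/transitivity point explicit, since it is exactly what lets the Pythagorean-in-expectation identity hold unconditionally. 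A secondary, more technical point is justifying the isotropy in step (2) rigorously: it follows because $d_\cY(\lambda_a, y)$ is the only functional of $\lambda_a$ appearing in the single-LF conditional density and the Riemannian volume measure on $\M$ decomposes in geodesic polar coordinates around $y$ as (radial density)$\times$(uniform on the tangent sphere), a decomposition valid on all of $\M$ up to the cut locus, which for the model spaces is measure-zero (hyperbolic) or handled by the known global formulas (sphere).
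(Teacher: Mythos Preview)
Your proposal is correct and follows essentially the same route as the paper: write the hyperbolic (resp.\ spherical) law of cosines for the geodesic triangle $(y,\lambda_a,\lambda_b)$, take conditional expectation given $y$ so that the cross term vanishes by conditional independence/isotropy, then take the outer expectation over $y$. Your concern about step~(5) is well placed --- the paper's proof simply says ``Taking expectation again with respect to $y$ gives the result'' without further comment, so your homogeneity argument (that $\mathbb{E}[\cosh d_\cY(\lambda_a,y)\mid y]$ is constant in $y$ on a model space) is actually more careful than what the paper provides.
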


These invariants enable us to easily learn by forming a triplet system. Suppose we construct the equation in Lemma~\ref{lem:invariant} for three pairs of labeling functions. The resulting system can be solved to express $\mathbb{E}[\cosh(d_{\cY}(\lf_a,y))]$ in terms of $\mathbb{E}\cosh(d_{\cY}(\lf_a,\lf_b)), \mathbb{E}\cosh(d_{\cY}(\lf_a,\lf_c)),\mathbb{E}\cosh(d_{\cY}(\lf_b,\lf_c))$.
Specifically, 
\[{\mathbb{E}} \cosh(d_{\cY}(\lf_a, y)) = \sqrt{\frac{{\mathbb{E}}\cosh d_{\cY}(\lf_a,\lf_b) {\mathbb{E}} \cosh d_{\cY}(\lf_a, \lf_c)}{({\mathbb{E}}\cosh(d_{\cY}(\lf_b, \lf_c))^2}}. \]
Note that we can estimate $\hat{\mathbb{E}}$ via the empirical versions of terms on the right , as these are based on observable quantities. This is a generalization of the binary case in \cite{fu2020fast} and the Gaussian (Euclidean) case in \cite{Shin22} to hyperbolic manifolds. A similar estimator can be obtained for spherical manifolds by replacing $\cosh$ with $\cos$.

Using this tool, we can obtain a consistent estimator for $\theta_a$ for each of $a = 1, \ldots, m$. Let $C_0$ satisfy the following inequality $\mathbb{E}|\hat{\mathbb{E}}\cosh(d_{\cY}(\lf_a, \lf_b)) - \mathbb{E}\cosh(d_{\cY}(\lf_a, \lf_b))| \geq C_0 \mathbb{E}|\hat{\mathbb{E}}d_{\cY}^2(\lf_a, \lf_b)) - \mathbb{E}d_{\cY}^2(\lf_a, \lf_b)|$; that is, $C_0$ reflects the preservation of concentration when moving from distribution $\cosh(d)$ to $d^2$. Then,

\begin{restatable}[]{theorem}{contPE}
\label{thm:contPE}
Let $\M$ be a hyperbolic manifold. Fix $0 < \delta < 1$ and let $\Delta(\delta) = \min_{\rho} \text{Pr}\Big(\forall i, d_{\cY}(\lf_{a,i}, \lf_{b,i)}) \leq \rho \Big) \geq 1-\delta$. Then, there exists a constant $C_1$ so that with probability at least $1- \delta$,
$\mathbb{E}|\hat{\mathbb{E}}d_{\cY}^2(\lf_a, y)) - \mathbb{E}d_{\cY}^2(\lf_a, y)|\leq  {C_1 \cosh(\Delta(\delta))^{3/2}}/{C_0\sqrt{2n}}.$
\end{restatable}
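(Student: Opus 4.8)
The plan is to bound the estimation error coordinatewise and then push it through the closed‑form estimator obtained from Lemma~\ref{lem:invariant},
\[
\hat{\mathbb{E}}\cosh d_\cY(\lambda_a,y)\;=\;\frac{\sqrt{\hat{\mathbb{E}}\cosh d_\cY(\lambda_a,\lambda_b)\,\hat{\mathbb{E}}\cosh d_\cY(\lambda_a,\lambda_c)}}{\hat{\mathbb{E}}\cosh d_\cY(\lambda_b,\lambda_c)},
\]
and finally convert the resulting $\cosh$–error into a $d^2$–error via the constant $C_0$. Conceptually this is a delta‑method/plug‑in argument for a smooth function of three sample means, but made finite‑sample so that the dependence on $\Delta(\delta)$ is explicit. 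First I would fix the event $\mathcal{E}$ on which $d_\cY(\lambda_{a,i},\lambda_{b,i})$, $d_\cY(\lambda_{a,i},\lambda_{c,i})$ and $d_\cY(\lambda_{b,i},\lambda_{c,i})$ are all at most $\Delta(\delta)$ for every $i\in[n]$; by the definition of $\Delta(\delta)$ (and a union bound over the three pairs of the triplet, which can be folded into the definition) this event has probability at least $1-\delta$. Working conditionally on $\mathcal{E}$, each summand $\cosh d_\cY(\cdot,\cdot)$ lies in $[1,\cosh(\Delta(\delta))]$, so each of the three empirical averages above is a mean of $n$ i.i.d.\ bounded random variables.

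The core is then a three‑step error chain. First, a standard deviation bound for an average of i.i.d.\ variables supported on an interval of length at most $\cosh(\Delta(\delta))$ gives, for each pair, $\mathbb{E}|\hat{\mathbb{E}}\cosh d_\cY(\cdot,\cdot)-\mathbb{E}\cosh d_\cY(\cdot,\cdot)|$ of order $\cosh(\Delta(\delta))/\sqrt{2n}$ (the $\sqrt2$ coming from the deviation inequality). Second, I would propagate these three errors through the map $g(u,v,w)=\sqrt{uv}/w$. On the box $[1,\cosh(\Delta(\delta))]^3$ the denominator is bounded away from $0$ and $g$ is smooth with $|\partial_u g|,|\partial_v g|=\tfrac12\sqrt{v/u}\,/w\le\tfrac12\sqrt{\cosh(\Delta(\delta))}$ and $|\partial_w g|=\sqrt{uv}/w^2\le\cosh(\Delta(\delta))$; combining the Lipschitz constant with the per‑coordinate error and accounting for which coordinate dominates yields $|\hat{\mathbb{E}}\cosh d_\cY(\lambda_a,y)-\mathbb{E}\cosh d_\cY(\lambda_a,y)|=\cO\big(\cosh(\Delta(\delta))^{3/2}/\sqrt{n}\big)$ — the $3/2$ power reflecting one full power of the range of the averaged variables together with the half power saved by the square root in $g$. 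Third, the defining inequality of $C_0$, which transfers concentration from $\cosh d$ to $d^2$ for the pair $(\lambda_a,y)$, gives $\mathbb{E}|\hat{\mathbb{E}}d^2_\cY(\lambda_a,y)-\mathbb{E}d^2_\cY(\lambda_a,y)|\le \tfrac{1}{C_0}\cdot\cO\big(\cosh(\Delta(\delta))^{3/2}/\sqrt n\big)$, which is the claimed bound once the absolute constants are collected into $C_1$.

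The step I expect to be the main obstacle is the nonlinear propagation: getting the $\Delta(\delta)$ dependence as tight as $\cosh(\Delta(\delta))^{3/2}$ requires care about which coordinate of $g$ carries the largest sensitivity, about the fact that $\hat{\mathbb{E}}\cosh d_\cY(\lambda_b,\lambda_c)$ — an average of variables $\ge 1$ — stays bounded below so that $1/w$ and $1/w^2$ are controlled, and about conditioning cleanly on $\mathcal{E}$ so that both $\hat{\mathbb{E}}$ and the reference expectations lie in the box $[1,\cosh(\Delta(\delta))]^3$ (off $\mathcal{E}$ the averaged variables are unbounded and both the Lipschitz and the deviation arguments break). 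A secondary point is that $C_0$ as stated governs pairs $(\lambda_a,\lambda_b)$, so I would note explicitly that the same concentration‑preservation property is taken to hold for $(\lambda_a,y)$, since that is the pair used in the final conversion; the spherical case is handled identically with $\cosh$ replaced by $\cos$ and the interval $[1,\cosh(\Delta(\delta))]$ replaced by $[\cos(\Delta(\delta)),1]$, which I would mention only in passing.
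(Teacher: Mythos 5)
Your route is essentially the paper's: condition on the diameter event $\cE$ so that every $\cosh d_{\cY}(\cdot,\cdot)$ lands in $[1,\cosh\Delta(\delta)]$, concentrate the three empirical pair means of $\cosh$-distances, propagate the error through the triplet estimator, and convert the $\cosh$-error to a $d^2$-error with $C_0$. The only stylistic difference is in the propagation step: the paper runs a telescoping add-and-subtract chain term by term, whereas you bound the gradient of $g(u,v,w)=\sqrt{uv}/w$ over the box $[1,\cosh\Delta]^3$; these are substantively the same argument, and your remark that $C_0$ is stated for $(\lf_a,\lf_b)$ pairs but is needed for $(\lf_a,y)$ is a real (if implicit) step in the paper too.

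One thing you should reconcile is the exponent accounting. You compute $|\partial_w g|=\sqrt{uv}/w^2\le\cosh\Delta$ and state a per-pair deviation of order $\cosh\Delta/\sqrt{2n}$; the dominant coordinate is then $w$, and the product of these two bounds is $\cosh(\Delta)^2/\sqrt{n}$, not the stated $\cosh(\Delta)^{3/2}/\sqrt{n}$. The paper reaches the $3/2$ power because its McDiarmid step is written as $2\exp(-2nt^2/\cosh\Delta)$, which after integration gives a per-pair deviation of order $\sqrt{\cosh\Delta}/\sqrt{n}$; combined with the $\sqrt{\nu(a,b)\nu(a,c)}\le\cosh\Delta$ prefactor on the $\hat\nu(b,c)\to\nu(b,c)$ swap, this produces $\cosh(\Delta)^{3/2}/\sqrt{n}$. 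So to reproduce the theorem as stated you need to carry the same per-pair rate the paper uses rather than the $\cosh\Delta/\sqrt{n}$ rate you quote; as written, your Lipschitz accounting only certifies the weaker $\cosh(\Delta)^2$ dependence.
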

As we hoped, our estimator is consistent. Note that we pay a price for a tighter bound: $\Delta(\delta)$ is large for smaller probability $\delta$. It is possible to estimate the size of $\Delta(\delta)$ (more generally, it is a function of the curvature). We provide more details in the Appendix.

Next, we adapt the downstream model predictor \eqref{eq:f-hat} in the following way. Let $\hat{\mu}_a^2 = \hat{\mathbb{E}}[d_{\cY}^2(\lf_a, y)]$. Let $\beta = [\beta_1, \ldots, \beta_m]^T$ be such that $\sum_a \beta_a = 1$ and $\beta$ minimizes $\sum_{a} \beta_a^2 \hat{\mu}_a^2$. Then, we set
\begin{align*}
    \tilde{f}(x) = \argmin_{y \in \cY} \frac{1}{n}\sum_{i=1}^n \alpha_i(x) \sum_{a=1}^m \beta_a^2 d_{\cY}^2(y,\lf_{a,i}).   
\end{align*}

We simply replace each of the true labels with a combination of the labeling functions. With this, we can state our final result. First, we introduce our assumptions.

Let $q = \argmin_{z \in \cY} \mathbb{E} [\alpha(x)(y) d_{\cY}^2(z, y)]$, where the expectation is taken over the population level distribution and $\alpha(x)(y)$ denotes the kernel at $y$.

\begin{restatable}{assumption}{}{(Bounded Hugging Function c.f. \cite{Stromme})}
Let $q$ be defined as above. For all $a,b \in \M$, the hugging function at $q$ is given by
$k_q^b(a) = 1 - (\|\log_q(a) - \log_q(b)\|^2 - d_{\cY}^2(a,b))/d_{\cY}^2(q,b)$. We assume that $k_q^b(a)$ is lower bounded by $k_{\min}$.
\label{ass:bhf}
\end{restatable}
\begin{restatable}{assumption}{}{(Kernel Symmetry)} We assume that for all $x$ and all $v \in T_q \M$, $\alpha(x)(\exp_q(v)) = \alpha(x)(\exp_q(-v))$.
\label{ass:sym}
\end{restatable}
The first condition provides control on how geodesic triangles behave; it relates to the curvature. We provide more details on this in the Appendix. The second assumption restricts us to kernels symmetric about the minimizers of the objective $F$. Finally, suppose we draw $(x,y)$ and $(x',y')$ independently from $P_{XY}$. Set $\sigma^2_o = \alpha(x)(y) \mathbb{E}d_{\cY}^2(y, y')$. 

\begin{restatable}[]{theorem}{contGen}
\label{thm:contGen}
Let $\M$ be a complete manifold and suppose the assumptions above hold. Then, there exist constants $C_3$, $C_4$
\[\mathbb{E}[d_{\cY}^2(\hat{f}(x), \tilde{f}(x))] \leq \frac{C_3\sigma_o^2}{n k_{\min}} + \frac{C_4\sum_{a=1}^m \beta^2_a \hat{\mu}_a^2}{mn k_{\min}}.\]
\end{restatable}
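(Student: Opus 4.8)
The plan is to bound the quantity $\mathbb{E}[d_{\cY}^2(\hat{f}(x), \tilde{f}(x))]$ by comparing the two objectives $F(x,\cdot)$ (defined with the clean labels $y_i$) and $\tilde F(x,\cdot) := \frac{1}{n}\sum_i \alpha_i(x)\sum_a \beta_a^2 d_\cY^2(\cdot, \lf_{a,i})$, and then transferring the gap between their minimizers into a distance bound using the curvature-controlled geometry at the population minimizer $q$. First I would observe that for each fixed $x$ and each $i$, the inner term $\sum_a \beta_a^2 d_\cY^2(y,\lf_{a,i})$ is, in expectation over the LF draws conditioned on $y_i$, equal to $d_\cY^2(y, y_i)$ up to the residual $\sum_a \beta_a^2\,\mathbb{E}[d_\cY^2(\cdot,\lf_{a,i})\mid y_i] - d_\cY^2(\cdot,y_i)$; here is where Lemma~\ref{lem:invariant}-style right-triangle behavior (on average the geodesic triangle $y_i,\lf_a, \cdot$ closes) and the choice of $\beta$ minimizing $\sum_a \beta_a^2\hat\mu_a^2$ enter, so the residual is controlled by $\sum_a \beta_a^2 \hat\mu_a^2$. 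This is the source of the second term in the bound.

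Next I would set up the standard ``comparison inequality'' argument used for structured prediction estimators (as in \cite{Ciliberto2016, Rudi18, Stromme}): since $\hat f(x)$ minimizes $F(x,\cdot)$ and $\tilde f(x)$ minimizes $\tilde F(x,\cdot)$, I would write a chain of the form $F(x,\tilde f(x)) - F(x,\hat f(x)) \le [F(x,\tilde f(x)) - \tilde F(x,\tilde f(x))] + [\tilde F(x, \hat f(x)) - F(x,\hat f(x))]$, i.e.\ bound the suboptimality gap of $\tilde f$ for the clean objective by twice a uniform-over-$y$ deviation $\sup_y |F(x,y) - \tilde F(x,y)|$. Then the Bounded Hugging Function assumption (Assumption~\ref{ass:bhf}) plays the role of the proxy-strong-convexity used in the finite case: it guarantees that near $q$, $F(x,\cdot) - F(x,\hat f(x)) \gtrsim k_{\min}\, d_\cY^2(\cdot, \hat f(x))$, so that a small objective gap forces $d_\cY^2(\hat f(x),\tilde f(x))$ to be small, with the $1/k_{\min}$ factor appearing. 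The Kernel Symmetry assumption (Assumption~\ref{ass:sym}) is what lets us place $\hat f(x)$ (and $\tilde f(x)$) in the regime where the hugging-function inequality is two-sided/applies — it symmetrizes the objective about $q$ so the linearization argument of \cite{Stromme} goes through on the manifold rather than only in $\R^d$.

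The remaining work is to control the deviation term in expectation. Splitting $F - \tilde F$ into (a) the sampling/variance part — $\frac1n\sum_i \alpha_i(x)$ times a mean-zero fluctuation of $\sum_a\beta_a^2 d_\cY^2(y,\lf_{a,i})$ around $d_\cY^2(y,y_i)$ — and (b) the bias part from the non-right-triangle residual, I would bound (a) by a variance computation giving the $\sigma_o^2/n$ scaling (using $\sigma_o^2 = \alpha(x)(y)\,\mathbb{E} d_\cY^2(y,y')$ and $|\alpha(x)|\le 1$-type normalization, plus the $\sum_a\beta_a = 1$, $\beta_a$-optimality to keep the LF-noise contribution at scale $\sum_a\beta_a^2\hat\mu_a^2$), and (b) by the $\sum_a\beta_a^2\hat\mu_a^2$ term, then divide by $k_{\min}$ and also pick up the $1/m$ from how the $\beta$'s aggregate. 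Taking expectations over $(x,y)$ and over the LF randomness, and collecting constants into $C_3, C_4$, yields the stated bound.

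I expect the main obstacle to be step two — making the hugging-function lower bound interact correctly with the manifold geometry so that a bound on the \emph{objective gap} $F(x,\tilde f(x)) - F(x,\hat f(x))$ yields a bound on the \emph{distance} $d_\cY^2(\hat f(x),\tilde f(x))$ uniformly in $x$. On a general complete manifold this requires care: one must ensure the minimizers stay in a region where $k_q^b(\cdot)\ge k_{\min}$ is meaningful (this is exactly why Assumptions~\ref{ass:bhf} and~\ref{ass:sym} are imposed), and one must handle the fact that $\hat f(x)$ is itself random and need not coincide with the population $q$. Adapting the linearization/variance argument of \cite{Stromme} from the i.i.d.-clean-label setting to our pseudolabel-weighted objective, while keeping the dependence on $k_{\min}$, $\sigma_o^2$, and $\sum_a\beta_a^2\hat\mu_a^2$ explicit, is the technical crux; the variance bookkeeping in step three is comparatively routine.
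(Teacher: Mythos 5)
Your proposal misses the key structural idea of the paper's proof and takes a route that, as written, has a real gap.

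The paper does \emph{not} argue through an objective-gap comparison $F(x,\tilde f(x)) - F(x,\hat f(x)) \le 2\sup_y |F(x,y) - \tilde F(x,y)|$ plus a proxy-strong-convexity conversion. That pattern is what the paper uses in the \emph{finite} case (under Assumption~\ref{asm:strong-cvx}), but on manifolds the paper instead bounds $\mathbb{E}\, d^2_{\cY}(q, \hat f(x))$ and $\mathbb{E}\, d^2_{\cY}(q, \tilde f(x))$ \emph{separately} and combines via the triangle inequality $d^2_\cY(\hat f, \tilde f) \le 2 d^2_\cY(q, \hat f) + 2 d^2_\cY(q, \tilde f)$. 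The step that makes this work, and which your proposal omits entirely, is Lemma~\ref{lem:samecenter}: a symmetry argument showing that the \emph{same} population point $q$ minimizes both the clean population objective $\mathbb{E}[\alpha(x)(y)\, d^2_\cY(z,y)]$ and the LF-aggregated population objective $\mathbb{E}[\alpha(x)(y)\sum_a \beta_a^2 d^2_\cY(z,\lf_a)]$. This is where Assumption~\ref{ass:sym} (kernel symmetry) is actually consumed — not, as you state, to make the hugging-function inequality two-sided, but to let a $v \mapsto -v$ isometry about $q$ show that pushing $q$ off center cannot decrease either population objective. Once $q$ is identified as the common center, the bound on each of $\mathbb{E}\, d^2_{\cY}(q, \hat f(x))$, $\mathbb{E}\, d^2_{\cY}(q, \tilde f(x))$ follows from Str\"omme's hugging-function/Cauchy--Schwarz argument applied to the $n$ clean points and to the $mn$ LF points, respectively, producing the $\sigma_o^2/(nk_{\min})$ and $\sum_a \beta_a^2 \hat\mu_a^2/(mnk_{\min})$ terms directly as variance bounds — there is no separate ``bias from non-right-triangle residual'' term as you posit.

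Your proposed alternative is not just different bookkeeping; it leaves an unaddressed obstacle that you yourself flag. You would need to establish that the bounded hugging function yields proxy strong convexity of $F(x,\cdot)$ in the sense of Assumption~\ref{asm:strong-cvx}, with constant $\beta \asymp k_{\min}$, and you would need this to hold at the \emph{random} empirical minimizer $\hat f(x)$, not at $q$. Neither is established in the paper and neither is routine: the hugging function definition $k_q^b(a) = 1 - (\|\log_q(a) - \log_q(b)\|^2 - d_\cY^2(a,b))/d_\cY^2(q,b)$ controls squared distances relative to log-map images at the \emph{fixed} basepoint $q$, and converting it into a strong-convexity statement about $F$ at an arbitrary point requires additional geometric work. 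Moreover your claim that $\mathbb{E}[\sum_a \beta_a^2 d_\cY^2(y,\lf_{a,i}) \mid y_i] = d_\cY^2(y,y_i)$ up to a residual of size $\sum_a\beta_a^2\hat\mu_a^2$ is not what Lemma~\ref{lem:invariant} gives: the invariant is multiplicative in $\cosh d$, not additive in $d^2$, so the ``right triangle on average'' heuristic does not produce the clean additive bias decomposition you rely on. In short, without a version of Lemma~\ref{lem:samecenter}, your chain does not close.
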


\begin{figure}[h]
    \centering
    \includegraphics[width=.4\linewidth]{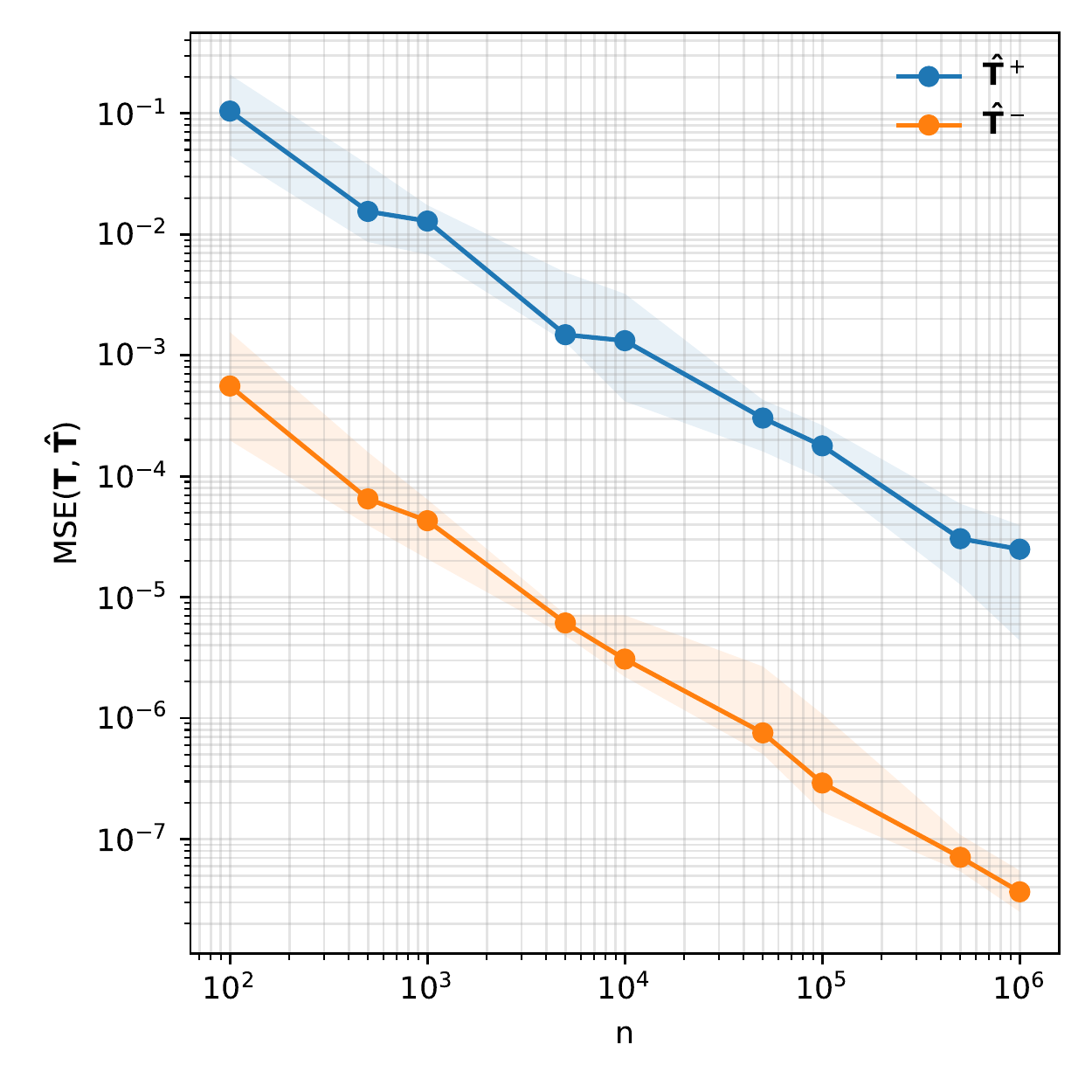}
    \includegraphics[width=.4\linewidth]{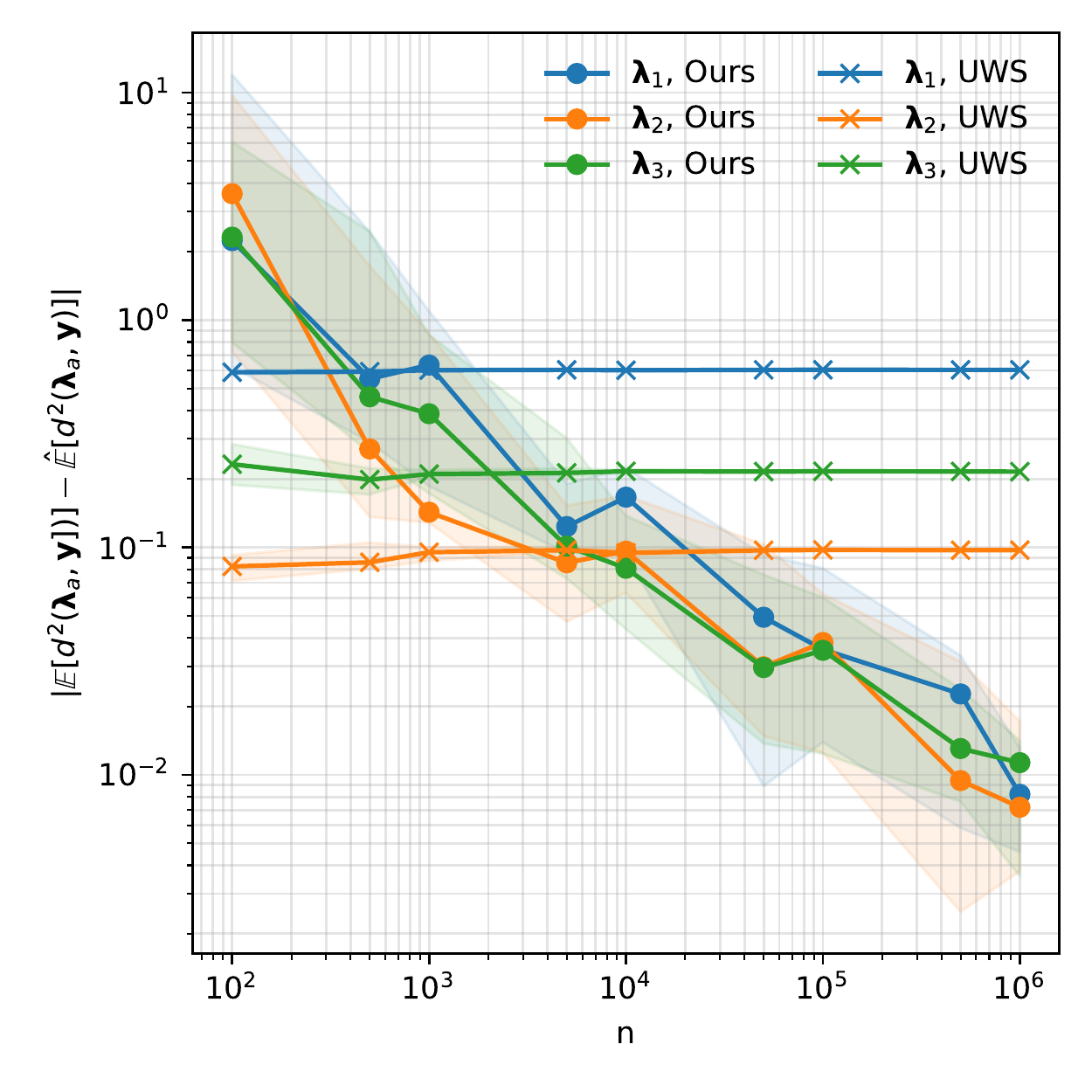}\\
    \includegraphics[width=0.8\linewidth]{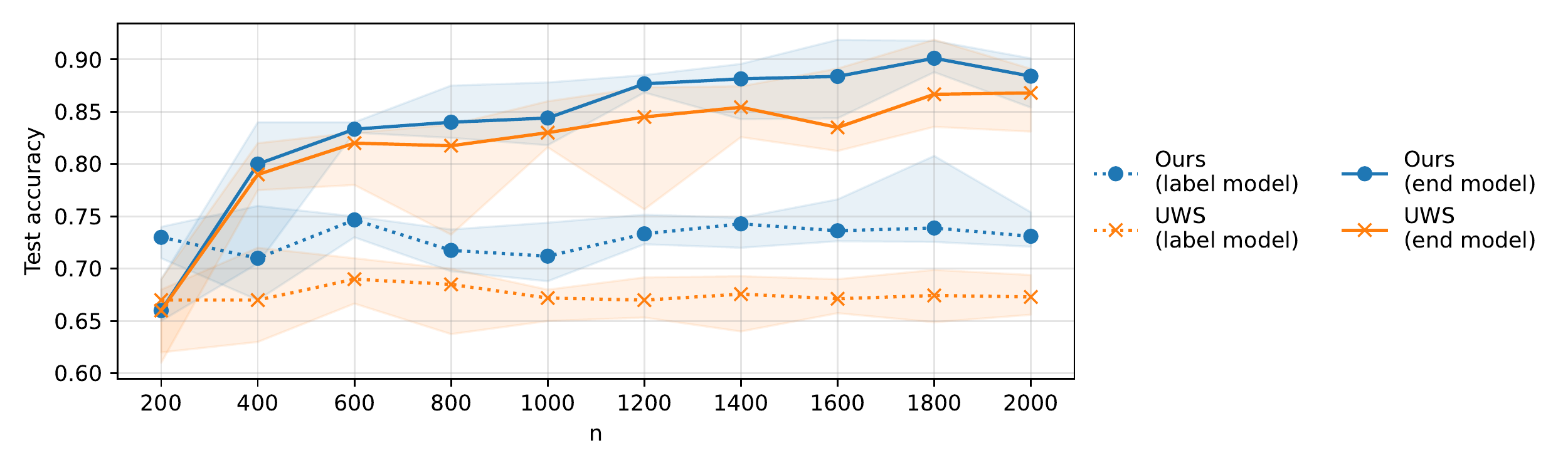}
    \caption{ {\bf Finite metric space case.} Parameter estimation improves with samples $n$ in learning to rank---showing nearly-consistent behavior. Our tensor decomposition estimator outperforms \cite{Shin22}. In particular, (top left) as the number of samples increases, our estimates of the positive and negative components of $\mathbf{T}$ improve. (Top right) the improvements in $\mathbf{T}$ recovery with more samples translates to significantly improved performance over \cite{Shin22}, which is close to constant across $n$. (Bottom) this improved parameter estimation further translates to improvements in label model accuracy (using only the noisy estimates for prediction, without training an end model) and end model generalization. For the top two plots, we use $\boldsymbol{\theta} = [6, 3, 8]$, and in the bottom plot, we use $\boldsymbol{\theta} = [0, 0, 1]$. In all plots, we report medians along with upper and lower quartiles across 10 trials. }
    \label{fig:experiments_ranking}
\end{figure}

Note that as both $m$ and $n$ grow, as long as our worst-quality LF has bounded variance, our estimator of the true predictor is consistent. Moreover, we also have favorable dependence on the noise rate. This is because the only error we incur is in computing sub-optimal $\beta$ coefficients. We comment on this suboptimality in the Appendix.

A simple corollary of Theorem~\ref{thm:contGen} provides the generalization guarantees we sought,
\begin{restatable}[]{corollary}{contGenCorr}
\label{corr:contGenCorr}
Let $\M$ be a complete manifold and suppose the assumptions above hold. Then, with high probability, $ R(\tilde{f}) \le R(f^*) + \cO(n^{-\frac{1}{4}}). $
\end{restatable}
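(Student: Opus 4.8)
\textbf{Proof proposal for Corollary~\ref{corr:contGenCorr}.}

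The plan is to combine the clean-label generalization bound of \cite{Ciliberto2016, Rudi18} with the comparison inequality of Theorem~\ref{thm:contGen}. First I would recall that for the estimator $\hat f$ defined in \eqref{eq:f-hat} using the \emph{true} labels, the cited work gives $R(\hat f) \le R(f^*) + \cO(n^{-1/4})$ with high probability. Our target predictor $\tilde f$ is not $\hat f$, but Theorem~\ref{thm:contGen} controls the expected squared geodesic distance between the two: $\mathbb{E}[d_\cY^2(\hat f(x), \tilde f(x))] \le C_3 \sigma_o^2 / (n k_{\min}) + C_4 \sum_a \beta_a^2 \hat\mu_a^2 / (m n k_{\min})$. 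The key step is therefore to pass from closeness of predictors (in $d_\cY^2$) to closeness of risks, i.e.\ to bound $R(\tilde f) - R(\hat f)$ in terms of $\mathbb{E}[d_\cY^2(\hat f(x), \tilde f(x))]$.

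For that step I would use the comparison-inequality machinery already implicit in the structured-prediction framework: the risk $R(f) = \int d_\cY^2(f(x),y)\, d\rho(x,y)$ is, pointwise in $x$, a function of $f(x)$ that is lower-bounded by the ``proxy strong convexity'' type estimate around its minimizer, so $|R(\tilde f) - R(\hat f)| \lesssim \sqrt{\mathbb{E}_x[d_\cY^2(\hat f(x),\tilde f(x))]}$ by an application of the triangle inequality for $d_\cY$ inside the integral followed by Cauchy--Schwarz (this is exactly the mechanism by which an $\cO(n^{-1/2})$ bound on a squared-distance surrogate becomes an $\cO(n^{-1/4})$ bound on excess risk in \cite{Ciliberto2016}). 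Concretely: $R(\tilde f) - R(f^*) = (R(\tilde f) - R(\hat f)) + (R(\hat f) - R(f^*)) \le C\sqrt{\mathbb{E}_x[d_\cY^2(\hat f(x),\tilde f(x))]} + \cO(n^{-1/4})$. Plugging in the Theorem~\ref{thm:contGen} bound, the first term is $\cO\big((n k_{\min})^{-1/2}\big)$ up to the LF-quality constant $\sum_a \beta_a^2 \hat\mu_a^2$, which under the standing assumptions (bounded worst-LF variance, $k_{\min} > 0$) is $\cO(n^{-1/2}) = o(n^{-1/4})$. Hence $R(\tilde f) \le R(f^*) + \cO(n^{-1/4})$, absorbing the lower-order term.

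I would also need to handle the ``with high probability'' bookkeeping: Theorem~\ref{thm:contGen} as stated bounds an expectation over $(x,y)$ of $d_\cY^2(\hat f(x),\tilde f(x))$, so I would either invoke a Markov-inequality step (the expectation bound already yields an in-probability statement with an extra constant) or note that $\hat f, \tilde f$ are themselves random through the training sample and the bound holds conditionally; combined with the high-probability event on which the clean bound $R(\hat f) \le R(f^*) + \cO(n^{-1/4})$ holds, a union bound gives the claim. The main obstacle I anticipate is making the predictor-closeness-to-risk-closeness step fully rigorous on a general complete manifold: one must ensure the integrand $x \mapsto d_\cY^2(f(x),y)$ is Lipschitz-in-$f(x)$ uniformly enough (using completeness and, implicitly, the hugging-function/curvature control from Assumption~\ref{ass:bhf}) so that Cauchy--Schwarz applies and no curvature-dependent blow-up appears; this is where the assumptions of Theorem~\ref{thm:contGen} are really doing their work, and the corollary should simply inherit them.
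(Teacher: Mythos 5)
Your proposal is correct and matches the paper's (implicit) route: the paper never writes out this corollary's proof, but your decomposition $R(\tilde f)-R(f^*) = (R(\tilde f)-R(\hat f)) + (R(\hat f)-R(f^*))$, the Ciliberto--Rudi clean-label bound for the second term, and the triangle-inequality plus Cauchy--Schwarz control of the first term via Theorem~\ref{thm:contGen} is precisely the mechanism the paper uses in the analogous discrete-case argument for Theorem~\ref{thm:gen_err_true_noise} in the appendix (there $R(\hat f_q)\le \E[d^2_\cY(\hat f_q,\hat f)] + R(\hat f) + 2\E[d_\cY(\hat f_q,\hat f)d_\cY(\hat f,y)]$ with the cross term handled by Cauchy--Schwarz). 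The only cosmetic difference is that you package the gap as $|R(\tilde f)-R(\hat f)|\lesssim \sqrt{\E[d^2_\cY(\hat f,\tilde f)]}$ instead of expanding $(a+b)^2$, which amounts to the same Cauchy--Schwarz step, and you are right that a square-root of the $\cO(n^{-1})$ distance bound gives $\cO(n^{-1/2})=o(n^{-1/4})$, so the claimed rate follows; you are also right to flag the expectation-to-high-probability bookkeeping as the only loose end.
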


\paragraph{Extensions to Other Manifolds}
First, we note that all of our approaches almost immediately lift to products of constant-curvature spaces. For example, we have that $\M_1 \times \M_2$ has metric $d_{\cY}^2(p,q) = d^2_{\M_1}(p_1, q_1) + d^2_{\M_2}(p_2, q_2)$, where $p_i, q_i$ are the projections of $p,q$ onto the $i$th component. 

We can go beyond products of constant-curvature spaces as well. To do so, we can build generalizations of the law of cosines (as needed for the invariance in Lemma~\ref{lem:invariant}). For example, it is possible to do so for symmetric Riemannian manifolds using the tools in \cite{Asklaksen91}.

\section{Experiments}
Finally, we validate our theoretical claims with experimental results demonstrating improved parameter recovery and end model generalization using our techniques over that of prior work \cite{Shin22}. We illustrate both the finite metric space and continuous space cases by targeting rankings (i.e., permutations) and hyperbolic spaces. In the case of rankings we show that our pseudo-Euclidean embeddings with tensor decomposition estimator yields stronger parameter recovery and downstream generalization than \cite{Shin22}. In the case of hyperbolic regression (an example of a Riemannian manifold), we show that our estimator yields improved parameter recovery over \cite{Shin22}. 

\paragraph{Finite metric spaces: Learning to rank}
To experimentally evaluate our tensor decomposition estimator for finite metric spaces, we consider the problem of learning to rank. We construct a synthetic dataset whose ground truth comprises $n$ samples of two distinct rankings among the finite metric space of all length-four permutations. We construct three labeling functions by sampling rankings according to a Mallows model, for which we obtain pseudo-Euclidean embeddings to use with our tensor decomposition estimator. 

In Figure~\ref{fig:experiments_ranking} (top left), we show that as we increase the number of samples, we can obtain an increasingly accurate estimate of $\mathbf{T}$---exhibiting the \emph{nearly-consistent} behavior predicted by our theoretical claims. This leads to downstream improvements in parameter estimates, which also become more accurate as $n$ increases. In contrast, we find that the estimates of the same parameters given by \cite{Shin22} do not improve substantially as $n$ increases, and are ultimately worse (see Figure~\ref{fig:experiments_ranking}, top right). Finally, this leads to improvements in the label model accuracy as compared to that of \cite{Shin22}, and translates to improved accuracy of an end model trained using synthetic samples (see Figure~\ref{fig:experiments_ranking}, bottom). 

\paragraph{Riemannian manifolds: Hyperbolic regression}
We similarly evaluate our estimator using synthetic labels from a hyperbolic manifold, matching the setting of Section~\ref{sec:continuous}. As shown in Figure~\ref{fig:experiments_hyperbolic}, we find that our estimator consistently outperforms that of \cite{Shin22}, often by an order of magnitude. 
\begin{figure}[h]
    \centering
    \includegraphics[width=.6\linewidth]{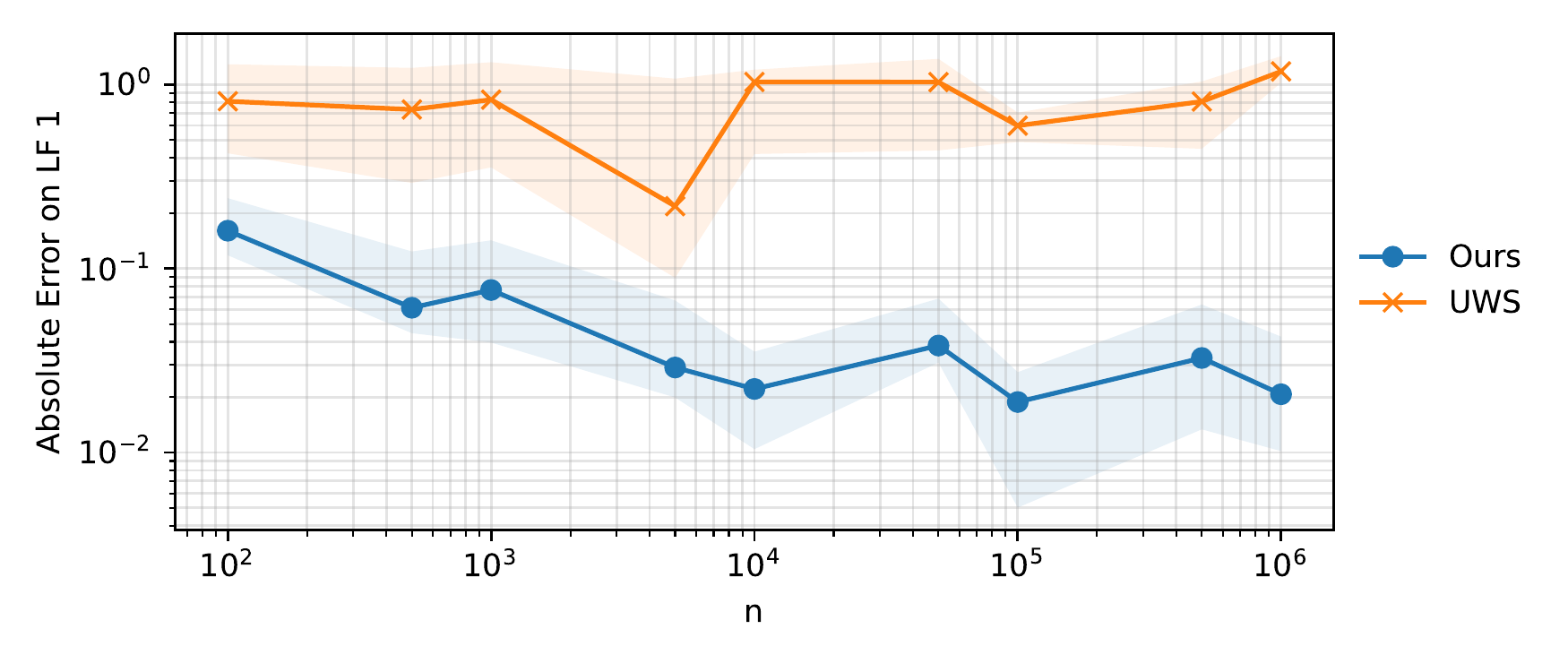}
    \caption{{\bf Continuous case.} Parameter estimation improves with more samples in the hyperbolic regression problem. Our estimator outperforms \cite{Shin22}. Here, we use different randomly sampled values of $\boldsymbol{\theta}$ for each run. We report medians along with upper and lower quartiles across 10 trials.}
    \label{fig:experiments_hyperbolic}
\end{figure}

\section{Conclusion}
We studied the theoretical properties of weak supervision applied to structured prediction in two general scenarios: label spaces that are finite metric spaces or constant-curvature manifolds. We introduced ways to estimate the noise rates of labeling functions, achieving consistency or near-consistency. Using these tools, we established that with suitable modifications downstream structured prediction models maintain generalization guarantees. Future directions include extending these results to even more general manifolds and removing some of the assumptions needed in our analysis.
\section*{Acknowledgments} We are grateful for the support of the NSF (CCF2106707), the American Family Funding Initiative and the Wisconsin Alumni Research Foundation (WARF).

\bibliography{references}
\bibliographystyle{alpha}


\appendix
\section*{Appendix}
The Appendix is organized as follows. First, we provide a glossary that summarizes the notation we use throughout the paper. Afterwards, we provide the proofs for the finite-valued metric space cases. We continue with the proofs and additional discussion for the manifold-valued label spaces. Finally, we give some additional explanations for pseudo-Euclidean spaces. 

\section{Glossary} \label{appendix:glossary}
\label{sec:gloss}
The glossary is given in Table~\ref{table:glossary} below.
\begin{table*}[h]
\centering
\begin{tabular}{l l}
\toprule
Symbol & Definition \\
\midrule
$\mathcal{X}$ & feature space \\
$\mathcal{Y}$ & label metric space\\
$\cY_s$ & support of prior distribution on true labels\\ 
$d_{\mathcal{Y}}$ & label metric (distance) function \\
$x_1,x_2, \ldots, x_n$ & unlabeled datapoints from $\mathcal{X}$\\
$y_1, y_2, \ldots, y_n$ & latent (unobserved) labels from $\mathcal{Y}$\\
$s_1, s_2, \ldots, s_m$ & labeling functions / sources \\
$\lambda_1, \lambda_2, \ldots, \lambda_m$ & output of labeling functions (LFs) \\
$\blm_1, \blm_2, \ldots, \blm_m$ & pseudo-Euclidean embeddings of LFs outputs\\
$\lambda_{a,i}$ & output of $a$th LF on $i$th data point $x_i$ \\
$\blm_{a,i}$ & pseudo-Euclidean embedding of output of $a$th LF on $i$th data point $x_i$ \\
$n$ & number of data points \\
$m$ & number of LFs \\
$k$ & size of the support of prior on $\cY$ i.e. $k=|\cY_s|$ \\
$r$ & size of $\cY$ for the finite case \\
$\theta_a,\hat{\theta}_a$ & true and estimated canonical parameters of model in  \eqref{eq:lbl-model-orig-a}\\
$\bth,\hat{\bth} $ & true and estimated canonical parameters arranged as vectors\\
$\E[d^2_\cY(\lambda_a, y)] $ & mean parameters in \eqref{eq:lbl-model-orig-a}\\
$g$ & pseudo-Euclidean embedding mapping \\
$\bP$ &  true noise model $P_{ij} = P_{\bth}(\tilde{Y}= y_j | Y=y_i )$ with true parameters $\bth$ \\ 
$\bQ$ & estimated noise model with parameters $\hat{\bth}$,  $Q_{ij} = P_{\hat{\bth}}(\tilde{Y}= y_j | Y=y_i )$\\
$\Lambda$ & a random element in $\cY^m$ the $m$-fold Cartesian product of $\cY$  \\
$\Lambda^{(u)}$ & $u$th element in $\cY^m$\\
$\bmu_{a,y}^+,\bmu_{a,y}^-$ & means of distributions in \eqref{eq:proxy-lbl-model} corresponding to $\R^{d^+},\R^{d^-}$  \\
$\epsilon(d^+),\epsilon(d^-)$ & error in recovering the mean parameters \eqref{eq:epsilon-plus}\\
$\sigma$ & proxy noise variance in \eqref{eq:proxy-lbl-model} \\
$F(x,y)$ & the score function in \eqref{eq:f-hat} with true labels\\
$\tilde{F}_p(x,y), \tilde{F}_q(x,y)$ & the score function in \eqref{eq:f-hat-q} with noisy labels from distributions $\bP$ and $\bQ$\\
$\hat{f}$ & minimizer of $F$ defined in \eqref{eq:f-hat}\\
$\hat{f}_p,\hat{f}_q$ & minimizers of $\tilde{F}_p,\tilde{F}_q$ as defined in \eqref{eq:f-hat}\\
$\sigma_{\max}(\bP)$ & maximum singular value of $\bP$\\ 
$\sigma_{\min}(\bP)$ & minimum singular value of $\bP$\\ 
$\kappa(\bP)$ & the condition number of matrix $\bP$\\
$\omega(\bu,\bv)  $ & angle between vectors $\bu,\bv \in \R^d$ \\

\toprule
\end{tabular}
\caption{
	Glossary of variables and symbols used in this paper.
}
\label{table:glossary}
\end{table*}



\section{Proofs for Parameter Estimation Error in Discrete Spaces}

We introduce results leading to the proofs of the theorems for the finite-valued metric space case.

 \begin{restatable}[]{lemma}{tensorResult}(\cite{anandkumar2014sample})
Let $\hat{\bT}^+,\hat{\bT}^-$ be the third order observed moments for mutually independent labeling functions triplet, as defined in \eqref{eq:T-hat-split} using a sufficiently large number $n$ of i.i.d observations drawn from models in equation \eqref{eq:proxy-lbl-model}. Suppose there are sufficiently many such triplets to cover all labeling functions. Let $\hat{\bmu}_{a,y}^+,\hat{\bmu}_{a,y}^-$ be the estimated parameters returned by the algorithm \ref{alg:finite} for all $a \in [m]$.  Let $\epsilon(d)$ be defined as above in equation \eqref{eq:epsilon-plus}, then the following holds with high probability for all labeling functions,
 \begin{equation}
    ||\bmu_{a,y}^+ - \hat{\bmu}_{a,y}^+||_2 \le \cO(\epsilon(d^+))  \quad \text{ and } \quad   ||\bmu_{a,y}^- - \hat{\bmu}_{a,y}^-||_2 \le \cO(\epsilon(d^-)) \quad \forall a \in [m] \,\, \forall y \in \cY_s 
 \end{equation}
\label{lem:tensor-decomp}
\end{restatable}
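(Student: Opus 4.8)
The plan is to recognize the proxy model \eqref{eq:proxy-lbl-model} as two independent instances of the multi-view mixture model analyzed in \cite{anandkumar2014sample}, and to invoke their sample-complexity guarantee for the whitening plus robust tensor power method---which is precisely what the tensor-decomposition steps of Algorithm~\ref{alg:finite} run on the Euclidean components $\blm_a^+ \in \R^{d^+}$ and $\blm_a^- \in \R^{d^-}$ separately. Conditioned on $Y=y$, the view $\blm_a^+$ has mean $\bmu^+_{a,y}$, the views of a triplet $(a,b,c)$ of mutually conditionally independent labeling functions are independent, and the third-order moment factorizes as in \eqref{eq:T-hat-split} into a rank-$k$ CP decomposition with components $\{\bmu^+_{a,y}\}_{y\in\cY_s}$ and weights $\{w_y\}$; the same holds for the negative part with $d^-$ in place of $d^+$. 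Because the two components are genuine inner-product spaces, the core tensor algorithm of \cite{anandkumar2014tensor} needs no modification---only the split beforehand and the recombination (and sign/label-permutation resolution) afterwards.

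First I would verify the hypotheses of the tensor-decomposition guarantee: (i) the conditional-mean matrices $M^{\pm} = [\bmu^{\pm}_{a,y}]_{y\in\cY_s}$ have full column rank $k$---this is where Assumption~\ref{asm:support} enters ($\min(d^+,d^-)\ge k$, with the unit-norm normalization, so the whitening step is well posed); (ii) $w_y>0$ for $y\in\cY_s$ by definition of $\cY_s$; and (iii) the conditional noise $\sigma\sqrt{d^{\pm}}\,\beps_a^{\pm}$ has bounded covariance. Under these, matrix and tensor concentration give $\|\hat{\bT}^{\pm} - \bT^{\pm}\| = \tilde{\cO}(v_{\pm}/\sqrt{n})$ for a variance proxy $v_{\pm}$ depending on $\sigma$, $d^{\pm}$ and $k$; the perturbation analysis of the robust tensor power method then propagates this, up to a $\mathrm{poly}(k)$ factor and a factor controlled by $\sigma_{\min}(M^{\pm})$ and $\min_y w_y$ (absorbed into Assumption~\ref{asm:support}), to $\|\hat{\bmu}^{\pm}_{a,y} - \bmu^{\pm}_{a,y}\|_2 = \tilde{\cO}(\cdot)$, after matching components across views and fixing the $\pm1$ sign ambiguity intrinsic to odd-order decompositions---a step justified for large $n$ by Assumption~\ref{ass:angle-bounded}.

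It then remains to specialize the variance proxy to the two regimes in \eqref{eq:epsilon-plus}. The crucial point is the $\sqrt{d^{\pm}}$ rescaling of the noise in \eqref{eq:proxy-lbl-model}: when $\sigma^2 = \Theta(1)$, the per-coordinate noise variance is $\Theta(1)$ and the total conditional variance of $\blm_a^+$ is $\Theta(d^+)$, producing the $k\sqrt{d^+/n}$ sampling term; when $\sigma^2 = \Theta(1/d^+)$, the total variance is $\Theta(1)$ and the sampling term sharpens to $\sqrt{k/n}$. In both regimes there is an additional residual $\tilde{\cO}(\sqrt{k}/d^{\pm})$ term that does not vanish in $n$: this is the irreducible whitening/deflation error of \cite{anandkumar2014tensor} for a rank-$k$ decomposition in ambient dimension $d^{\pm}$. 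Summing the positive and negative contributions over the triplet and over all labeling functions (each $a$ lies in some valid triplet by hypothesis) and union-bounding over the $\cO(mk)$ events yields the stated high-probability bounds, with $\epsilon(d^+),\epsilon(d^-)$ as in \eqref{eq:epsilon-plus}.

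The main obstacle is accounting rather than conceptual: one must track how the $\sigma\sqrt{d^{\pm}}$ scaling flows through (a) the operator-norm concentration of $\hat{\bT}^{\pm}$, (b) the whitening matrix, whose conditioning hinges on $\sigma_{\min}(M^{\pm})$---the place where Assumption~\ref{asm:support} does real work---and (c) the robust power iteration, so that the two terms of \eqref{eq:epsilon-plus} emerge with exactly the stated $k$- and $d^{\pm}$-dependence; and one must confirm that the pseudo-Euclidean split introduces no error, which holds because $d^2_\phi$ is recovered exactly as the difference of the two Euclidean squared distances.
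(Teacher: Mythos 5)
Your proposal follows essentially the same route as the paper: cast the split pseudo-Euclidean components as two independent multi-view mixture models, check the hypotheses of the guarantees in \cite{anandkumar2014sample}, invoke them, and then resolve the sign ambiguity via Assumption~\ref{ass:angle-bounded}. The structure is right, and your account of where the two regimes of $\epsilon(d)$ come from (the $\sqrt{d^\pm}$ noise scaling flowing through the tensor concentration) is more explicit than the paper, which merely cites equation~\eqref{eq:epsilon-plus}.

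Two verification steps are glossed over in a way that matters. First, you list as hypothesis (iii) that the conditional noise ``has bounded covariance,'' but Theorem~1 of \cite{anandkumar2014sample} requires the noise matrices to satisfy the Restricted Isometry Property, which bounded covariance does not imply. The paper's proof explicitly supplies this: the noise is supported on a finite set with bounded entries, hence sub-Gaussian, and sub-Gaussian random matrices satisfy RIP. Without this step the citation to the sample-complexity theorem is not actually justified. Second, the recovery guarantee in \cite{anandkumar2014sample} is stated in terms of the sign-oblivious quantity $\mathrm{dist}(\bu,\bv) = \sup_{\bz \perp \bu}\langle \bz,\bv\rangle / (\|\bz\|_2\|\bv\|_2)$, not the Euclidean norm $\|\bu - \bv\|_2$. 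The paper first shows $\min_{z\in\{-1,+1\}}\|z\bu-\bv\|_2 \le \sqrt{2}\,\mathrm{dist}(\bu,\bv)$ and only after the sign is pinned down (via the explicit angle argument under Assumption~\ref{ass:angle-bounded}, using $\sin(c) \ge \max(\epsilon_0(d^+),\epsilon_0(d^-))$ to ensure the acute/obtuse test is unambiguous) does it pass to $\|\bu-\bv\|_2$. You assert the sign step is ``justified for large $n$ by Assumption~\ref{ass:angle-bounded}'' without giving that argument, and you never mention the dist-to-$\ell_2$ conversion at all, so the claimed Euclidean bound does not yet follow from what you wrote.
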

\begin{proof}
The result follows by first showing that our setting and assumptions imply that the conditions of Theorems 1 and 5 in \cite{anandkumar2014sample} are satisfied, which allows us to adopt their results. We then translate the result in order to state it in terms of the $\ell_2$ distance.

The tensor concentration result in Theorem 1 in \cite{anandkumar2014sample} relies heavily on the noise matrices satisfying the Restricted Isometry Property (RIP) property. The authors make an explicit assumption that the noise model satisfies this condition. In our setting, we have a specific form of the noise model that allows us to show that this assumption is satisfied. The RIP condition is satisfied for sub-Gaussian noise matrices \cite{baraniuk2014introduction}. Our noise matrices are supported on a discrete space and have bounded entries, and so are sub-Gaussian. 

The other required conditions on the norms of factor matrices and the number of latent factors are implied by Assumption~\ref{asm:support}. Thus, we can adopt the results on recovery of parameters $\bmu_{a,y}$ and the prior weights $w_y$ from \cite{anandkumar2014sample}. The result gives us for all $a \in [m], y\in \cY_s$,
$$\text{dist}(\bmu_{a,y}^+, \hat{\bmu}_{a,y}^+) \le \cO\big(\epsilon(d^+)\big) ,\quad \text{dist}(\bmu_{a,y}^-, \hat{\bmu}_{a,y}^-) \le \cO\big(\epsilon(d^-)\big), $$
and  $$\quad |w_y -\hat{w}_y| \le \cO\Big(\max\big(\epsilon(d^+),\epsilon(d^-)\big)/k\Big),$$
where $\text{dist}(\bu,\bv)$ is defined as follows. For any $\bu,
\bv \in \R^d$,
 $$\text{dist}(\bu,\bv) = \sup_{\bz\perp \bu} \frac{\langle \bz,\bv\rangle}{||\bz||_2||\bv||_2} = \sup_{\bz\perp \bv} \frac{\langle \bz,\bu\rangle}{||\bz||_2||\bu||_2} .$$
 
 Next, we translate the result to the Euclidean distance.  For $\bu,\bv \in \R^d$ with $||\bu||, ||\bv||=1$, it is easy to see that
 $$\min_{z \in \{-1,+1\}}||z\bu -\bv||_2 \le \sqrt{2}\,\text{dist}(\bu,\bv) .$$
   This notion of distance is oblivious to sign recovery.  However, when sign recovery is possible then the Euclidean distance can be bounded as follows,
 $$||\bu -\bv||_2 \le \sqrt{2}\,\text{dist}(\bu,\bv) .$$

Next we make use of Assumption \ref{ass:angle-bounded} to recover the signs of $\bmu^+,\bmu^-$. The assumption bounds the angle between true $\bmu_{a,y}^+$ and $\by^+$ between $[0,\pi/2-c)$ for some sufficiently small $c \in (0,\pi/4] $ such that $\sin(c) > \max(\epsilon_0(d^+),\epsilon_0(d^-))$, where $\epsilon_0(d)$ is defined for some $n_0 < n$ samples in equation \eqref{eq:epsilon-plus}. 
We measure $\omega(\hat{\bmu}_{a,y}^+,\by^+)$ and $\omega(-\hat{\bmu}_{a,y}^+,\by^+)$ and claim that whichever makes an acute angle with $\by^+$ has the correct sign.

We have that $\omega(\hat{\bmu}_{a,y}^+,\by^+) \le \omega(\hat{\bmu}_{a,y}^+,\bmu_{a,y}^+) + \omega(\bmu_{a,y}^+,\by^+) .$ Let $s \in \{-1,+1\}$ be the correct sign, then, 
\begin{align*}\omega(s\hat{\bmu}_{a,y}^+,\by^+) &\le \omega(s\hat{\bmu}_{a,y}^+,\bmu_{a,y}^+) + \omega(s\bmu_{a,y}^+,\by^+) \\ 
&\le \sin^{-1}(\epsilon(d^+)) + \pi/2 - c \\
&< \pi/2 - \Big(\sin^{-1}\big(\max(\epsilon_0(d^+),\epsilon_0(d^-)) \big)  -\sin^{-1}\big(\epsilon(d^+)\big)\Big) \\
&< \pi/2 \quad \text{since } \sin^{-1} \text{ is an increasing function in the domain under consideration.} 
\end{align*} 
With the correct sign $\sin^{-1}(\epsilon(d^+)) < \pi/2$ and so is $\omega(s\hat{\bmu}^{+}_{a,y },\by^+)$.  Thus with incorrect sign $\omega(-s\hat{\bmu}_{a,y}^+, \by^+) > \pi/2$.
 


 
 Hence, after disambiguating the signs we have,
 $$  ||\bmu_{a,y}^+ - \hat{\bmu}_{a,y}^+||_2 \le \cO(\text{dist}(\bmu_{a,y}^+,\bmu_{a,y}^-)) \le \cO(\epsilon(d^+)) $$
 and similarly for $\bmu^-_{a,y}$. Next with $n,d$ sufficiently large such that $\epsilon(d^+),\epsilon(d^-)\le 1$, the result holds for squared distances.
\end{proof}

\meanParamRecovery*

\begin{proof}

We prove this by using the bounds on errors in the estimates of  $\bmu^+_{a,y}$ and $\bmu^-_{a,y}$ from Lemma \ref{lem:tensor-decomp}. We proceed by bounding the errors in two parts for $\E[d^2_\phi(\blm_a^+,\by^+)]$ and $ \E[d^2_\phi(\blm_a^-,\by^-)]$ separately and then combine them to get the bound on overall parameter estimation error.

We first bound the error for $\E[d^2_\phi(\blm_a^+,\by^+)]$. The true mean parameter (i.e., the true expected squared distance) can be expanded as follows:
\begin{align*}
   \E[d^2_\phi(\blm_a^+,\by^+)] &= \E \Big [  ||\blm_a^+||_2^2 + ||\by^{+}||_2^2   - 2 \langle \blm_a^+,\by^{+}  \rangle \Big ],   \\
   &= \E_{\blm}[ ||\blm_a^+||_2^2] + \E_{\by}[||\by^+||_2^2] - 2\E_{\by}[ \langle \bmu_{a,y}^{+} , \by^+\rangle ].
\end{align*}
The estimate $\hat{\E}_{\blm}[||\blm_a^+||_2^2]$ is computed empirically. The first term is estimated observed LF outputs, i.e. $\hat{\E}_{\blm}[||\blm_a^+||_2^2] = \frac{1}{n}\sum_{i=1}^n || \blm_a^{(i),+}||_2^2 $. The second term is computed by using the estimated prior on the labels and for the last term we plug in the estimate of $\bmu_{a,y}^+$ computed using the tensor-decomposition algorithm. Putting them all together we have the following estimator:
\begin{align*}
   \hat{\E}[d^2_\phi(\blm_a^+,\by^+)] 
   &= \hat{\E}_{\blm}[||\blm_a^+||_2^2]  + \hat{\E}_{\by}[||\by^+||_2^2] - 2\hat{\E}_{\by} \langle \hat{\bmu}_{a,y}^+ , \by^{+}\rangle . 
\end{align*}

We want to bound the error of our estimator i.e. the  difference $|\E[d^2_\phi(\blm_a^+,\by)] -  \hat{\E}[d^2_\phi(\blm_a^+,\by)]|$. For this first consider the following,
\begin{align*}
    |\E_\by\langle \bmu_{a,y}^+ , \by^{+}\rangle - \hat{\E}_{\by} \langle \hat{\bmu}_{a,y}^+ , \by^{+}\rangle | &=  \sum_{y} \Big |\big \langle \big ( w_y \bmu^{+}_{a,y} - \hat{w}_y \hat{\bmu}^{+}_{a,y} \big) , \by^+ \big\rangle \Big | \\ 
    &= \sum_{y} \Big |\big \langle \big ( w_y \bmu^{+}_{a,y} - w_y \hat{\bmu}^{+}_{a,y} + w_y \hat{\bmu}^{+}_{a,y}  - \hat{w}_y \hat{\bmu}^{+}_{a,y} \big) , \by^+ \big\rangle \Big | \\
    &\le  \sum_{y} \big | w_y \big\langle  \big (  \bmu^{+}_{a,y} -  \hat{\bmu}^{+}_{a,y} \big) , \by^+ \big\rangle \big|+ \sum_{y} \cO(\epsilon(d^+)/k) \big|\langle \hat{\bmu}_{a,y}^{+}, \by^+ \rangle \big|   \\ 
    &\le \sum_{y} \big| w_y \big\langle  \big( \bmu^{+}_{a,y} -  \hat{\bmu}^{+}_{a,y} \big) , \by^+ \big\rangle \big| + \cO(\epsilon(d^+)) \\
    &\le \sum_{y}  w_y  || \bmu^{+}_{a,y} -  \hat{\bmu}^{+}_{a,y}||_2 ||\by^+||_2 + \cO(\epsilon(d^+)) \\
    &\le \cO(\epsilon(d^+)).
\end{align*}

Here we used $|| \bmu_{a,y}^+-\hat{\bmu}_{a,y}^+||_2 \le \cO(\epsilon(d^+))$ and $||\bmu_{a,y}^+||_2,||\hat{\bmu}_{a,y}^+||_2= 1, ||\by^+||_2 \le 1, ||\blm_a^+||_2^2 \le 1 $ and $|w_y -\hat{w}_y| \le \cO(\epsilon(d^+))/k$. 
Similarly, 
\begin{align}
\big | \hat{\E}_{\by}[||\by^+||_2^2] - \E_{\by}[||\by^+||_2^2] \big | = \big| \sum_{y} w_y ||\by^+||_2^2 - \hat{w}_y||\by^+||_2^2 \big | \le \sum_y |w_y-\hat{w}_y| \cdot ||\by^+||_2^2 \le \cO(\epsilon(d^+))
\end{align}

Hence the parameter estimator error,
\begin{align*}
    \Big |\E[d^2_\phi(\blm_a^+,\by)] -  \hat{\E}[d^2_\phi(\blm_a^+,\by)]   \Big | &\le \Big| \E_{\blm}[||\blm_a^+||_2^2 - \hat{\E}_{\blm}[||\blm_a^+||_2^2]  \Big | +  2|\E_\by\langle \bmu_{a,y}^+ , \by^{+}\rangle - \hat{\E}_{\by} \langle \hat{\bmu}_{a,y}^+ , \by^{+}\rangle | \\ & \hspace{5cm} + \big | \hat{\E}_{\by}[||\by^+||_2^2] - \E_{\by}[||\by^+||_2^2] \big |  \\
    &\le \cO(1/\sqrt{n}) +  \cO(\epsilon(d^+)) + \cO(\epsilon(d^+))  \\
    &\le \cO(\epsilon(d^+)).
\end{align*}
In the second step, we bound the first term by $\cO(1/\sqrt{n})$ via standard concentration inequalities.


Doing the same calculations for $\blm_a^-$, we obtain
\begin{align*}
      \Big |\E[d^2_\phi(\blm_a^-,\by)] -  \hat{\E}[d^2_\phi(\blm_a^-,\by)]   \Big | \le \cO(\epsilon(d^-)).
\end{align*}
The overall error in mean parameters is then
\begin{align*}
\Big|\E[d^2_\phi(\blm_a,\by)] -  \hat{\E}[d^2_\phi(\blm_a,\by)] \Big | &\le  \Big |\E[d^2_\phi(\blm_a^+,\by)] -  \hat{\E}[d^2_\phi(\blm_a^+,\by)]   \Big | + \\ & \qquad\Big |\E[d^2_\phi(\blm_a^-,\by)] -  \hat{\E}[d^2_\phi(\blm_a^-,\by)]   \Big |, \\
&\le \cO(\epsilon(d^+)) + \cO(\epsilon(d^-)) .
\end{align*}

Next, we use a known relation between the mean and the canonical parameters of the exponential model to get the result in terms of the canonical parameters:
$$|\theta_a -\hat{\theta}_a| \le \frac{1}{e_{\min}(A_a(\theta))} \big|\E[d^2_{\cY}(\lambda_a,y) - \hat{\E}[d^2_{\cY}(\lambda_a,y) ]\big| . $$
where $A_a{(\theta)}$ is the log partition function of the label model in \eqref{eq:lbl-model-orig-a} and  $e_{\min}(A_a) = \inf_{\theta \in \Theta} \frac{d^2}{d\theta^2}A_a(\theta)$ over the parameter space $\Theta$. For more details see Lemma 8 from \cite{fu2020fast} and Theorem 4.3 in \cite{Shin22}.
Letting $C_0 = \max_{a \in [m]} e_{\min}(A_a)$ concludes the proof.
\end{proof}

\section{Proofs for Generalization Error in Discrete Space}
In this section we give the proof for the generalization error bound in the discrete label spaces. We first show that the perturbed (noise-aware) distance function $\tilde{d_p}$ is an unbiased estimator of the true distance. Using this we show that the noise aware score function $\tilde{F}_p$ is a good uniform approximation of the score function $F$. Then we show that the minimizer $\hat{f}_p$ of $\tilde{F}_p$ is close to the minimizer $\hat{f}$ and that this closeness depends on how well $\tilde{F}_p$ approximates $F$. Next, showing that $\tilde{F}_q$ is a good uniform approximation of $\tilde{F}_p$ using the results from previous section on parameter recovery leads to the result on generalization error of $\hat{f}_q$. 


\begin{restatable}[]{lemma}{}
Let the distribution $\tilde{Y}|Y$ be given by $\bP$ a $k\times k$ transition probability matrix with $\bP_{ij} = \P(\tilde{Y} = y_j | Y = y_i )$  and suppose $\bP$ is invertible. Let pseudo-distance $   \tilde{d}_p(T,\tilde{Y}=y_j)  \ldef \sum_{i=1}^k (\bP^{-1})_{ji} d^2_{\cY} (T,Y=y_i)  \quad \forall y_j \in \cY_s,
$ then,
 \begin{equation}
     \E_{\tilde{Y}|Y=y_i}\big[ \tilde{d}_p(T,\tilde{Y})\big] = d_{\cY}^2(T,y_i).
 \end{equation}
 \label{lemma:unbiased}
\end{restatable}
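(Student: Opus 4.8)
The plan is to prove this directly by a short linear-algebra computation, in exactly the spirit of the unbiased-estimator ("backward correction") trick of \cite{natarajan2013Noise}. All quantities here live on the $k$-element support $\cY_s = \{y : w_y > 0\}$, so $\bP$ is the $k\times k$ matrix with $\bP_{ij} = \P(\tilde Y = y_j \mid Y = y_i)$ and $\tilde d_p(T, \cdot)$ is defined on $\cY_s$; invertibility of $\bP$ is assumed precisely so that $\bP^{-1}$ exists (and is guaranteed downstream by Assumption~\ref{asm:noise-good}).

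First I would expand the expectation over $\tilde Y \mid Y = y_i$ as a finite sum against the $i$-th row of $\bP$, then substitute the definition of the pseudo-distance (renaming its summation index to $l$ to avoid collision with the fixed index $i$), and interchange the two finite sums:
\[
\E_{\tilde Y \mid Y = y_i}\big[\tilde d_p(T, \tilde Y)\big]
= \sum_{j=1}^k \bP_{ij}\, \tilde d_p(T, y_j)
= \sum_{j=1}^k \bP_{ij} \sum_{l=1}^k (\bP^{-1})_{jl}\, d^2_{\cY}(T, y_l)
= \sum_{l=1}^k \Big( \sum_{j=1}^k \bP_{ij} (\bP^{-1})_{jl} \Big) d^2_{\cY}(T, y_l).
\]
The inner sum is just an entry of the matrix product, $\sum_{j} \bP_{ij}(\bP^{-1})_{jl} = (\bP\bP^{-1})_{il}$. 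Since $\bP$ is invertible, $\bP\bP^{-1} = \bI_k$, so $(\bP\bP^{-1})_{il} = \delta_{il}$, and only the $l = i$ term survives:
\[
\E_{\tilde Y \mid Y = y_i}\big[\tilde d_p(T, \tilde Y)\big] = \sum_{l=1}^k \delta_{il}\, d^2_{\cY}(T, y_l) = d^2_{\cY}(T, y_i),
\]
which is the claim.

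There is essentially no obstacle: the computation is just "multiply by $\bP^{-1}$, then by $\bP$." The only points requiring care are bookkeeping — ensuring all sums range over $\cY_s$ so that the matrix algebra is on $k\times k$ objects — and invoking invertibility of $\bP$, which is the lemma's hypothesis. This result is the base case for the generalization analysis: it shows $\tilde d_p$ is an unbiased surrogate for the clean squared distance, which is what lets $\tilde F_p$ approximate $F$ uniformly in expectation; one then controls $\hat f_p$ versus $\hat f$, and finally $\hat f_q$ versus $\hat f_p$ using the parameter-recovery bound of Theorem~\ref{thm:meanfinite}.
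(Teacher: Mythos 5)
Your proof is correct and is essentially the same argument as the paper's, just run in the opposite direction: the paper starts from the unbiasedness constraint $\bP\tilde{\bd}_p = \bd$ and solves for $\tilde{\bd}_p = \bP^{-1}\bd$ to \emph{derive} the formula, whereas you take the stated formula and \emph{verify} unbiasedness via $\bP\bP^{-1} = \bI_k$. Both reduce to the same one-line linear-algebra identity and invoke invertibility of $\bP$ at the same point, so there is no substantive difference.
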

\begin{proof}
Here we adopt the same ideas as in \cite{natarajan2013Noise} to create the unbiased estimator $\tilde{d}_p$ as follows,
First we write the equations for the expectations for each $y_i$. Which gives us a system of linear equations and solving them for $\tilde{d}_p$ gives us the expression for the unbiased estimator.
\begin{align*}
   \E_{\tilde{Y}|Y=y_i} [\tilde{d}_p(T,\tilde{Y})] = P_{\theta}(\tilde{Y}=y_1 | Y=y_i) \tilde{d}_p(T,\tilde{Y}=y_1) + \ldots +  P_{\theta}(\tilde{Y}=y_j | Y=y_i)\tilde{d}_p(T,\tilde{Y}=y_j) +\ldots \\ + P_{\theta}(\tilde{Y}=y_k | Y=y_i)\tilde{d}_p(T,\tilde{Y}=y_k) = d^2_{\cY}(T,y_i) \quad \forall i\in[k]
\end{align*}

Set $\tilde{\bd}_p \in \R^k$ with $i$th entry $\tilde{\bd}_{p}[i]$ given by $\tilde{d}_p(T,\tilde{Y}=y_i)$ and similarly define $\bd$ with $\bd[i] = d^2_{\cY}(T,y_i)$. Then the above system of linear equations can be expressed as follows,
   $$\bP \tilde{\bd}_p  = \bd \implies \tilde{\bd}_p = \bP^{-1}\bd \implies \tilde{d}_p(T,\tilde{Y}=y_j) =  \sum_{i=1}^k (\bP^{-1})_{ji} d^2_{\cY} (T,Y=y_i)  \forall y_j \in \cY_s $$
\end{proof} 

Next, we show that the noisy score function $\tilde{F}_p$ concentrates around the true score function $F$ for all $x$ and $y$ with high probability.
\begin{restatable}[]{lemma}{lemFxConcentration}
    Let $F$ and $\tilde{F}_p$ be defined as in \eqref{eq:f-hat-q}  and \eqref{eq:f-hat} over $n$ i.i.d. samples. Then the following holds for any $x\in \cX,y\in \cY$ with high probability, 
    \begin{equation}
        |F(x,y) - \tilde{F}_p(x,y)| \le \tilde{\cO} \Big( \big (1+ \frac{\sqrt{k}}{\sigma_{\min}(\bP)}\big ) \sqrt{\frac{1}{n} } \Big)  \quad \forall x \in \cX , \forall y \in \cY_s,
    \end{equation}
    \label{lem:Fx-Concentration}
    where $\sigma_{\min}(\bP) $ is the minimum singular value of $\bP$.
\end{restatable}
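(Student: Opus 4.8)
The plan is to write the gap $F(x,y)-\tilde F_p(x,y)$ as an empirical average of i.i.d., conditionally mean‑zero, uniformly bounded random variables, and then to combine a Hoeffding‑type concentration bound with a union bound over the finite support $\cY_s$. Set $\xi_i(y) \ldef d^2_\cY(y,y_i) - \tilde d_p(y,\tilde z_i)$, so that
\[
F(x,y) - \tilde F_p(x,y) = \frac{1}{n}\sum_{i=1}^n \alpha_i(x)\,\xi_i(y).
\]
Since the clean labels $y_i$ are i.i.d.\ and, conditionally on $y_i$, the pseudolabel $\tilde z_i$ is drawn from the $i$‑th row of $\bP$, the pairs $(y_i,\tilde z_i)$ are i.i.d.\ across $i$; moreover Lemma~\ref{lemma:unbiased} gives $\E[\tilde d_p(y,\tilde z_i)\mid y_i] = d^2_\cY(y,y_i)$, hence $\E[\xi_i(y)\mid y_i]=0$ and in particular $\E[\xi_i(y)]=0$. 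Thus for each fixed $x$ and $y$ the displayed quantity is a centered weighted average of independent terms.

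The second ingredient is a uniform bound $|\xi_i(y)|\le B$ with $B = \cO\big(1 + \sqrt{k}/\sigma_{\min}(\bP)\big)$. Let $D$ be the (finite) diameter of $\cY$, so $d^2_\cY(\cdot,\cdot)\le D^2$. From the definition of the perturbed distance, $\tilde d_p(y,y_j) = \sum_{l=1}^k (\bP^{-1})_{jl}\,d^2_\cY(y,y_l)$, so $|\tilde d_p(y,y_j)| \le \|(\bP^{-1})_{j,:}\|_1\,D^2 \le \sqrt{k}\,\|\bP^{-1}\|_2\,D^2 = \sqrt{k}\,D^2/\sigma_{\min}(\bP)$, using $\|v\|_1\le\sqrt{k}\|v\|_2$ on $\R^k$, the identity $\|\bP^{-1}\|_2 = 1/\sigma_{\min}(\bP)$, and the invertibility of $\bP$ (Assumption~\ref{asm:noise-good}). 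Combining with $d^2_\cY(y,y_i)\le D^2$ yields $|\xi_i(y)| \le D^2\big(1 + \sqrt{k}/\sigma_{\min}(\bP)\big) \rdef B$, and the constant $D^2$ is absorbed into the $\tilde\cO$.

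With these in place I would apply Hoeffding's inequality for weighted sums of independent bounded variables: by Assumption~\ref{asm:normalized}, $|\alpha_i(x)|\le 1$ so $\sum_i\alpha_i(x)^2\le n$, and hence for fixed $x,y$ we get $\big|\tfrac1n\sum_i\alpha_i(x)\xi_i(y)\big| \le B\sqrt{2\log(2/\delta)/n}$ with probability at least $1-\delta$; a union bound over the $k$ points of $\cY_s$ costs only a $\log k$ factor, absorbed by $\tilde\cO$. To upgrade this to \emph{all} $x\in\cX$, I would exploit the fact that the random vector $\xi(y) = (\xi_1(y),\dots,\xi_n(y))$ does not depend on $x$: once $\|\xi(y)\|_2$ is controlled — either by the deterministic bound $\|\xi(y)\|_2\le B\sqrt n$ or, more tightly, by concentration of $\|\xi(y)\|_2$ around its $\Theta(B\sqrt n)$ expectation — Cauchy--Schwarz together with $\|\alpha(x)\|_2\le 1$ (equivalently, the RKHS representation $\alpha(x)=(\bK+\nu\bI)^{-1}\bK_x$ and a single bound on $\sup_x\|\bK_x\|$) gives $\sup_x\big|\tfrac1n\langle\alpha(x),\xi(y)\rangle\big| \le \tfrac1n\|\alpha(x)\|_2\|\xi(y)\|_2 = \cO(B/\sqrt n)$ simultaneously over $x$. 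A final union bound over $y\in\cY_s$ then delivers the stated bound.

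The step I expect to be the main obstacle is precisely this last one: making the inequality hold \emph{uniformly} over $x\in\cX$ rather than pointwise, since $\cX$ may be infinite and a naive covering of $\{\alpha(x):x\in\cX\}\subset\R^n$ would inflate the complexity term by a factor of $n$, destroying the $n^{-1/2}$ rate. The resolution, as above, is to never net $\cX$ and instead push all the surviving randomness onto $\xi(y)$, whose norm is independent of $x$, while controlling $\sup_x\|\alpha(x)\|$ directly via Assumption~\ref{asm:normalized}. The only other points needing care are the interpretation of $|\alpha(x)|\le 1$ (the Euclidean‑norm reading makes the transfer to all $x$ immediate, whereas an entrywise reading additionally requires the Hoeffding step above) and verifying that the pseudolabels $\tilde z_i$ appearing in $\tilde F_p$ are genuinely $\bP$‑distributed given $y_i$, as needed for Lemma~\ref{lemma:unbiased} to apply.
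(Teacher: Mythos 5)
Your proposal follows the paper's proof closely on the core steps: the same decomposition of $\tilde F_p - F$ into a weighted average of the conditionally mean-zero terms $\xi_i(y) = d^2_\cY(y,y_i)-\tilde d_p(y,\tilde z_i)$, the same appeal to Lemma~\ref{lemma:unbiased} for unbiasedness, the same magnitude bound $|\tilde d_p| \le \sqrt{k}/\sigma_{\min}(\bP)$ (you go through $\|v\|_1 \le \sqrt{k}\|v\|_2$ and $\|(\bP^{-1})_{j,:}\|_2 \le \|\bP^{-1}\|_2$; the paper goes through $\|\bP^{-1}\|_\infty \le \sqrt{k}\|\bP^{-1}\|_2$ — these are equivalent), and the same Hoeffding plus union bound over $\cY_s$. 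Where you deviate — and improve — is on uniformity over $x$. The paper waves this away with the claim that "the events that the inequality does not hold for any distinct $x_1,x_2\in\cX$ are the same," which is not literally true: the weights $\alpha_i(x)$ depend on $x$, so the bad events for different $x$ do not coincide, and a fixed-$x$ Hoeffding bound does not automatically transfer to $\sup_x$. Your Cauchy--Schwarz argument sidesteps this cleanly by putting all the randomness into $\|\xi(y)\|_2 \le B\sqrt n$ (an $x$-independent quantity) and bounding $\sup_x\|\alpha(x)\|_2 \le 1$, which gives the $B/\sqrt n$ bound deterministically for each $y\in\cY_s$ — so under the Euclidean-norm reading of Assumption~\ref{asm:normalized} the Hoeffding step is in fact superfluous. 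You are also right to flag the ambiguity in that assumption: under the entrywise reading $|\alpha_i(x)|\le 1$, neither your Cauchy--Schwarz step nor the paper's argument closes the uniformity-over-$x$ gap without additional work (e.g., a covering or chaining argument over $\{\alpha(x):x\in\cX\}\subset\R^n$), which the paper does not supply.
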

\begin{proof}
Let $\{y_i\}_{i=1}^n$ be the true labels of points $\{x_i\}_{i=1}^n$ and let the pseudo-label for $i$th point drawn from the true noise model $\bP$ be $\tilde{y}_i$. 
  Let $\tilde{\bd}_p \in \R^{k}$ be a vector such that its $i^{th}$ entry is given as $\tilde{\bd}_{p}[i] = \tilde{d}_p(T,\tilde{Z}=y_i) $, and similarly and  $ \bd \in \R^{k}  $ with $\bd[i] = d^{2}_{\cY}(T,Y=y_i)$. Recall the definitions of the score functions $F$ and $\tilde{F}_p$ for any $x\in \cX$ and $y$ in $\cY$,
\begin{align*}
    F(x,y) \ldef \frac{1}{n}\sum_{i=1}^n \alpha_i(x)d^2_\cY(y,y_i), \qquad \tilde{F}_p(x,y) \ldef \frac{1}{n}\sum_{i=1}^n \alpha_i(x)\tilde{d}_p(y,\tilde{y}_i) .
\end{align*}
Taking their difference,
\begin{align*}
  \tilde{F}_p(x,y) - F(x,y) &= \frac{1}{n}\sum_{i=1}^n\alpha_i(x)  \Big ( \tilde{d}_p(y,\tilde{y}_i) - d^2_\cY(y,y_i) \Big ) ,\\
    &= \frac{1}{n}\sum_{i=1}^n\alpha_i(x) \xi(y,y_i,\tilde{y}_i) .
\end{align*}


Here $y,y_i$ are fixed and the randomness is over $\tilde{y}_i$, thus we can think of $\tilde{y}_i$ as random variable $\tilde{Y}_i$ and take the expectation of $\xi$ over the distribution $\bP$. From Lemma \ref{lemma:unbiased} we have $\E_{\tilde{Y} | Y = y_i }[\xi(y,y_i,\tilde{Y})] = 0$ and this implies $\E [ \tilde{F}_p(x,y) - F(x,y)] = 0 $. 

Moreover, $\alpha_i(x)\cdot\xi(y,y_i,\tilde{Y}_i)$ are independent random variables and $\alpha_i(x) \le 1$.
%
The $\xi$ are bounded as follows as long as the spectral decomposition of $\bP$ is not arbitrary, 
$$\max_{z \in \cY_s} \tilde{d}_p(y,z) = ||\tilde{\bd}_p||_\infty = ||\bP^{-1}\bd||_\infty \le  ||\bP^{-1}||_\infty||\bd||_\infty . $$
Now using the fact that $||\bd||_\infty \le 1$ and properties of matrix norms we get,
$$||\bP^{-1}||_\infty||\bd||_\infty \le  ||\bP^{-1}||_\infty \le \sqrt{k}  ||\bP^{-1}||_2 \le \frac{\sqrt{k}}{\sigma_{\min}(\bP)}. $$
 Moreover, $ \forall y,z \in \cY_s, d^2_\cY(y,z) \le 1$ which gives us the magnitude of random variables $\xi(y,z,\tilde{z})$ is upper bounded by $c_1 \ldef 1 + \frac{\sqrt{k}}{\sigma_{\min}(\bP)} \, \forall y,z,\tilde{z} \in \cY_s$.
 Thus using Hoeffding's inequality and union bound over all $y \in \cY_s$ we get,     
$$|\tilde{F}_p(x,y) - F(x,y)| \le \tilde{\cO} \Big( c_1\sqrt{\frac{1}{n} } \Big) \quad \forall y \in \cY_s, x \in \cX .$$
Note that, the statement holds for $x \in \cX$ without requiring an explicit union bound over $x$. It is because the above concentration depends only on the labels and the events that the above inequality does not hold for any distinct $x_1,x_2 \in \cX$ are the same. 
\end{proof}

Now, we show that the distance between minimizer of $\tilde{F}_p$ and $F$ is bounded.

\begin{restatable}[]{lemma}{lemFuncDistBound}
Let $\hat{f}$ be the minimizer as defined in \eqref{eq:f-hat} over the clean labels and let $\hat{f}_p$ (defined in eq. \eqref{eq:f-hat-q}) be the minimizer over the noisy labels obtained from conditional distribution $\tilde{Y}|Y$ i.e.  $\bP$ such that lemma \ref{lemma:unbiased}, \ref{lem:Fx-Concentration} hold, and let the risk function be defined as in \eqref{eq:risk}, then with high probability, 
\begin{equation}
    d^2_{\cY}\big( \hat{f}_p(x), \hat{f}(x) \big) \le \tilde{\cO} \Big( \frac{c_1}{\beta}\sqrt{\frac{1}{n} } \Big) \quad \forall x \in \cX .
\end{equation}
\label{lem:f_p_and_f_dist}
\end{restatable}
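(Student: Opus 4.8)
The plan is a standard optimization-stability argument that pairs the uniform concentration of Lemma~\ref{lem:Fx-Concentration} with the proxy strong convexity of Assumption~\ref{asm:strong-cvx}. Fix $x \in \cX$. First I would apply Assumption~\ref{asm:strong-cvx} with $f = \hat{f}_p$, which yields
\[
\beta \cdot d_\cY^2\big(\hat{f}_p(x),\hat{f}(x)\big) \le F\big(x,\hat{f}_p(x)\big) - F\big(x,\hat{f}(x)\big),
\]
so it suffices to upper bound the right-hand side by $\tilde{\cO}(c_1 n^{-1/2})$ and then divide by $\beta$.

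To bound that gap I would insert $\tilde{F}_p$ as an intermediary and chain three inequalities. By Lemma~\ref{lem:Fx-Concentration}, on a high-probability event (uniform over $y \in \cY_s$) we have both $F(x,y) \le \tilde{F}_p(x,y) + \tilde{\cO}(c_1 n^{-1/2})$ and $\tilde{F}_p(x,y) \le F(x,y) + \tilde{\cO}(c_1 n^{-1/2})$. Applying the first at $y = \hat{f}_p(x)$, then using the optimality of $\hat{f}_p$ for $\tilde{F}_p(x,\cdot)$ so that $\tilde{F}_p\big(x,\hat{f}_p(x)\big) \le \tilde{F}_p\big(x,\hat{f}(x)\big)$, and finally the second inequality at $y = \hat{f}(x)$, gives
\[
F\big(x,\hat{f}_p(x)\big) - F\big(x,\hat{f}(x)\big) \le 2\,\tilde{\cO}\big(c_1 n^{-1/2}\big) = \tilde{\cO}\big(c_1 n^{-1/2}\big).
\]
Combining with the previous display and dividing by $\beta$ produces $d_\cY^2\big(\hat{f}_p(x),\hat{f}(x)\big) \le \tilde{\cO}\big((c_1/\beta)\, n^{-1/2}\big)$. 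Because the concentration event in Lemma~\ref{lem:Fx-Concentration} depends only on the labels and not on $x$, the same event gives the bound simultaneously for all $x \in \cX$, with no extra union bound over $\cX$.

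The point that requires a little care is that the uniform-in-$y$ bound is being evaluated at the \emph{data-dependent} points $\hat{f}_p(x)$ and $\hat{f}(x)$; this is legitimate exactly because Lemma~\ref{lem:Fx-Concentration} is already uniform over the finite set $\cY_s$, so no further union bound over the (random) minimizers is needed. The only other bookkeeping is checking that $\hat{f}_p(x),\hat{f}(x) \in \cY_s$ — equivalently that the $\argmin$ in \eqref{eq:f-hat-q} is effectively over $\cY_s$, where $\tilde{d}_p$ is defined — so that Assumption~\ref{asm:strong-cvx} and Lemma~\ref{lem:Fx-Concentration} both apply at these points; this is inherited from the setup. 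Beyond these routine checks I do not anticipate a genuine obstacle: the argument is the metric-space analogue of the classical "empirical risk minimizer is close to population minimizer under strong convexity plus uniform convergence" lemma.
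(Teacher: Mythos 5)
Your proof is correct and uses the same two ingredients as the paper — the uniform concentration bound of Lemma~\ref{lem:Fx-Concentration} and the proxy strong convexity of Assumption~\ref{asm:strong-cvx} — combined via the standard ERM-stability chain; the paper reaches the same conclusion by a slightly more roundabout ball-exclusion argument, whereas yours directly sandwiches $F(x,\hat{f}_p(x)) - F(x,\hat{f}(x))$ using optimality of $\hat{f}_p$ for $\tilde F_p$. The result and the key lemmas are identical, so this is essentially the paper's proof (and arguably a cleaner presentation of it, with constant $2t/\beta$ rather than $4t/\beta$).
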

\begin{proof}

Recall the definitions,
\begin{align*}
    \hat{f}(x) = \argmin_{y \in \cY} F(x,y) \qquad  \hat{f}_p(x) = \argmin_{y \in \cY} \tilde{F}_p(x,y)
\end{align*}

Let $d^2_{\cY}(f_1,f_2) = \sup_{x \in \cX} d^2_{\cY}\big ( f_1(x),f_2(x) \big )$ and 
let $\cB(\hat{f},r) = \{f : d^2_\cY(\hat{f},f) \le r \}$ denote the ball of radius $r$ around $\hat{f}$. 

From Lemma \ref{lem:Fx-Concentration} we know for $ t =\tilde{\cO} \Big( c_1\sqrt{\frac{1}{n}} \Big) ,$
\begin{align*} F \big(x,f(x) \big) -t \le \tilde{F}_p\big(x,f(x)\big) \le F\big(x,f(x)\big) + t  \quad \forall f:\cX \mapsto \cY_s .\end{align*}
From Assumption \ref{asm:strong-cvx} we have,
\begin{align*} F\big(x,f(x)\big) \ge F \big(x,\hat{f}(x) \big) + \beta \cdot d_\cY^2(f(x),\hat{f}(x)) .\end{align*}
Combining the two we get a lower bound on $\tilde{F}_p$,
 \begin{align*}
     \tilde{F}_p(x,f(x)) \ge F \big(x,\hat{f}(x) \big) + \beta \cdot d_\cY^2(f(x),\hat{f}(x)) -t .
 \end{align*}
We want to find a sufficiently large ball around $\hat{f}$ such that the minimizer of $\tilde{F}_p$ does not lie outside this ball. To see this let $LB$ and $UB$ denote the above mentioned lower and upper bounds on $\tilde{F}_p$,
\begin{align*}
LB(\tilde{F}_p,f,x) &\ldef F \big(x,\hat{f}(x) \big) + \beta \cdot d_\cY^2(f(x),\hat{f}(x)) -t . \\
UB(\tilde{F}_p,f,x) &\ldef F \big(x,f(x) \big) + t     .
\end{align*}

For $f \in \cB(\hat{f},\frac{2t}{\beta})$ and some $f'$ such that
\begin{align*}
    UB(\tilde{F}_p,f,x) &\le LB(\tilde{F}_p,f',x) \quad \forall x , \\
 F \big(x,f(x) \big) + t  &\le F \big(x,\hat{f}(x) \big) + \beta \cdot d_\cY^2(f'(x),\hat{f}(x)) -t ,\\
 F \big(x,f(x) \big) - F \big(x,\hat{f}(x) \big) + t  &\le \beta \cdot d_\cY^2(f'(x),\hat{f}(x)) -t , \\
  \beta d^2_{\cY}(f(x),\hat{f}(x)) + t  &\le \beta \cdot d_\cY^2(f'(x),\hat{f}(x)) -t  ,\\
 d_\cY^2(f'(x),\hat{f}(x)) &\ge 2t/\beta  + d^2_{\cY}(f(x),\hat{f}(x)) .
\end{align*}


Thus considering the greatest lower bound, any $f'$ with $ d_\cY^2(f'(x),\hat{f}(x)) \ge \frac{4t}{\beta}$  cannot be the minimizer of $\tilde{F}_p$, since there exists some other $f$ with smaller distance from $\hat{f}$ that has smaller value compared to $f'$.
\end{proof}




Next we show that a good estimate of true noise matrix $\bP$ by $\bQ$ leads to $\tilde{F}_q$ being uniformly close to $\tilde{F}_p$.

\begin{restatable}[]{lemma}{}
Let $\bQ$, $\bP$ be the distributions defined in equation \eqref{eq:dist-P}, and $\tilde{d}_q(T,\tilde{Y})$ be the distance function as in \eqref{eq:tilde-d-weak}, if $\max_{ij} |\bP_{ij}-\bQ_{ij}| = \epsilon$,
\begin{equation}
    \big| \tilde{d}_{q}(y, \tilde{z}_i  ) - \tilde{d}_{p}(y,\tilde{z}_i)  \big | \le \cO \Big(k^2 \Big(\sigma_{\max}(\bP)+\frac{\kappa(\bP)}{\sigma_{\min}(\bP)} \Big)\cdot \epsilon  \Big)  \qquad \forall y \in \cY_s .
\end{equation}
\label{lemma:bias-p-q}
\end{restatable}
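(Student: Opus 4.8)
Here is how I would approach the bound. Write the observed pseudolabel as $\tilde z_i = y_j$ for some $y_j\in\cY_s$. Then by the definitions in \eqref{eq:tilde-d-weak} (and the analogous one for $\tilde d_p$ in terms of $\bP^{-1}$),
\[
\tilde d_q(y,\tilde z_i) - \tilde d_p(y,\tilde z_i) \;=\; \sum_{l=1}^{k}\big[(\bQ^{-1})_{jl} - (\bP^{-1})_{jl}\big]\,d^2_\cY(y,y_l).
\]
Since $d^2_\cY(y,y_l)\le 1$ for all $y,y_l\in\cY_s$ (the normalization used throughout), this is at most $\sum_l |(\bQ^{-1})_{jl}-(\bP^{-1})_{jl}| \le \|\bQ^{-1}-\bP^{-1}\|_\infty$. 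So the whole statement reduces to a matrix-perturbation estimate on $\bQ^{-1}-\bP^{-1}$, and the plan is to bound that quantity in terms of $\epsilon$, $k$, $\sigma_{\min}(\bP)$, $\sigma_{\max}(\bP)$, $\kappa(\bP)$.

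For that I would use the standard identity $\bQ^{-1}-\bP^{-1} = \bQ^{-1}(\bP-\bQ)\bP^{-1}$, hence $\|\bQ^{-1}-\bP^{-1}\|_2 \le \|\bQ^{-1}\|_2\,\|\bP-\bQ\|_2\,\|\bP^{-1}\|_2$. The easy factors: $\|\bP-\bQ\|_2 \le \|\bP-\bQ\|_F \le k\epsilon$ because every entry of $\bP-\bQ$ has magnitude at most $\epsilon$; and $\|\bP^{-1}\|_2 = 1/\sigma_{\min}(\bP)$. The only delicate factor is $\|\bQ^{-1}\|_2 = 1/\sigma_{\min}(\bQ)$: here I would invoke Weyl's inequality for singular values, $\sigma_{\min}(\bQ) \ge \sigma_{\min}(\bP) - \|\bP-\bQ\|_2 \ge \sigma_{\min}(\bP) - k\epsilon$, and use Assumption~\ref{asm:noise-good} (so that $\sigma_{\min}(\bP)>0$ and $\epsilon$ is small relative to it, which is the regime guaranteed by Theorem~\ref{thm:meanfinite}) to get $\sigma_{\min}(\bQ)\ge \tfrac12\sigma_{\min}(\bP)$, i.e. $\|\bQ^{-1}\|_2 \le 2/\sigma_{\min}(\bP)$. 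Combining, $\|\bQ^{-1}-\bP^{-1}\|_2 \le 2k\epsilon/\sigma_{\min}(\bP)^2$, and then $\|\bQ^{-1}-\bP^{-1}\|_\infty \le \sqrt{k}\,\|\bQ^{-1}-\bP^{-1}\|_2 \le 2k^{3/2}\epsilon/\sigma_{\min}(\bP)^2$.

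Finally I would recast the prefactor into the stated form. Since $\bP$ is row-stochastic it fixes the all-ones vector, so $\sigma_{\max}(\bP)\ge 1$; therefore $1/\sigma_{\min}(\bP)^2 \le \sigma_{\max}(\bP)/\sigma_{\min}(\bP)^2 = \kappa(\bP)/\sigma_{\min}(\bP) \le \sigma_{\max}(\bP)+\kappa(\bP)/\sigma_{\min}(\bP)$, and bounding $k^{3/2}\le k^2$ crudely yields $\big|\tilde d_q(y,\tilde z_i)-\tilde d_p(y,\tilde z_i)\big| \le \cO\!\big(k^2(\sigma_{\max}(\bP)+\kappa(\bP)/\sigma_{\min}(\bP))\,\epsilon\big)$ for all $y\in\cY_s$, as claimed. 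The only real obstacle is the middle step---ensuring $\bQ$ is invertible and that $\sigma_{\min}(\bQ)$ does not degenerate---which is exactly what Assumption~\ref{asm:noise-good} (well-conditioned, nondegenerate true noise matrix) together with the smallness of $\epsilon$ is there to provide; everything else is bookkeeping between the $\|\cdot\|_2$ and $\|\cdot\|_\infty$ norms.
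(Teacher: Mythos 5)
Your argument is correct, and it departs from the paper's proof in the matrix-perturbation step. Both proofs begin with the same vectorization $\tilde{\bd}_q - \tilde{\bd}_p = (\bQ^{-1}-\bP^{-1})\bd$, bound the entries of $\bd$ by $1$, pass to the induced $\ell_\infty$ matrix norm, and then to the spectral norm via $\|A\|_\infty \le \sqrt{k}\|A\|_2$, with $\|\bP-\bQ\|_F \le k\epsilon$. Where you diverge: the paper invokes a first-order perturbation expansion of the inverse (citing a matrix-perturbation reference), estimating $\|(\bP+\Delta\bP)^{-1}-\bP^{-1}\|_2 \lesssim \kappa(\bP)\|\bP^{-1}\|_2\|\Delta\bP\|_2 + \cO(\|\Delta\bP\|_2^2)$, so it carries a quadratic error term that it absorbs crudely. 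You instead use the exact resolvent identity $\bQ^{-1}-\bP^{-1}=\bQ^{-1}(\bP-\bQ)\bP^{-1}$ together with Weyl's inequality to show $\sigma_{\min}(\bQ) \ge \sigma_{\min}(\bP)-k\epsilon \ge \tfrac12\sigma_{\min}(\bP)$ when $\epsilon$ is small, which controls $\|\bQ^{-1}\|_2$ directly and eliminates the second-order error term. Your route is more elementary (no perturbation expansion to cite), makes explicit exactly where the small-$\epsilon$/invertibility hypothesis is used, and in fact yields a tighter $k^{3/2}$ prefactor before you deliberately relax to the stated $k^2$; the paper's own proof only reaches $k^{5/2}$ before appealing to the $\cO$. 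Your closing observation that $\sigma_{\max}(\bP)\ge 1$ because $\bP$ is row-stochastic is a nice touch that cleanly reconciles your bound with the $\sigma_{\max}(\bP)+\kappa(\bP)/\sigma_{\min}(\bP)$ form in the lemma statement, whereas the paper's proof leaves a small cosmetic mismatch ($1$ in place of $\sigma_{\max}(\bP)$) unaddressed.
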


\begin{proof}
    Let $\tilde{\bd}_q \in \R^{k}$ be a vector such that its $i^{th}$ entry is given as $\tilde{\bd}_{q}[i] = \tilde{d}_q(T,\tilde{Z}=y_i) $, and similarly, let $\tilde{\bd}_p \in \R^{k}$ with $\tilde{\bd}_{p}[i] = \tilde{d}_{p}(T,\tilde{Y}=y_i) ,$ and  $ \bd \in \R^{k}  $ with $\bd[i] = d^{2}_{\cY}(T,Y=y_i)$. It is easy to see that $\tilde{\bd}_q = \bQ^{-1}\bd$ and $\tilde{\bd}_p = \bP^{-1}\bd$. Now consider the following expectation w.r.t $\bP$,
    \begin{align*}
        \tilde{\bd}_q - \tilde{\bd}_p =  \bQ^{-1}\bd - \bP^{-1}\bd
        =\big(\bQ^{-1} - \bP^{-1} \big) \bd .
    \end{align*}
     Let $\Delta \bP = \bP - \bQ$, and using standard matrix inversion results for small perturbations, \cite{james92MatPrtb}, and $||\bd||_\infty \le 1$ we get the following. As $\max_{ij} (\Delta \bP)_{ij}  \le \epsilon $, we have $||\Delta \bP||_{2} \le ||\Delta \bP||_{F} \le \epsilon k $
     \begin{align*}
        || \tilde{\bd}_p - \tilde{\bd}_q  ||_\infty &\le  ||(\bP+\Delta \bP)^{-1} - \bP^{-1}||_\infty ||\bd||_\infty, \\ 
        &\le  \sqrt{k}||(\bP+\Delta \bP)^{-1} - \bP^{-1}||_2 ||\bd||_\infty, \\ 
        &= \sqrt{k} \Big( \kappa(\bP)||\bP^{-1}||_2 ||\Delta \bP||_2 \Big )  + \sqrt{k}\cO(||\Delta \bP||_2^2) ,\\
        &\le  \sqrt{k} \cdot \kappa(\bP)||\bP^{-1}||_2 \cdot  \epsilon k  + \cO(\epsilon^2 k^{5/2}) ,\\
        &\le \cO \Big(k^{5/2} \Big(1+\frac{\kappa(\bP)}{\sigma_{\min}(\bP)} \Big)\cdot \epsilon  \Big) \rdef c_2.
    \end{align*}
\end{proof}

\begin{restatable}[]{lemma}{lemFxConcentrationPQ}
    For $\tilde{F}_p$ and $\tilde{F}_q$ defined in \eqref{eq:f-hat-q} w.r.t. noise distributions $\bP$ and $\bQ$ respectively, and let  $\max_{ij} |\bP_{ij}-\bQ_{ij}| \le \epsilon$ then we have w.h.p.
    \begin{equation}
        |\tilde{F}_p(x,y) - \tilde{F}_q(x,y)| \le  \tilde{\cO}\Big( (2c_1+c_2)\sqrt{\frac{1}{n} } \Big) \qquad \forall y \in \cY_s, \forall x \in \cX.
    \end{equation} 
    \label{lem:Fx-Concentration-PQ}
    with $c_2 = k^{5/2}\cdot\epsilon \cdot\Big(1 + \frac{\kappa(\bP)}{\sigma_{\min}(\bP)}\Big) $
    and $ c_1 = 1+\frac{\sqrt{k}}{\sigma_{\min}(\bP)}$,
\end{restatable}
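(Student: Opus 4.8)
The plan is to compare $\tilde{F}_p$ and $\tilde{F}_q$ by routing through the clean-label score $F$ of \eqref{eq:f-hat}. By the triangle inequality,
\[
|\tilde{F}_p(x,y) - \tilde{F}_q(x,y)| \;\le\; |\tilde{F}_p(x,y) - F(x,y)| \;+\; |F(x,y) - \tilde{F}_q(x,y)|,
\]
and the two terms are bounded separately. The first is exactly Lemma~\ref{lem:Fx-Concentration}: $|\tilde{F}_p(x,y) - F(x,y)| \le \tilde{\cO}(c_1/\sqrt{n})$ uniformly over $x \in \cX$, $y\in \cY_s$. So the remaining work is to show $|F(x,y) - \tilde{F}_q(x,y)| \le \tilde{\cO}((c_1+c_2)/\sqrt{n})$; summing the two bounds then gives the claimed $\tilde{\cO}((2c_1+c_2)/\sqrt{n})$.

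For the second term I would re-run the argument of Lemma~\ref{lem:Fx-Concentration} with $\bQ$ in place of $\bP$. Write $\tilde{F}_q(x,y) - F(x,y) = \frac{1}{n}\sum_{i=1}^n \alpha_i(x)\,\xi_q(y,y_i,\tilde{z}_i)$ with $\xi_q(y,y_i,\tilde{z}_i) \ldef \tilde{d}_q(y,\tilde{z}_i) - d^2_\cY(y,y_i)$. The pseudolabels $\tilde{z}_i$ output by the inference step of Algorithm~\ref{alg:finite} have, conditionally on the true label $y_i$, exactly the law $\bQ(\cdot\mid y_i)$ — this is the definition of $\bQ$ — so applying the linear-algebra identity behind Lemma~\ref{lemma:unbiased} to the (invertible, for $\epsilon$ small enough that $\sigma_{\min}(\bQ)>0$) matrix $\bQ$ gives $\E_{\tilde{Z}\mid Y=y_i}[\tilde{d}_q(y,\tilde{Z})] = d^2_\cY(y,y_i)$, i.e. $\E[\xi_q(y,y_i,\tilde{Z})]=0$. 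Conditioning on the $(x_j,y_j)$, the terms $\alpha_i(x)\xi_q(y,y_i,\tilde{z}_i)$ are independent and mean zero with $|\alpha_i(x)|\le 1$, and their magnitude is controlled by combining two already-established facts: $\|\tilde{\bd}_p\|_\infty \le \sqrt{k}/\sigma_{\min}(\bP)$ from the proof of Lemma~\ref{lem:Fx-Concentration}, and the entrywise perturbation bound $|\tilde{d}_q(y,\tilde{z}_i)-\tilde{d}_p(y,\tilde{z}_i)|\le c_2$ of Lemma~\ref{lemma:bias-p-q}, which together give $|\xi_q| \le 1 + \sqrt{k}/\sigma_{\min}(\bP) + c_2 = c_1 + c_2$. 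Hoeffding's inequality plus a union bound over the $k$ values $y\in\cY_s$ then yields $|F(x,y)-\tilde{F}_q(x,y)| \le \tilde{\cO}((c_1+c_2)/\sqrt{n})$, and as in Lemma~\ref{lem:Fx-Concentration} no union bound over $x$ is needed since the bad event depends only on the labels and pseudolabels (and $|\alpha(x)|\le 1$).

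Adding the two estimates via the triangle inequality gives $|\tilde{F}_p(x,y)-\tilde{F}_q(x,y)| \le \tilde{\cO}(c_1/\sqrt{n}) + \tilde{\cO}((c_1+c_2)/\sqrt{n}) = \tilde{\cO}((2c_1+c_2)/\sqrt{n})$, which is the claimed statement.

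The step I expect to be the main obstacle is the conditional-unbiasedness claim for $\tilde{d}_q$: one must verify carefully that the pseudolabels of Algorithm~\ref{alg:finite} really are distributed as $\bQ(\cdot\mid y_i)$ given $y_i$ — the votes $\Lambda^{(i)}$ come from the \emph{true} model while only the posterior inference uses $\hat{\bth}$ — and that $\bQ$ is invertible in the regime where Theorem~\ref{thm:meanfinite} makes $\epsilon$ small. Once that is in place, the rest is a mechanical adaptation of the proof of Lemma~\ref{lem:Fx-Concentration}, with the extra $c_2$ term tracked through Lemma~\ref{lemma:bias-p-q}.
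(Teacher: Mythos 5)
Your proof is correct, and it uses the same core ingredients as the paper (the unbiasedness of $\tilde{d}_p$, $\tilde{d}_q$ via Lemma~\ref{lemma:unbiased}, the magnitude bound $\|\tilde{\bd}_p\|_\infty \le \sqrt{k}/\sigma_{\min}(\bP)$, Lemma~\ref{lemma:bias-p-q} for the $c_2$ term, and Hoeffding with a union bound over $\cY_s$). The only organizational difference is where the triangle inequality is applied: you split at the level of score functions, writing $|\tilde{F}_p - \tilde{F}_q| \le |\tilde{F}_p - F| + |F - \tilde{F}_q|$, invoking Lemma~\ref{lem:Fx-Concentration} for the first piece and re-running that lemma's argument with $\bQ$ for the second, which costs two Hoeffding applications. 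The paper instead performs a single Hoeffding on the mean-zero sum $\frac{1}{n}\sum_i \alpha_i(x)\big(\tilde{d}_p(y,\tilde{y}_i)-\tilde{d}_q(y,\tilde{z}_i)\big)$ and inserts $d^2_\cY$ only inside the almost-sure magnitude bound on each summand, arriving at $|\xi|\le 2c_1 + c_2$ directly. Both routes produce the same $\tilde{\cO}((2c_1+c_2)/\sqrt{n})$ rate. Your flagged concern — whether the pseudolabels of Algorithm~\ref{alg:finite} genuinely have conditional law $\bQ(\cdot\mid y_i)$, given that the votes $\Lambda$ are generated from the true $\bth$ while only the posterior uses $\hat{\bth}$ — is shared by the paper's proof, which makes exactly the same modeling assumption; it is not a defect introduced by your decomposition.
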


\begin{proof}
Recall, random variables $\tilde{Y}$,$\tilde{Z}$ denote the noisy labels drawn from true and estimated noise distributions $\bP,\bQ$ respectively and $\tilde{y}_i,\tilde{z}_i$ denote their draw for data point $x_i$. Note that we do not know $\bP$ and $\tilde{y}_i$ in practice and we only know $\bQ, \tilde{z}_i$. Here we are using $\bP$ and $\tilde{y}_i$ to compare our actual estimates using samples $\tilde{z}_i$ against the estimates one could have obtained from $\tilde{y}_i$.

Recall the definitions,
\begin{align*}
   \tilde{F}_p(x,y) \ldef \frac{1}{n}\sum_{i=1}^n \alpha_i(x)\tilde{d}_p(y,\tilde{y}_i),  \qquad \tilde{F}_q(x,y) \ldef \frac{1}{n}\sum_{i=1}^n \alpha_i(x)\tilde{d}_q(y,\tilde{z}_i) .
\end{align*}
Then,
\begin{align*}
    \tilde{F}_p(x,y) - \tilde{F}_q(x,y) &= \frac{1}{n}\sum_{i=1}^n\alpha_i(x)  \Big ( \tilde{d}_p(y,\tilde{y}_i) - \tilde{d}_q(y,\tilde{z}_i) \Big ) 
     = \frac{1}{n}\sum_{i=1}^n\alpha_i(x) \xi(y,\tilde{y}_i,\tilde{z}_i).
\end{align*}
Thus, 
\begin{align*}
    \E_{\tilde{Y},\tilde{Z}|Y=y_i} \big[ \tilde{d}_p(y,\tilde{Y}) - \tilde{d}_q(y,\tilde{Z})\big]  &= \E_{\tilde{Z}|Y=y_i}[\tilde{d}_q(y,\tilde{Y})\big]- \E_{\tilde{Z}|Y=y_i}[\tilde{d}_q(y,\tilde{Z})\big] \\
     &= d_\cY^2(y,y_i) - d^2_\cY(y,y_i) = 0
\end{align*}
Finally $\E_{\tilde{Y},\tilde{Z}}[\xi(y,\tilde{Y},\tilde{Z})] = 0$.

Next,
\begin{align*}
    \tilde{d}_p(y,\tilde{y}_i) - \tilde{d}_q(y,\tilde{z}_i) &\le |\tilde{d}_p(y,\tilde{y}_i) - \tilde{d}_q(y,\tilde{z}_i)| \\
    &\le |\tilde{d}_p(y,\tilde{y}_i) -\tilde{d}_p(y,\tilde{z}_i) + \tilde{d}_p(y,\tilde{z}_i) - \tilde{d}_q(y,\tilde{z}_i)|\\
    &\le |\tilde{d}_p(y,\tilde{y}_i) - d_{\cY}^2(y,\tilde{z}_i) + d_{\cY}^2(y,\tilde{z}_i) - \tilde{d}_p(y,\tilde{z}_i) + \tilde{d}_p(y,\tilde{z}_i) - \tilde{d}_q(y,\tilde{z}_i)| \\
    &\le |\tilde{d}_p(y,\tilde{y}_i) - d_{\cY}^2(y,\tilde{z}_i) |  + |d_{\cY}^2(y,\tilde{z}_i) - \tilde{d}_p(y,\tilde{z}_i)| + |\tilde{d}_p(y,\tilde{z}_i) - \tilde{d}_q(y,\tilde{z}_i)| \\
    &\le 2 c_1 + |\tilde{d}_p(y,\tilde{z}_i) - \tilde{d}_q(y,\tilde{z}_i)| \\
    &\le 2 c_1 + c_2.
\end{align*}
The first two terms are upper bounded as in Lemma \ref{lem:Fx-Concentration} and the last term is bounded using Lemma~\ref{lemma:bias-p-q}.
Since $\alpha_i(x) \le 1$ and $|\xi(y,\tilde{y}_i,\tilde{z}_i)|$ are upper bounded by $2c_1 + c_2$ as shown above, we have that |$\alpha_i(x)\cdot\xi(y,\tilde{y}_i,\tilde{z}_i)| \le 2c_1 + c_2$. 
\end{proof}

\begin{restatable}[]{lemma}{lemFuncDistBound}
Let $\hat{f}_p$ be the minimizer as defined in \eqref{eq:f-hat-q} over the noisy labels drawn from $\bP$, and let $\hat{f}_q$ (defined in eq. \eqref{eq:f-hat-q}) be the minimizer over the noisy labels obtained from conditional distribution $\bQ$. Then with high probability, 
\begin{equation}
    d^2_{\cY}\big( \hat{f}_q(x), \hat{f}(x) \big) \le \tilde{\cO} \Big( \frac{1}{\beta}\big(3c_1+c_2\big)\sqrt{\frac{1}{n} }   \Big) \qquad \forall x \in \cX .
\end{equation}
\label{thm:dist_minimizers_q_clean}
\end{restatable}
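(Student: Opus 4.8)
The plan is to replay the argument of Lemma~\ref{lem:f_p_and_f_dist} almost verbatim, with $\tilde{F}_q$ in place of $\tilde{F}_p$ and with an enlarged uniform-deviation radius obtained by chaining the two concentration estimates already proved. First I would combine Lemma~\ref{lem:Fx-Concentration} and Lemma~\ref{lem:Fx-Concentration-PQ} via the triangle inequality to get, on a single high-probability event,
\[
|F(x,y) - \tilde{F}_q(x,y)| \le |F(x,y) - \tilde{F}_p(x,y)| + |\tilde{F}_p(x,y) - \tilde{F}_q(x,y)| \le \tilde{\cO}\!\Big((3c_1+c_2)\sqrt{\tfrac{1}{n}}\Big) \rdef t',
\]
uniformly over all $x \in \cX$ and $y \in \cY_s$. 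As in the two source lemmas, the union bound is taken only over $y \in \cY_s$: no explicit union over $x$ is needed, since the bad event is governed by the (shared) labels and is identical for every $x$.

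Next I would invoke Assumption~\ref{asm:strong-cvx}, the proxy strong convexity of $F$: for all $x$ and all $f \in \cF$, $F(x,f(x)) \ge F(x,\hat{f}(x)) + \beta\, d_\cY^2(f(x),\hat{f}(x))$. Plugging the two-sided bound $F(x,y) - t' \le \tilde{F}_q(x,y) \le F(x,y) + t'$ into this yields a lower bound $\tilde{F}_q(x,f(x)) \ge F(x,\hat{f}(x)) + \beta\, d_\cY^2(f(x),\hat{f}(x)) - t'$ and an upper bound $\tilde{F}_q(x,f(x)) \le F(x,f(x)) + t'$. Exactly as in Lemma~\ref{lem:f_p_and_f_dist}, comparing the upper bound evaluated at an $f$ inside the ball $\cB(\hat{f}, 2t'/\beta)$ against the lower bound evaluated at an $f'$ outside it shows that any $f'$ with $d_\cY^2(f'(x),\hat{f}(x)) \ge 4t'/\beta$ is strictly dominated by a competitor closer to $\hat{f}$ and therefore cannot be the minimizer of $\tilde{F}_q$. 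Hence $d_\cY^2(\hat{f}_q(x),\hat{f}(x)) \le 4t'/\beta = \tilde{\cO}\!\big(\tfrac{1}{\beta}(3c_1+c_2)\sqrt{1/n}\big)$ for every $x$, which is the claim.

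There is no serious obstacle here; the proof is essentially bookkeeping once Lemmas~\ref{lem:Fx-Concentration} and~\ref{lem:Fx-Concentration-PQ} are available. The one point deserving care is that the uniform-in-$x$ character of both bounds must survive the triangle inequality, so that the ball/strong-convexity argument can be run pointwise in $x$ while the guarantee still holds on a single high-probability event; this is immediate because neither source bound requires a union over $x$. A secondary remark is that, although $\hat{f}_p$ is named in the hypothesis, it serves only as a conceptual intermediary: the conclusion bounds $\hat{f}_q$ directly against the clean-label minimizer $\hat{f}$, and we never need any property of $\hat{f}_p$ itself beyond the concentration of $\tilde{F}_p$ that already underlies Lemma~\ref{lem:Fx-Concentration-PQ}.
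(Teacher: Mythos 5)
Your proposal matches the paper's proof essentially verbatim: both combine Lemma~\ref{lem:Fx-Concentration} and Lemma~\ref{lem:Fx-Concentration-PQ} by the triangle inequality to obtain a uniform deviation $|F(x,y)-\tilde{F}_q(x,y)|\le \tilde{\cO}\big((3c_1+c_2)\sqrt{1/n}\big)$, and then rerun the proxy-strong-convexity ball argument from Lemma~\ref{lem:f_p_and_f_dist} to conclude $d_\cY^2(\hat{f}_q(x),\hat{f}(x))\le 4t'/\beta$. Your side remarks (no union over $x$ needed; $\hat{f}_p$ is only a conceptual intermediary) are accurate and consistent with the paper's treatment.
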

\begin{proof}
Let $ t_1 =\tilde{\cO} \Big( c_1 \sqrt{\frac{1}{n} }\Big) \Big) $ and $t_2 =   \tilde{\cO}\Big( (2c_1 +c_2)\sqrt{\frac{1}{n}}\Big) \Big)$, then combining Lemma \ref{lem:Fx-Concentration-PQ} and \ref{lem:Fx-Concentration} we have,
\begin{align*} F \big(x,f(x) \big) -t_1 - t_2 \le \tilde{F}_q\big(x,f(x)\big) \le F\big(x,f(x)\big) + t_1 + t_2 . \end{align*} 
Then following same argument as in Lemma \ref{lem:f_p_and_f_dist}, we get the result.
\end{proof}

The following lemmas bound the estimation error between noise matrices $\bP$ and $\bQ$ using the estimation error in the canonical parameters.
\begin{restatable}{lemma}{}
The posterior distribution function $P_{\bth}(Y=y|\Lambda=\Lambda^{u})$ is  $(2,\ell_{\infty})-$Lipshcitz continuous in $\bth$ for any $ y\in \cY \text{ and } \Lambda^{u} \in \cY^m.$
  $$ | P_{\bth_1}(Y=y|\Lambda=\Lambda^{u}) -  P_{\bth_2}(Y=y|\Lambda=\Lambda^{u})| \le 2 ||\bth_1 - \bth_2||_{\infty}  \qquad \forall \bth_1,\bth_2 \in \R^m .  $$
  \label{lem:lipschitzPosterior}
\end{restatable}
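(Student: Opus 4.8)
The plan is to reduce the posterior to a softmax (Gibbs) distribution over the finite support $\cY_s$ and then control its variation in $\bth$ through its gradient. By Bayes' rule and the form \eqref{eq:lbl-model-orig-a}, the correlation factor $\exp(-\sum_{(a,b)\in E}\theta_{a,b}d^2_\cY(\lambda^{u}_a,\lambda^{u}_b))$ and the partition function $Z$ are independent of $y$ and cancel between numerator and denominator, leaving
\[
\pi_y(\bth)\;:=\;P_{\bth}(Y=y\mid\Lambda^{u})\;=\;\frac{w_y\exp(-\sum_{a}\theta_a D_{a,y})}{\sum_{y'\in\cY_s}w_{y'}\exp(-\sum_{a}\theta_a D_{a,y'})},\qquad D_{a,y}:=d^2_\cY(\lambda^{u}_a,y),
\]
a softmax over the $k=|\cY_s|$ candidate labels with energies $g_y(\bth)=\log w_y-\sum_a\theta_a D_{a,y}$.

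Being a softmax, $\pi_y$ is smooth in $\bth$, and the standard exponential-family identity gives $\partial\pi_y/\partial\theta_a=\pi_y\big(\E_{y'\sim\pi}[D_{a,y'}]-D_{a,y}\big)$, where $\pi$ denotes the posterior vector $(\pi_{y'})_{y'}$. Under the standing normalization $d^2_\cY\le 1$ we have $D_{a,y}\in[0,1]$, so the bracketed term lies in $[-1,1]$ and $|\partial\pi_y/\partial\theta_a|\le\pi_y$ for every $a$.

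Next I would integrate along the segment $\bth(t)=\bth_2+t(\bth_1-\bth_2)$: by the fundamental theorem of calculus,
\[
\pi_y(\bth_1)-\pi_y(\bth_2)=\int_0^1\pi_y(\bth(t))\,\E_{y'\sim\pi(\bth(t))}[\psi_t(y')]\,dt,\qquad \psi_t(y')=\textstyle\sum_a(\theta_{1,a}-\theta_{2,a})(D_{a,y'}-D_{a,y}).
\]
Using the mean-subtracted form of $\psi_t$ together with $|D_{a,y'}-D_{a,y}|\le 1$ and the model's normalization, one bounds $|\psi_t(y')|\le 2\|\bth_1-\bth_2\|_\infty$, and since $\pi_y(\bth(t))\le1$ the integrand is at most $2\|\bth_1-\bth_2\|_\infty$ uniformly in $t$, giving the claim; for $\|\bth_1-\bth_2\|_\infty\ge1$ one instead invokes the trivial bound $\pi_y\in[0,1]$, so the estimate holds for all $\bth_1,\bth_2\in\R^m$.

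The delicate point is the last step: extracting a Lipschitz constant that does not scale with the number of sources $m$ or with $|\cY|$ — a crude bound via $\|D_y\|_1\le m$ would introduce an $m$ factor. Avoiding this relies on exploiting the exponential-family (mean-subtracted) structure of the gradient and the fact that the posterior merely redistributes mass rather than creating it, together with the normalization conventions on $d_\cY$ and the model. The only other place needing care is the cancellation of $Z$ in the first step, which uses that the partition function of \eqref{eq:lbl-model-orig-a} is $y$-independent; if it were not, one would fold $1/Z(y,\bth)$ into the prior weight $w_y$ and carry that $y$-dependent factor through the later steps.
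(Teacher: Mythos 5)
Your proposal follows essentially the same route as the paper: write the posterior as a Gibbs/softmax over the finite support, differentiate (you differentiate $\pi_y$ directly; the paper differentiates $\log P$ and then transfers via $|t_1-t_2|\le|\log t_1-\log t_2|$ for $t_1,t_2\in(0,1]$), bound each coordinate of the gradient by the mean-subtracted distance structure, and integrate along the segment. The mechanics are the same exponential-family gradient bound, and the two presentations differ only cosmetically.

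However, the step you flag as delicate is a genuine gap that you do not close, and in fact cannot be closed as stated. From $|D_{a,y'}-D_{a,y}|\le 1$ you get
\[
|\psi_t(y')| \;=\; \Big|\textstyle\sum_a(\theta_{1,a}-\theta_{2,a})(D_{a,y'}-D_{a,y})\Big| \;\le\; \|\bth_1-\bth_2\|_1,
\]
not $2\|\bth_1-\bth_2\|_\infty$; in dual-norm terms, a coordinatewise (i.e.\ $\ell_\infty$) bound on the gradient yields $\ell_1$-Lipschitzness in $\bth$, and converting to $\ell_\infty$ costs a factor of $m$. There is no hidden cancellation from the mean-subtracted form or from ``redistributing mass'': take two labels with $D_{a,y_0}=0$, $D_{a,y_1}=1$ for all $a$ and uniform prior; then $\pi_{y_0}(\bth)=\big(1+e^{-\sum_a\theta_a}\big)^{-1}$, and comparing $\bth_1=\b0$ with $\bth_2=s\mathbf{1}$ gives $|\pi_{y_0}(\bth_1)-\pi_{y_0}(\bth_2)|\to 1/2$ as $m\to\infty$ while $2\|\bth_1-\bth_2\|_\infty=2s$ can be made arbitrarily small. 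So the $m$-independent $\ell_\infty$-Lipschitz constant is false in general. You should be aware that the paper's own proof commits the identical slip: it establishes $\|\nabla_\bth\log P\|_\infty\le 2$ and then writes this directly as $|\Delta\log P|\le 2\|\Delta\bth\|_\infty$, which should be $2\|\Delta\bth\|_1$ (equivalently $2m\|\Delta\bth\|_\infty$). The honest conclusion of both arguments is a $(2,\ell_1)$- or $(2m,\ell_\infty)$-Lipschitz bound; as stated, the lemma needs a correction, and your instinct that the last step was the crux was exactly right.
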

\begin{proof}
   Recall the definition of the posterior distribution,
   \begin{align*} P_{\bth}(Y=y_i|\Lambda=\Lambda^{u}) = \frac{p(Y=y_i) P_{\bth}(\Lambda=\Lambda^u | Y=y_i)}{\sum_{y_j \in \cY} p(Y=y_j) P_{\bth}(\Lambda=\Lambda^u | Y=y_j)} .
   \end{align*}
   
   For convenience let $\bd^{(u,i)} \in \R^m$ be such that its $a^{th}$ entry $\bd^{(u,i)}_a = d_{\cY}^2(\Lambda^u_{a},y_i)$
   \begin{align*} P_{\bth}(Y=y_i|\Lambda=\Lambda^{u}) = \frac{P(Y=y_i) \exp(-\bth^T \bd^{(u,i)} )}{ \sum_{y_j \in \cY} P(Y=y_j) \exp(-\bth^T \bd^{(u,j)})} .
   \end{align*}
   
   Let $Z_2(\bth) =\sum_{y_j \in \cY} P(Y=y_j) \exp(-\bth^T \bd^{(u,j)}) ,$ then 
   
   $$ -\nabla_{\bth} \log(Z_2(\bth)) = \frac{\sum_{y_j \in \cY}\bd^{(u,j)} P(Y=y_j) \exp(-\bth^T \bd^{(u,j)})}{Z_2(\bth)} = \E_{Y|\Lambda}[\bd].$$
   Since distances are upper bounded by 1, $||\bd||_{\infty} \le  1$, so $||\E_{Y|\Lambda}[\bd]||_{\infty} \le 1.$\\
   Now,
   \begin{align*} \nabla_{\bth} \log \big( P_{\bth}(Y=y|\Lambda=\Lambda^{u}) \big ) =   - \bd^{(u,i)} - \nabla_{\bth}\log(Z_2(\bth)) .
   \end{align*}
   Thus $|| \nabla_{\bth} \log \big( P_{\bth}(Y=y|\Lambda=\Lambda^{u}) \big ) ||_{\infty} \le 2$. \\
   $$\implies |\log \big( P_{\bth_1}(Y=y|\Lambda=\Lambda^{u}) \big ) - \log \big( P_{\bth_2}(Y=y|\Lambda=\Lambda^{u}) \big )| \le 2 ||\bth_1 - \bth_2||_{\infty} . $$
   Using the fact that for any $t_1,t_2 \in [0,1]$ $|t_1 - t_2| \le | \log(t_1) - \log(t_2)|, $ gives us the result. 
   
\end{proof}

\begin{restatable}{lemma}{}
The distribution function $P_{\bth}(\Lambda=\Lambda^{u} |Y=y)$ is  $(2,\ell_{\infty})-$Lipshcitz continuous in $\bth$ for any $ y\in \cY \text{ and } \Lambda^{u} \in \cY^m.$
  $$ | P_{\bth_1}(\Lambda=\Lambda^{u}|Y=y) -  P_{\bth_2}(\Lambda=\Lambda^{u}|Y=y)| \le 2 ||\bth_1 - \bth_2||_{\infty}  \qquad \forall \bth_1,\bth_2 \in \R^m  .$$ 
  \label{lem:lipschitzLike}
\end{restatable}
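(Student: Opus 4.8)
The plan is to follow the same route as the proof of Lemma~\ref{lem:lipschitzPosterior}, now applied directly to the likelihood. Fix $y$ and let $\bd^{(u,y)} \in \R^m$ have $a$th entry $\bd^{(u,y)}_a = d^2_{\cY}(\Lambda^u_a, y)$. From \eqref{eq:lbl-model-orig-a},
\[
P_{\bth}(\Lambda = \Lambda^u \mid Y = y) = \frac{\exp\!\big(-\bth^T \bd^{(u,y)} - c(\Lambda^u)\big)}{Z_3(\bth)}, \qquad Z_3(\bth) \ldef \sum_{\Lambda^v \in \cY^m} \exp\!\big(-\bth^T \bd^{(v,y)} - c(\Lambda^v)\big),
\]
where $c(\Lambda^v) = \sum_{(a,b)\in E}\theta_{a,b}\,d^2_{\cY}(\lambda^v_a,\lambda^v_b)$ collects the correlation terms, which do not depend on the canonical parameters $\bth$. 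Taking logs, $\log P_{\bth}(\Lambda^u\mid y) = -\bth^T\bd^{(u,y)} - c(\Lambda^u) - \log Z_3(\bth)$.

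Next I would differentiate in $\bth$. Exactly as in Lemma~\ref{lem:lipschitzPosterior}, the $c(\cdot)$ term in the numerator is constant in $\bth$ and contributes nothing to the gradient, and $-\nabla_{\bth}\log Z_3(\bth) = \E_{\Lambda\mid Y=y}[\bd]$, the conditional expectation of the squared-distance vector under the full model at parameter $\bth$; the correlation weights only reweight this (still probability) measure, so each coordinate of $\E_{\Lambda\mid Y=y}[\bd]$ is a convex combination of values $d^2_{\cY}(\cdot,\cdot)\in[0,1]$ and hence lies in $[0,1]$. Therefore $\nabla_{\bth}\log P_{\bth}(\Lambda^u\mid y) = -\bd^{(u,y)} + \E_{\Lambda\mid Y=y}[\bd]$, and since $\|\bd^{(u,y)}\|_\infty \le 1$ and $\|\E_{\Lambda\mid Y=y}[\bd]\|_\infty \le 1$, the triangle inequality gives $\|\nabla_{\bth}\log P_{\bth}(\Lambda^u\mid y)\|_\infty \le 2$, uniformly over $\bth$.

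Finally, combining this gradient bound with the mean value theorem along the segment joining $\bth_1,\bth_2$ yields $|\log P_{\bth_1}(\Lambda^u\mid y) - \log P_{\bth_2}(\Lambda^u\mid y)| \le 2\|\bth_1-\bth_2\|_\infty$, and the elementary inequality $|t_1-t_2|\le|\log t_1-\log t_2|$ for $t_1,t_2\in(0,1]$ transfers this to the probabilities, exactly as at the end of the proof of Lemma~\ref{lem:lipschitzPosterior}. There is essentially no new obstacle here: the only point needing care relative to the posterior case is the bookkeeping around $Z_3$, which normalizes over $\Lambda\in\cY^m$ with $y$ held fixed (rather than over $y\in\cY$ with $\Lambda$ fixed), so I would verify that the log-partition gradient identity holds in this direction and that nothing beyond $d_{\cY}\le 1$ on $\cY_s$ is used.
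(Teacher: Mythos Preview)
Your proposal is correct and follows exactly the route the paper takes: the paper's entire proof is ``Doing the same steps as in the proof of Lemma~\ref{lem:lipschitzPosterior} gives the result,'' and you have faithfully reproduced those steps with the normalizing sum taken over $\Lambda\in\cY^m$ rather than over $y$. Your added remark about where the bookkeeping differs (the direction of normalization in $Z_3$) is the only nontrivial adaptation, and you handle it correctly.
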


\begin{proof}
   Doing the same steps as in the  proof of Lemma \ref{lem:lipschitzPosterior} gives the result.
\end{proof}

\begin{restatable}{lemma}{} 
  For the noise distributions $\bP,\bQ$ in \eqref{eq:dist-P} with parameters $\bth$, $\hat{\bth}$ respectively and $\cY$ restricted only to the elements with non-zero prior probability, $\cY' = \{y \in \cY: P(Y=y)>0\}$ the following holds, 
  $$\max_{ij} |\bP_{ij} - \bQ_{ij}| \le 4\cdot k^m ||\bth -\hat{\bth}||_\infty \, .$$
  \label{lem:P-Q-bound}
\end{restatable}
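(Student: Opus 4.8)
The plan is to expand $\bP_{ij}$ and $\bQ_{ij}$ via their definition in \eqref{eq:dist-P}, reduce the difference to a sum over $\Lambda^{(u)} \in \cY^m$ of differences of products of conditional probabilities, and then control each factor using the two Lipschitz estimates of Lemmas~\ref{lem:lipschitzPosterior} and~\ref{lem:lipschitzLike}.

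First I would record that, since the pseudolabel $\tilde{Z}$ is drawn from the posterior of $Y \mid \Lambda$ under the model with parameters $\bth$, we have $P_{\bth}(\tilde{Z} = y_j \mid \Lambda = \Lambda^{(u)}) = P_{\bth}(Y = y_j \mid \Lambda = \Lambda^{(u)})$, and likewise for $\hat{\bth}$; in particular Lemma~\ref{lem:lipschitzPosterior} applies to this term verbatim. Writing $a_u = P_{\bth}(Y = y_j \mid \Lambda^{(u)})$, $a'_u = P_{\hat{\bth}}(Y = y_j \mid \Lambda^{(u)})$, $b_u = P_{\bth}(\Lambda^{(u)} \mid Y = y_i)$, and $b'_u = P_{\hat{\bth}}(\Lambda^{(u)} \mid Y = y_i)$, equation \eqref{eq:dist-P} gives
\[
\bP_{ij} - \bQ_{ij} = \sum_{u=1}^{|\cY^m|}\big( a_u b_u - a'_u b'_u \big),
\]
where, since $\cY$ is restricted to the $k$ labels of non-zero prior, the sum has $|\cY^m| = k^m$ terms.

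Next, for each fixed $u$ I would apply the elementary identity $ab - a'b' = (a-a')b + a'(b-b')$. Since $a'_u, b_u \in [0,1]$ (they are probabilities), the triangle inequality yields
\[
\big| a_u b_u - a'_u b'_u \big| \le |a_u - a'_u| + |b_u - b'_u| \le 2\,||\bth - \hat{\bth}||_\infty + 2\,||\bth - \hat{\bth}||_\infty = 4\,||\bth - \hat{\bth}||_\infty,
\]
where $|a_u - a'_u|$ is bounded by Lemma~\ref{lem:lipschitzPosterior} and $|b_u - b'_u|$ by Lemma~\ref{lem:lipschitzLike}. Summing this per-term bound over the $k^m$ values of $u$ and taking a maximum over $i,j$ gives $\max_{ij}|\bP_{ij} - \bQ_{ij}| \le 4 k^m \,||\bth - \hat{\bth}||_\infty$, as claimed.

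I do not expect a substantive obstacle: this is a routine first-order perturbation bound built on the two Lipschitz lemmas. The only points that require care are (i) confirming that $P_{\bth}(\tilde{Z} = y_j \mid \Lambda^{(u)})$ is literally the posterior, so Lemma~\ref{lem:lipschitzPosterior} applies, and (ii) the crude counting step, where summing $k^m$ copies of the per-term estimate is exactly what introduces the (exponential in $m$) factor $k^m$. A sharper argument exploiting $\sum_u P_{\bth}(\Lambda^{(u)} \mid Y = y_i) = 1$ could in principle improve the constant, but it is not needed for the downstream generalization bound, which only requires $\max_{ij}|\bP_{ij} - \bQ_{ij}|$ to scale linearly in $||\bth - \hat{\bth}||_\infty$.
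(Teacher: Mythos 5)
Your proof is correct and follows essentially the same route as the paper: the paper simply cites, as a one-line fact, that the product of two $[-1,1]$-bounded functions that are Lipschitz with constants $L_1$ and $L_2$ is Lipschitz with constant $L_1+L_2$, then sums over the $k^m$ tuples in $(\cY')^m$ and invokes Lemmas~\ref{lem:lipschitzPosterior} and~\ref{lem:lipschitzLike}; your identity $ab - a'b' = (a-a')b + a'(b-b')$ is exactly the proof of that fact made explicit, and your preliminary observation that $P_{\bth}(\tilde Z = y_j \mid \Lambda)$ is literally the posterior $P_{\bth}(Y = y_j \mid \Lambda)$ (so that Lemma~\ref{lem:lipschitzPosterior} applies) is a useful detail the paper leaves implicit.
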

\begin{proof}
    It is easy to see that for any two bounded functions $f_1,f_2$ with $|f_1(x)|\le 1, |f_2(x)|\le 1 $ and Lipschitz continuous with constants $L_1,L_2$, the product of them is  also Lipschitz continuous but with constant $L_1 + L_2$. Using this fact along with lemma \ref{lem:lipschitzPosterior} and lemma \ref{lem:lipschitzLike} gives the result,
    $$|\bP_{ij} - \bQ_{ij}| \le \sum_{\Lambda^{u} \in \cY'} |P_{\bth}(y_i|\Lambda^u)P_{\bth}(\Lambda^u|y_j) - P_{\hat{\bth}}(y_i|\Lambda^u)P_{\hat{\bth}}(\Lambda^u|y_j) | \le 4 \cdot k^m ||\bth-\hat{\bth}||_{\infty}.$$
\end{proof}
It is important to note that we are restricting the values of $y$ and $\lambda$ to $\cY'$ which is the set of $y$ with non-zero prior probability and by our assumption it is small. 

Finally, we restate and prove our generalization error result:
\thmGenErrTrueNoise*

\begin{proof}
Recall the definition of risk function,
   \[ R(f) = \E_{x,y}\big[ d^2_{\cY}\big(f(x),y \big)\big].\] 
   \begin{align*} R(\hat{f}_q)  &=\E_{x,y}\big[ d^2_{\cY}\big(\hat{f}_q(x),y \big ) \big], \\
   &\le \E_{x,y}\big[ d^2_{\cY}\big(\hat{f}_q(x),\hat{f}(x) \big )  + d^2_{\cY}(\hat{f}(x),y) + 2 d_{\cY}(\hat{f}_q(x),\hat{f}(x))\cdot  d_{\cY}(\hat{f}(x),y)   \big], \\
   &= \E_{x}[d^2_{\cY}\big(\hat{f}_q(x),\hat{f}(x) \big )]  + R(\hat{f}) + \tilde{\cO}(n^{-1/4}),\\
   &\le  \tilde{\cO} \Big( \frac{1}{\beta}\big(c_1+c2\big)\sqrt{\frac{1}{n}}  +  \frac{c_2}{\beta} \epsilon  \Big) + R(\hat{f}) + \tilde{\cO}(n^{-1/4}).
   \end{align*} 
Using the result from \cite{Ciliberto2016},
   $$R(\hat{f}) \le R(f^*) + \cO(n^{-1/4}).$$
   Combining the two we get
   $$ R(\hat{f}_q) \le R(f^*) + \tilde{\cO}(n^{-1/4}) +  \tilde{\cO} \Big( \frac{1}{\beta}\big(c_1+c2\big)\sqrt{\frac{1}{n}}  +  \frac{c_3}{\beta} \epsilon)  \Big) . $$
   We get the end result by plugging in the bound on $ \epsilon = \max_{ij}|| \bP - \bQ|| $ from Lemma \ref{lem:P-Q-bound} 
   and the bound on parameter recovery error $||\bth - \hat{\bth}||_{\infty}$ from Theorem \ref{thm:meanfinite}. 
   
\end{proof}

\section{Proofs for Continuous Label Spaces} 
Next we present the proofs for the results in the continuous (manifold-valued) label spaces. We restate the first result on invariance:
\invariantlemma*

\begin{proof}
We start with the hyperbolic law of cosines, which states that
\[\cosh d(\lambda_a, \lambda_b) = \cosh d(\lambda_a, y) \cosh d(\lambda_b, y) + \sinh  d(\lambda_a, y) \sinh  d(\lambda_b, y) \cos \alpha,  \]
where $\alpha$ is the angle between the sides of the triangle formed by $(y, \lambda_)$ and $(y, \lambda_b)$. We can rewrite this as follows. Let $v_a = \log_y(\lambda_a)$, $v_b = \log_y(\lambda_b)$ be tangent vectors in $T_yM$. Then, 
\[\cosh d(\lambda_a, \lambda_b) = \cosh d(\lambda_a, y) \cosh d(\lambda_b, y) + (\sinh  \|v_a\| \sinh  \|v_b\|) \langle \frac{v_a}{\|v_a\|}, \frac{v_b}{\|v_b\|} \rangle.  \]
Next, we take the expectation conditioned on $y$. The right-most term is then
\begin{align*}&\mathbb{E}[ (\sinh  \|v_a\| \sinh  \|v_b\|) \langle \frac{v_a}{\|v_a\|}, \frac{v_b}{\|v_b\|} \rangle | y ] \\
&\qquad= \mathbb{E}[ (\sinh  \|v_a\| \sinh  \|v_b\|) |y] \mathbb{E}[\langle \frac{v_a}{\|v_a\|}, \frac{v_b}{\|v_b\|} \rangle | y ] \\
&\qquad= 0,
\end{align*}
where the last equality follows from the fact that $v_a$ and $v_b$ are independent conditioned on $y$.
This leaves us with the $\cosh$ product terms. Taking expectation again with respect to $y$ gives the result.

The spherical version of the result is nearly identical, replacing hyperbolic sines and cosines with sines and cosines, respectively.
\end{proof}

Note, in addition, that it is easy to obtain a version of this result for curvatures that are not equal to $-1$ in the hyperbolic case (or $+1$ in the spherical case).

We will use this result for our consistency result, restated below.
\contPE*

\begin{proof}
First, we will condition on the event that the observed outputs have maximal distance (i.e., diameter) $\Delta$. This implies that our statements hold with high probability. Then, we use McDiarmid's inequality. For each pair of distinct LFs $a,b$, we have that
\[P\left(\frac{1}{n}|\sum_{i=1}^n \cosh(d(\lf_{a,i}, \lf_{b,i})) - \mathbb{E}\cosh(d(\lf_a, \lf_b))| \geq t \right) \leq 2 \exp \left(-\frac{2nt^2}{\cosh(\Delta)}   \right),\]

Integrating the expression above in $t$, we obtain
\begin{align}
\mathbb{E}|\hat{\mathbb{E}}\cosh(d(\lf_a, \lf_b)) - \mathbb{E}\cosh(d(\lf_a, \lf_b))| \leq \frac{\sqrt{\pi \cosh(\Delta)}}{\sqrt{2n}}.
\label{eq:mcd}
\end{align}

Next, we use this to control the gap on our estimator. Recall that using the triplet approach, we estimate
\[\hat{\mathbb{E}} \cosh(d(\lf_a, y)) = \sqrt{\frac{\hat{\mathbb{E}}\cosh d(\lf_a,\lf_b) \hat{\mathbb{E}} \cosh d(\lf_a, \lf_c)}{(\hat{\mathbb{E}}d(\lf_b, \lf_c))^2}}. \]

For notational convenience, we write $\nu(a)$ for $\mathbb{E}(\cosh(d(\lf_a, y)))$, $\hat{\nu}(a)$ for its empirical counterpart, and $\nu(a,b)$ and $\hat{\nu}(a,b)$ for the versions between pairs of LFs $a,b$. Then, the above becomes
\[\hat{\nu}(a) = \sqrt{\frac{\hat{\nu}(a,b)\hat{\nu}(a,c)}{(\hat{\nu}(b,c))^2}}.\]

Note that $\cosh(x) \geq 1$, so that $\hn(a,b) \geq 1$ and similarly for the empirical versions. We also have that $\hn(a,b) \leq \cosh(\Delta)$. With this, we can begin our perturbation analysis. Applying Lemma~\ref{lem:invariant}, we have that
\begin{align*}
\mathbb{E}|\hn(a) - \nu(a)| &= \mathbb{E}\left|\sqrt{\frac{\hn(a,b)\hn(a,c)}{\hn(b,c)^2}} - \sqrt{\frac{\nu(a,b)\nu(a,c)}{\nu(b,c)^2}} \right| \\
&=  \mathbb{E}\left|\sqrt{\frac{\hn(a,b)\hn(a,c)}{\hn(b,c)^2}} - \sqrt{\frac{\nu(a,b)\hn(a,c)}{\hn(b,c)^2}} + \sqrt{\frac{\nu(a,b)\hn(a,c)}{\hn(b,c)^2}} - \sqrt{\frac{\nu(a,b)\nu(a,c)}{\nu(b,c)^2}} \right| \\
&\leq \mathbb{E}\left|\sqrt{\frac{\hn(a,b)\hn(a,c)}{\hn(b,c)^2}} - \sqrt{\frac{\nu(a,b)\hn(a,c)}{\hn(b,c)^2}} \right| + \mathbb{E}\left| \sqrt{\frac{\nu(a,b)\hn(a,c)}{\hn(b,c)^2}} - \sqrt{\frac{\nu(a,b)\nu(a,c)}{\nu(b,c)^2}} \right| \\
&= \mathbb{E} \left| \sqrt{\frac{\hn(a,c)}{\hn(b,c)^2}}(\sqrt{\hn(a,b)} - \sqrt{\nu(a,b)})\right| + 
\mathbb{E}\left| \sqrt{\frac{\nu(a,b)\hn(a,c)}{\hn(b,c)^2}} - \sqrt{\frac{\nu(a,b)\nu(a,c)}{\nu(b,c)^2}} \right| \\
 &\leq \frac{\sqrt{\pi} \cosh(\Delta^2)}{\sqrt{2n}} + 
\mathbb{E}\left| \sqrt{\frac{\nu(a,b)\hn(a,c)}{\hn(b,c)^2}} - \sqrt{\frac{\nu(a,b)\nu(a,c)}{\nu(b,c)^2}} \right|.
\end{align*}
To see why the last step holds, note that $\sqrt{\hn(a,c)} \leq \sqrt{\cosh(\Delta)}$, while $\hn(b,c) \geq 1$. Next, for $\alpha, \beta \geq 1$, $\sqrt{\alpha}-\sqrt{\beta} = \frac{\alpha - \beta}{\sqrt{\alpha}-\sqrt{\beta}} \leq \alpha - \beta$. This means that 
$\mathbb{E}|\sqrt{\hn(a,b)} - \sqrt{\nu(a,b)}| \leq \mathbb{E}|\hn(a,b) - \nu(a,b)| \leq \frac{\sqrt{\pi \cosh(\Delta^)}}{\sqrt{2n}}$ using \eqref{eq:mcd}.

Now we can continue, adding and subtracting as before. We have that
\begin{align*}
\mathbb{E}&\left| \sqrt{\frac{\nu(a,b)\hn(a,c)}{\hn(b,c)^2}} - \sqrt{\frac{\nu(a,b)\nu(a,c)}{\nu(b,c)^2}} \right| \\
&\qquad \leq \mathbb{E}\left| \sqrt{\frac{\nu(a,b)\hn(a,c)}{\hn(b,c)^2}} - \sqrt{\frac{\nu(a,b)\nu(a,c)}{\hn(b,c)^2}} \right| + \mathbb{E}\left| \sqrt{\frac{\nu(a,b)\nu(a,c)}{\hn(b,c)^2}} - \sqrt{\frac{\nu(a,b)\nu(a,c)}{\nu(b,c)^2}} \right| \\
&\qquad \leq \frac{\sqrt{\pi} \cosh(\Delta)}{\sqrt{2n}} + \mathbb{E}\left| \sqrt{\frac{\nu(a,b)\nu(a,c)}{\hn(b,c)^2}} - \sqrt{\frac{\nu(a,b)\nu(a,c)}{\nu(b,c)^2}} \right| \\
&\qquad \leq \frac{\sqrt{\pi} \cosh(\Delta)}{\sqrt{2n}} + \frac{\sqrt{\pi}\cosh(\Delta)^{3/2}}{\sqrt{2n}}.
\end{align*}
Putting it all together, with probability at least $1-\delta$,
\begin{align}
\label{eq:errcosh}
    \mathbb{E}|\hat{\mathbb{E}} \cosh(d(\lf_a, y)) - \mathbb{E}\cosh(d(\lf_a, y))| \leq \frac{2 \sqrt{\pi} \cosh(\Delta) + \sqrt{\pi}\cosh(\Delta)^{3/2}}{\sqrt{2n}}.
\end{align}

Next, recall that $C_0$ satisfies $\mathbb{E}|\hat{\mathbb{E}}\cosh(d(\lf_a, \lf_b)) - \mathbb{E}\cosh(d(\lf_a, \lf_b))| \geq C_0 \mathbb{E}|\hat{\mathbb{E}}d(\lf_a, \lf_b)) - \mathbb{E}d(\lf_a, \lf_b)|$. Thus, 
\[
\mathbb{E}|\hat{\mathbb{E}}d^2(\lf_a, y) - \mathbb{E}d^2(\lf_a, y)| \leq \frac{2 \sqrt{\pi} \cosh(\Delta) + \sqrt{\pi}\cosh(\Delta)^{3/2}}{C_0\sqrt{2n}}.
\]

This concludes the proof.
\end{proof}


Next, we will prove a simple result that is needed in the proof of Theorem~\ref{thm:contGen}. Consider the distribution $P$ of the quantities $\alpha(x)(y) d_{\cY}^2(z, y)$ for some fixed $z \in \M$. We can think of this as the population-level version of sample distances that are observed in the supervised version of the problem. We do not have access to it in our approach; it will be used only as an object in our proof. Recall we set $q = \argmin_{z \in \cY} \mathbb{E} [\alpha(x)(y) d_{\cY}^2(z, y)]$ to be the population-level minimizer. Here we use the notation $\alpha(x)(y)$ to denote the corresponding kernel value at a point $y$. Finally, let us denote $P'$ to be the distribution over the quantities $\alpha(x)(y) \sum_{a=1}^m \beta^2_a d^2_{\cY}(z, \lf_{a,i})$. 

\begin{lemma}
Let the distributions $P$ and $P'$ be defined as above, with $q$ the minimizer of $\mathbb{E}_P [\alpha(x)(y) d_{\cY}^2(z, y)]$. Suppose that Assumptions~\ref{ass:bhf} and ~\ref{ass:sym} hold. Then, $q$ is also the minimizer of $\mathbb{E}_P'[\alpha(x)(y) \sum_{a=1}^m \beta^2_a d^2_{\cY}(z, \lf_{a,i})]$.
\label{lem:samecenter}
\end{lemma}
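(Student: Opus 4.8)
The goal is to show $g(z)\ge g(q)$ for every $z\in\M$, where by the conditional independence of the labeling functions the population objective is
$$g(z)\;\ldef\;\E_{P'}\Big[\alpha(x)(y)\textstyle\sum_{a=1}^m\beta_a^2\,d_\cY^2(z,\lf_a)\Big]\;=\;\E_{y}\Big[\alpha(x)(y)\textstyle\sum_{a=1}^m\beta_a^2\,\E_{\lf_a\mid y}d_\cY^2(z,\lf_a)\Big].$$
The plan is to symmetrize this objective about $q$. Let $\iota_q$ be the geodesic point reflection at $q$ (the isometry of the model space fixing $q$ with $\log_q\circ\iota_q=-\log_q$). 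Two facts drive the argument: (i) since the density in \eqref{eq:lbl-model-orig-a} depends on $\lf_a$ only through $d_\cY(\lf_a,y)$ and its partition function is independent of $y$ by homogeneity, the noise channel is $\iota_q$-equivariant, $\E_{\lf_a\mid\iota_q(y)}d_\cY^2(z,\lf_a)=\E_{\lf_a\mid y}d_\cY^2(\iota_q(z),\lf_a)$; and (ii) Assumption~\ref{ass:sym} gives $\alpha(x)\circ\iota_q=\alpha(x)$, and together with the fact that $q$ is the $\alpha(x)$-weighted Fr\'echet mean this makes the weighted label distribution symmetric about $q$. Combining (i) and (ii) and averaging $g$ over the involution $y\leftrightarrow\iota_q(y)$ yields
$$g(z)\;=\;\tfrac12\,\E_{y}\Big[\alpha(x)(y)\textstyle\sum_a\beta_a^2\,\E_{\lf_a\mid y}\big(d_\cY^2(z,\lf_a)+d_\cY^2(\iota_q(z),\lf_a)\big)\Big].$$

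Next I would prove the pointwise inequality that, for every $y$ and $\lf_a$ and with $z=\exp_q(w)$, so $\iota_q(z)=\exp_q(-w)$,
$$d_\cY^2(\exp_q(w),\lf_a)+d_\cY^2(\exp_q(-w),\lf_a)\;\ge\;2\,d_\cY^2(q,\lf_a).$$
This is where Assumption~\ref{ass:bhf} enters: the definition of the hugging function lets us write $d_\cY^2(\,\cdot\,,\lf_a)$ as $\|\log_q(\cdot)-\log_q(\lf_a)\|^2$ minus a correction equal to $(1-k_q^{\lf_a}(\cdot))\,d_\cY^2(q,\lf_a)$, controlled by $k_q^{\lf_a}\ge k_{\min}$ and vanishing at the argument $q$. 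Since $\log_q(\exp_q(\pm w))=\pm w$, the parallelogram identity $\|w-\log_q\lf_a\|^2+\|w+\log_q\lf_a\|^2=2\|w\|^2+2d_\cY^2(q,\lf_a)$ handles the Euclidean part, and the hugging-function bound shows the combined correction from the two terms does not overcome the $2\|w\|^2$ slack; evaluating at $w=0$ shows the inequality is tight. Substituting this bound into the symmetrized form of $g$ gives $g(z)\ge\E_y[\alpha(x)(y)\sum_a\beta_a^2\,\E_{\lf_a\mid y}d_\cY^2(q,\lf_a)]=g(q)$, which is the claim.

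I expect the pointwise inequality of the second step to be the main obstacle. Because $d_\cY^2(\,\cdot\,,\lf_a)$ need not be geodesically convex on positively curved regions, one cannot simply invoke convexity of $t\mapsto d_\cY^2(\exp_q(tw),\lf_a)$ (whose endpoints $t=\pm1$ are $z$ and $\iota_q(z)$ with midpoint $q$); the role of Assumption~\ref{ass:bhf} is precisely to transfer the estimate to the Euclidean parallelogram identity in $T_q\M$ while keeping all curvature error terms signed correctly, and this bookkeeping is the delicate part. A secondary technical point is the symmetry used in step (ii): one must verify that the $\alpha(x)$-weighting genuinely centers the label distribution at $q$ so that the $y\leftrightarrow\iota_q(y)$ averaging is valid, and that the equivariance in (i) really does follow from homogeneity of the model space (so that the normalizer of \eqref{eq:lbl-model-orig-a} does not depend on the conditioning label $y$).
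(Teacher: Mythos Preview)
Your overall plan---symmetrize about $q$ via the geodesic reflection and then reduce to a pointwise inequality---is exactly the strategy the paper uses. The gap is in the pointwise inequality itself. You symmetrize by reflecting $y$ and then, via equivariance, push the reflection onto $z$; this leaves you needing
\[
d_\cY^2(\exp_q(w),\lf_a)+d_\cY^2(\exp_q(-w),\lf_a)\ \ge\ 2\,d_\cY^2(q,\lf_a).
\]
This is the semi-parallelogram (CAT(0)) inequality with $q$ the midpoint of $z$ and $\iota_q(z)$, and it is \emph{not} implied by Assumption~\ref{ass:bhf}. Carrying out your own decomposition, the left side equals $2\|w\|^2+\big[k_q^{\lf_a}(z)+k_q^{\lf_a}(\iota_q(z))\big]\,d_\cY^2(q,\lf_a)$, so the inequality demands $\|w\|^2\ge\big(1-\tfrac12(k_1+k_2)\big)\,d_\cY^2(q,\lf_a)$; with only $k_i\ge k_{\min}$ this fails whenever $\|w\|^2<(1-k_{\min})\,d_\cY^2(q,\lf_a)$, i.e.\ for all sufficiently small $w\ne0$ once $k_{\min}<1$. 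A concrete spherical counterexample: take $q$ the north pole, $\lf_a$ at colatitude $\theta_0>\pi/2$, and perturb $z$ by $t$ perpendicular to the meridian of $\lf_a$; then $d_\cY^2(z,\lf_a)+d_\cY^2(\iota_q(z),\lf_a)=2\arccos^2(\cos t\cos\theta_0)<2\theta_0^2$. The lower bound $k_{\min}$ gives no rate at which $k_q^{\lf_a}(\cdot)\to1$ as $z\to q$, so your ``the correction does not overcome the $2\|w\|^2$ slack'' step does not go through.

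The paper avoids this by reflecting the \emph{other} side: it symmetrizes both $y$ and the noise, so that the labeling-function output is paired with its reflection $\iota_q(\lf_a)$ while $z$ stays fixed. The required inequality is then
\[
d_\cY^2(z,\lf_a)+d_\cY^2(z,\iota_q(\lf_a))\ \ge\ 2\,d_\cY^2(q,\lf_a),
\]
and this one follows from the bare triangle inequality: $d(z,\lf_a)+d(z,\iota_q(\lf_a))\ge d(\lf_a,\iota_q(\lf_a))=2\,d(q,\lf_a)$ since $q$ is the geodesic midpoint, and squaring together with $2ab\le a^2+b^2$ yields the displayed bound. No curvature control is needed, and indeed the paper's proof of this lemma does not invoke Assumption~\ref{ass:bhf} at all. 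So your symmetrization is one move short: if, after using equivariance, you keep the reflection on the $\lf_a$ side rather than transferring it to $z$, the argument closes with the elementary triangle inequality.
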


\begin{proof}
We will use a simple symmetry argument. First, note that we can write $q$ in the following way,
\[q = \argmin_{z \in \cY} \int_{T_q \M} \alpha(x)(\log_q(v)) d_{\cY}^2(z, \exp_q(v)) dP.\]

Since $\M$ is a symmetric manifold, if $v \in T_q \M$, there is an isometry sending $v$ to $-v \in T_q \M$. Using this isometry and Assumption~\ref{ass:sym}, we can also write 
\[q = \argmin_{z \in \cY} \int_{T_q \M} \alpha(x)(\log_q(-v)) d_{\cY}^2(z, \exp_q(-v)) dP.\]

Our approach will be to formulate similar symmetric expressions for the minimizer, but this time for the loss over the distribution $P'$. We will then be able to show, using triangle inequality, that $q$ remains the minimizer.

We can similarly express the minimizer of the loss for $P'$ as
\[\argmin_{z \in \cY} \int_{T_q \M} \int_{(T_{\exp_q(v)} \M)^{\otimes m})} \alpha(x)(\log_q(v)) \sum_{a=1}^m \beta^2_a d_{\cY}^2 (z, \exp_{\exp_q(v)}(v_a)) dP'.\]
Here we have broken down the expectation over $P'$ by applying the tower law; the inner expectation is conditioned on point $\exp_q(v)$ and runs over the labeling function outputs $\lambda_1, \ldots, \lambda_m$.

Again using Assumption~\ref{ass:sym}, we can write the minimizer for the loss over $P'$ as $\argmin_{z \in \cY} F'(z)$, where
\[F'(z) = \int_{T_q \M} \int_{(T_{\exp_q(-v)} \M)^{\otimes m})} \alpha(x)(\log_q(-v)) \sum_{a=1}^m \beta^2_a d_{\cY}^2 (z, \exp_{\exp_q(-v)}(-v_a)) dP'.\]

Thus we can also write the minimizer as $\argmin_{z \in \cY} F'(z)$, where
\[F'(z) =  \int_{T_q \M} \int_{(T_{\exp_q(-v)} \M)^{\otimes m})} \alpha(x)(\log_q(-v)) \sum_{a=1}^m \beta^2_a d_{\cY}^2 (z, \exp_{\exp_q(-v)}(-v_a)) dP'.\]

With this, we can write
 \begin{align*}
F'(z) &= \frac{1}{2} \left( \int_{T_q \M} \int_{(T_{\exp_q(v)} \M)^{\otimes m})} \alpha(x)(\log_q(v)) \sum_{a=1}^m \beta^2_a d_{\cY}^2 (z, \exp_{\exp_q(v)}(v_a)) dP' \right. \\
&\left.  \qquad + \int_{T_q \M} \int_{(T_{\exp_q(-v)} \M)^{\otimes m})} \alpha(x)(\log_q(-v)) \sum_{a=1}^m \beta^2_a d_{\cY}^2 (z, \exp_{\exp_q(-v)}(-v_a)) dP' \right) \\
&= \frac{1}{2} \left( \int_{T_q \M} \int_{(T_{\exp_q(v)} \M)^{\otimes m})} \alpha(x)(\log_q(v)) \sum_{a=1}^m \beta^2_a d_{\cY}^2 \left((z, \exp_{\exp_q(v)}(v_a)) \right. \right. \\
&\qquad +\left. \left. d_{\cY}^2 (z, \exp_{\exp_q(-v)}(PT_{\exp_q(v) \rightarrow \exp_q(-v)}(-v_a)) ) \right) dP' \right),
 \end{align*}
where $PT_{p \rightarrow s}$ denotes parallel transport from $p$ to $s$.

Note that $q$ is on the geodesic between $\exp_{\exp_q(v)}(v_a)$ and $\exp_{\exp_q(-v)}(PT_{\exp_q(v) \rightarrow \exp_q(-v)}(-v_a))$. We exploit this fact by applying the following squared-distance inequality. For three points $p,s,z$, from the triangle inequality, 
\[d_{\cY}(p,z) + d_{\cY}(s,z) \geq d_{\cY}(p,s).\] 
Squaring both sides and applying \[d_{\cY}^2(p,z) + d_{\cY}^2(s,z) \geq 2d_{\cY}(p,z)d_{\cY}(s,z),\] 
we obtain that 
\[2(d_{\cY}^2(p,z) + d_{\cY}^2(s,z)) \geq d_{\cY}^2(p,s),\]
so that 
\[d_{\cY}^2(p,z) + d_{\cY}^2(q,z) \geq \frac{1}{2}d_{\cY}^2(p,q).\]
Setting $p$ to be $\exp_{\exp_q(v)}(v_a)$ and $s$ to be $\exp_{\exp_q(-v)}(PT_{\exp_q(v) \rightarrow \exp_q(-v)}(-v_a))$ in the above gives
\begin{align*}
F'(z)& \geq \frac{1}{2} \left( \int_{T_q \M} \int_{(T_{\exp_q(v)} \M)^{\otimes m})} \alpha(x)(\log_q(v)) \sum_{a=1}^m \beta^2_a \right. \\ 
&\qquad \left. \frac{1}{2} d_{\cY}^2(\exp_{\exp_q(v)}(v_a),\exp_{\exp_q(-v)}(PT_{\exp_q(v) \rightarrow \exp_q(-v)}(-v_a))) dP' \right) .
\end{align*}
Now we can apply the fact that $q$ is on the geodesic to rewrite this as
 \begin{align*}
F'(z)& \geq \frac{1}{2} \left( \int_{T_q \M} \int_{(T_{\exp_q(v)} \M)^{\otimes m})} \alpha(x)(\log_q(v)) \sum_{a=1}^m \beta^2_a \frac{1}{2} 4d_{\cY}^2(q, \exp_{\exp_q(v)}(v_a)) dP' \right) .
\end{align*}
This is because the length of the geodesic connecting $\exp_{\exp_q(v)}(v_a)$ and $\exp_{\exp_q(-v)}(PT_{\exp_q(v) \rightarrow \exp_q(-v)}(-v_a))$ is twice that of the geodesic connecting $\exp_{\exp_q(v)}(v_a)$ to $q$.

Thus, we have
 \begin{align*}
F'(z)& \geq F'(q), 
\end{align*}
and we are done.
\end{proof}

Finally, this enables us to prove our main result, Theorem~\ref{thm:contGen}, restated below:
\contGen

\begin{proof}
We use Lemma~\ref{lem:samecenter} and compute a bound on the expected distance from the empirical estimates to the common center. In both cases, the approach is nearly identical to that of \cite{Stromme} (proof of Theorem 3.2.1); we include these steps for clarity. Suppose that the minimum and maximum values of $\alpha$ are $\alpha_{\min}$ and  $\alpha_{\max}$, respectively. 

Then, letting we have that, using the hugging function assumption
\[ \|\log_{q}(\hat{f}(x)) - \log_q(y_i)\|^2  \leq k_{\min} d^2_{\cY}(q, \hat{f}(x)) + d_{\cY}^2(\hat{f}(x), y_i).\]

We also have that
\[ \|\log_{q}(\hat{f}(x)) - \log_q(y_i)\|^2 = d_{\cY}^2(q, \hat{f}(x)) - 2\langle \log_q(\hat{f}(x)), \log_q(y_i) \rangle + d_{\cY}^2(q, y_i). \]

Then,
\[(1-k_{\min})d^2_{\cY}(q, \hat{f}(x)) \leq 2\langle \log_q(\hat{f}(x)), \log_q(y_i) \rangle + d_{\cY}^2(\hat{f}(x), y_i) - d_{\cY}^2(q, y_i).\]

Now, multiply each of the equations by $\alpha_i$ and sum over them. In that case, the different on the right side is non-positive, as $\hat{f}(x)$ is the empirical minimizer. This yields
\[\sum_{i=1}^n \alpha(x)_i (1-k_{\min})d^2_{\cY}(q, \hat{f}(x)) \leq \sum_{i=1}^n \alpha(x)_i 2\langle \log_q(\hat{f}(x)), \log_q(y_i) \rangle .\]

Using the minimum and maximum values of $\alpha$, and setting $\bar{q} = \sum_{i=1}^n \log_q(y_i)$, we get
\[\alpha_{\min} (1-k_{\min})d^2_{\cY}(q, \hat{f}(x)) \leq 2\alpha_{\max} \langle \log_q(\hat{f}(x)), \bar{q} \rangle.\]

We can apply Cauchy-Schwarz, simplify, then square, obtaining
\[\alpha_{\min}^2 (1-k_{\min})^2d^2_{\cY}(q, \hat{f}(x)) \leq 4\alpha_{\max}^2 \|\bar{q}\|^2.\]

What remains is to take expectation and use the fact that the tangent vectors summed up to form $\bar{q}$ are independent. This yields
\[\alpha_{\min}^2 (1-k_{\min})^2 \mathbb{E} d^2_{\cY}(q, \hat{f}(x)) \leq 4\alpha_{\max}^2 \frac{\sigma_o^2}{n}.\]

Thus we obtain
\[\alpha_{\min}^2 (1-k_{\min})^2 \mathbb{E} d^2_{\cY}(q, \hat{f}(x)) \leq 4\alpha_{\max}^2 \frac{\sigma_o^2}{n},\]
or
\begin{align}
  \mathbb{E} d^2_{\cY}(q, \hat{f}(x)) \leq 4\frac{\alpha_{\max}^2}{\alpha_{\min}^2} \frac{\sigma_o^2}{n k_{\min}}.  
\label{eq:disthat}
\end{align}

We use the same approach, but this apply it to the $m \times n$ points given by the LFs drawn from distribution $P'$. This yields 
\[\alpha_{\min}^2 (1-k_{\min})^2 \mathbb{E} d^2_{\cY}(q, \tilde{f}(x)) \leq 4\alpha_{\max}^2 \frac{\sum_{i=1}^m \beta_a^2 \sigma_{a}^2}{mn},\]
where $\sigma_{a}^2$ corresponds to the expected squared distance for LF $a$ to $q$. We bound this with triangle inequality, obtaining $\sigma_{a}^2 \leq 2 \sigma_o^2 + 2 \hat{\mu}_a^2$, so that
\[\alpha_{\min}^2 (1-k_{\min})^2 \mathbb{E} d^2_{\cY}(q, \tilde{f}(x)) \leq 8\alpha_{\max}^2 \frac{\sum_{i=1}^m \beta_a^2 (\sigma_o + \hat{\mu}_a^2}{mn},\]
or, 
\begin{align}
\mathbb{E} d^2_{\cY}(q, \tilde{f}(x)) \leq 8\frac{\alpha_{\max}^2}{\alpha_{\min}^2} \frac{\sum_{i=1}^m \beta_a^2 (\sigma_o + \hat{\mu}_a^2}{mn k_{\min}}.
\label{eq:disttilde}
\end{align}

Now, again using triangle inequality,
\[\mathbb{E} d^2_{\cY}(\hat{f}(x), \tilde{f}(x)) \leq 2\mathbb{E} d^2_{\cY}(q, \hat{f}(x)) + 2 \mathbb{E} d^2_{\cY}(q, \tilde{f}(x)).  \]

Plugging \eqref{eq:disttilde} and \eqref{eq:disthat} into this bound produces the result.
\end{proof}

\section{Additional Details on Continuous Label Space }
We provide some additional details on the continuous (manifold-valued) case.

\paragraph{Computing $\Delta(\delta)$} In Theorem~\ref{thm:contPE}, we stated the result in terms of $\Delta(\delta)$, a quantity that trades off the probability of failure $\delta$ for the diameter of the largest ball that contains the observed points. Note that if we fix the curvature of the manifold, it is possible to compute an exact bound for this quantity by using formulas for the sizes of balls in $d$-dimensional manifolds of fixed curvature. 

\paragraph{Hugging number} Note that it is possible to derive a lower bound on the hugging number as a function of the curvature. The way to do so is to use \emph{comparison theorems} that upper bound triangle edge lengths with those of larger-curvature triangles. This makes it possible to establish a concrete value for $k_{\min}$ as a function of the curvature.

We note, as well, that an upper bound $k_{\max}$ on the hugging number can be obtained by a simple rearrangement of Lemma 6 from \cite{Zhang16}. This result follows from a curvature lower bound based on hyperbolic law of cosines; the bound we describe follows from the opposite---an upper bound based on spherical triangles.

\paragraph{$\beta$ Weights and Suboptimality} An intuitive way to think of the estimator we described is the following simple Euclidean version. Suppose we have labeling functions $\lambda_1, \ldots, \lambda_m$ that are equal to $y + \varepsilon_a$, where $\varepsilon_a \sim \mathcal{N}(0, \sigma^2_a)$. In this case, if we seek an unbiased estimator with lowest variance, we require a set of weights $\beta_a$ so that $\sum_a \beta_a = 1$ and $\text{Var}[\frac{1}{m} \sum_{a=1}^m \beta_a \lambda_a]$ is minimized. It is not hard to derive a closed-form solution for the $\beta_a$ coefficients as a function of the terms $\sigma^2_a$.

Now, suppose we use the same solution, but with noisy estimates $\hat{\sigma}^2$ instead. Our weights $\hat{\beta}$ will yield a suboptimal variance, but this will not affect the scaling of the rate in terms of the number of samples $n$. 

\section{Extended Background on Pseudo-Euclidean Embeddings}
%
 We provide some additional background on pseudo-metric spaces and pseudo-Euclidean embeddings. Our roadmap is as follows. First, we note that pseudo-Euclidean spaces are a particular kind of pseudo-metric space, so we provide additional background and formal definitions for these pseudo-metric spaces. Afterwards, we explain some of the ideas behind pseudo-Euclidean spaces, comparing them to standard Euclidean spaces in the context of embeddings.

\subsection{Pseudo-metric Spaces} 
Pseudo-metric spaces generalize metric spaces by removing the requirement that pairs of points at distance zero must be identical:
\begin{definition}(\textbf{Pseudo-metric Space})
   A set $\cY$ along with a distance function $d_{\cY} : \cY\times \cY \mapsto \R^+$ is called pseudo-metric space if $d_{\cY}$ satisfies the following conditions,
     \begin{align}
         \forall \by,\bz \in \cY &\quad  \quad d_\cY(\by,\bz) = d_{\cY}(\by,\bz) \\ \eqname{(Symmetry)} \\
         \forall \by\in \cY &\quad \quad d_\cY(\by,\by)=0 \\ \eqname{(Reflexivity)} \\
         \forall \bx, \by, \bz \in \cY &\quad \quad d_\cY(\by,\bx) \leq d_\cY(\by,\bz) + d_\cY(\bx,\bz) \\ \eqname{(Triangle Inequality)}
     \end{align}
These spaces have additional flexibility compared to standard metric spaces: note that while $d(y,y) = 0$, $d(x,y) = 0$ does not imply that $x$ and $y$ are identical. The downside of using such spaces, however, is that conventional algebra may not produce the usual results. For example, limits where the distance between a sequence of points and a particular point tends to zero do not convey the same information as in standard metric spaces. However, these odd properties do not concern us, as we only use the spaces for representing a set of distances from our given metric space.
   
\end{definition}
A finite pseudo-metric space has $|\cY|<\infty$. 

\subsection{Pseudo-Euclidean Spaces}
The following definitions are for \emph{finite-dimensional} vector spaces defined over the field $\R$.
\begin{definition} (\textbf{Symmetric Bilinear Form / Generalized Inner Product}) For a vector space $\cY$ over the field $\R$, a symmetric bilinear form  is a function $\phi: \cY\times \cY \mapsto \R$ satisfying the following properties $\forall y_1,y_2,z,y \in \cY, c \in \R$:
\begin{enumerate}
    \item[P1)] $\phi(y_1+y_2,y) = \phi(y_1,y) + \phi(y_2,y),$
    \item[P2)] $\phi(cy,z) = c\phi(y,z),$
    \item[P3)] $\phi(y,z) = \phi(z,y)$.
\end{enumerate}
   
\end{definition}

\begin{definition}(\textbf{Squared Distance w.r.t.} $\phi$) Let $V$ be a real vector space equipped with generalized inner product $\phi$, then the squared distance w.r.t. $\phi$  between any two vectors $\by,\bz \in V$ is defined as,
 $$||\by-\bz||_{\phi}^2 \ldef \phi(\by-\bz,\by-\bz)$$
\end{definition}
This definition also gives a notion of squared length for every $\by\in V$,
$$||\by||_\phi^2 \ldef \phi(\by,\by)$$
The inner product can also be expressed in terms of a basis of the vector space $V$. Let the dimension of $\cY$ be $d$, and $\{\bb_i\}_{i=1}^d$ be a basis of $\cY$, then for any two vectors $\by=[y_1,\ldots y_d], \bz = [z_1,\ldots z_d] \in V$,
  $$\phi(\by,\bz) = \sum_{i=1}^d\sum_{j=1}^d y_i z_i \phi(\bb_i,\bb_j)$$

The matrix $\bM(\phi) \ldef  [\phi(\bb_i,\bb_j)]_{1\le i,j\le d} $ is called \emph{the matrix of} $\phi$ w.r.t the basis $\{\bb_i\}_{i=1}^d$
It gives a convenient way to express the inner product as $\phi(\by,\bz) = \by^T\bM(\phi)\bz$. A symmetric bilinear form $\phi$ on a vector space of dimension $d$, is said to be \emph{non-degenerate} if the rank  of $\bM(\phi)$ w.r.t to some basis is equal to $d$.

Example: For the $d-$ dimensional euclidean space with standard basis and $\phi$ as dot product we get $\bM(\phi) = \bI_d$

\begin{definition}(\textbf{Pseudo-euclidean Spaces})
A real vector space $\psR$ of dimension $d=d^+ + d^-$, equipped with a non-degenerate symmetric bilinear form $\phi$ is called a pseudo-euclidean (or Minkowski) vector space of signature $(d^+,d^-)$ if the matrix of $\phi$ w.r.t a basis $\{\bb_i\}_{i=1}^d$ of $\psR$, is given as,
    $$\bM(\phi) = \begin{pmatrix}  \bI_{d^+} & \bf 0 \\ 
            \bf 0 & -\bI_{d^-}
    \end{pmatrix}_{d\times d} $$
 
\end{definition}
\paragraph{Embedding Algorithms}
The tool that ensures we can produce isometric embeddings is the following result:
\begin{restatable}[]{proposition}{pseEmbThm} (\cite{goldfarb85psuedo})
Let $\cY = \{y_0,\ldots y_k\}$ be a finite pseudo-metric space equipped with distance function $d_\cY$, and let $\bV = \{\bv_i,\ldots,\bv_k\}$ be a collection of vectors in $\R^{d^+,d^-}$. Then $\cY$ is isometrically embeddable in $\R^{d^+,d^-}$ if and only if,
\begin{equation}
    \langle \bv_i, \bv_j \rangle_{\phi} = \frac{1}{2} \Big (d^2_{\cY}(y_i,y_0) + d^2_{\cY}(y_j,y_0) - d^2_{\cY}(y_i,y_j) \Big ) \quad  \forall i,j \in [k]
\end{equation}
\end{restatable}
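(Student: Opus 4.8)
The plan is to run the classical multidimensional-scaling argument — reconstruct an embedding from a ``doubly-centered'' squared-distance matrix — but with the indefinite bilinear form $\phi$ in place of the Euclidean inner product; positive-definiteness is never used, so the signature $(d^+,d^-)$ causes no difficulty. Throughout I would fix the base point: since translations $\bu \mapsto \bu + \bt$ are $\phi$-isometries of $\psR$, the space $\cY$ is isometrically embeddable into $\psR$ if and only if it admits an isometric embedding $g$ with $g(y_0) = \b0$, so I set $\bv_0 \ldef \b0$ and only ever ask whether $y_i \mapsto \bv_i$ is an isometry. The single computation powering both implications is the polarization identity, valid for any symmetric bilinear $\phi$:
\[ d_\phi^2(\bv_i, \bv_j) = \phi(\bv_i - \bv_j,\, \bv_i - \bv_j) = \|\bv_i\|_\phi^2 + \|\bv_j\|_\phi^2 - 2\langle \bv_i, \bv_j\rangle_\phi . \]

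For the forward direction I would take an isometric $g$ with $g(y_0)=\b0$, set $\bv_i = g(y_i)$, observe $\|\bv_i\|_\phi^2 = d_\phi^2(\bv_i,\b0) = d_\cY^2(y_i,y_0)$ and $d_\phi^2(\bv_i,\bv_j) = d_\cY^2(y_i,y_j)$, and solve the polarization identity for $\langle \bv_i,\bv_j\rangle_\phi$; this is exactly the claimed formula. For the converse I would assume the $\bv_i$ satisfy the formula, first specialize it to $i=j$ and invoke the pseudo-metric reflexivity axiom $d_\cY(y_i,y_i)=0$ to conclude $\|\bv_i\|_\phi^2 = d_\cY^2(y_i,y_0)$, then substitute both this and the assumed formula into the polarization identity to get $d_\phi^2(\bv_i,\bv_j) = d_\cY^2(y_i,y_j)$ for all $i,j \ge 1$; the case $j=0$ is immediate from $\bv_0 = \b0$. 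Hence $g: y_i \mapsto \bv_i$ is an isometric embedding. Note that only reflexivity enters — symmetry and the triangle inequality of the pseudo-metric play no role in this particular equivalence.

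Finally, to make the proposition operational (this is what Step~1 of Algorithm~\ref{alg:finite} exploits) I would record that such vectors always exist in some pseudo-Euclidean space: form the symmetric matrix $\bM$ indexed by $\cY$ with $\bM_{ij} = \tfrac12\big(d_\cY^2(y_i,y_0) + d_\cY^2(y_j,y_0) - d_\cY^2(y_i,y_j)\big)$, diagonalize it by the spectral theorem as $\bM = \bU\bC\bU^T$ with $\bC$ real diagonal, let $d^+$ and $d^-$ count its positive and negative eigenvalues, reorder the columns into a positive block and a negative block, and take the rows of $\bar{\bU}\,|\bC|^{1/2}$ as the vectors $\bv_i \in \psR$; then $\langle \bv_i, \bv_j\rangle_\phi = \bM_{ij}$ by construction. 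I do not expect a genuine obstacle: the argument is entirely bookkeeping for an indefinite form, and the only place the Euclidean proof would break — $\bM$ failing to be positive semidefinite — is precisely the phenomenon the signature $(d^+,d^-)$ is designed to absorb. The one point requiring care is to keep track that it is reflexivity, not the other pseudo-metric axioms, that is actually used, and that the eigenvalue split of $\bM$ is what assigns dimensions to the $+$ and $-$ components.
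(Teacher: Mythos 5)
The paper does not prove this proposition; it is cited verbatim from \cite{goldfarb85psuedo} and only discussed informally (the paragraph following the statement sketches the eigendecomposition/signature intuition and points to Step~1 of Algorithm~\ref{alg:finite}). So there is no ``paper proof'' to match yours against. That said, your polarization-identity argument is correct, is self-contained, and is the standard proof of this equivalence: fix $\bv_0 = \b0$ via the translation-isometry reduction, expand $d_\phi^2(\bv_i,\bv_j) = \|\bv_i\|_\phi^2 + \|\bv_j\|_\phi^2 - 2\langle\bv_i,\bv_j\rangle_\phi$, and observe that positive-definiteness of $\phi$ is never invoked. Your existence construction in the last paragraph (spectral decomposition of $\bM$, split by sign of eigenvalue, rows of $\bar{\bU}|\bC|^{1/2}$) is exactly what the paper implements in steps 1--7 of Algorithm~\ref{alg:finite}, so you have correctly identified how the proposition becomes operational.

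One small imprecision: you claim ``only reflexivity enters --- symmetry and the triangle inequality play no role.'' The triangle inequality indeed plays no role. But in the converse direction, symmetry of $d_\cY$ is implicitly required for the hypothesis to be satisfiable at all: since $\phi$ is symmetric, $\langle\bv_i,\bv_j\rangle_\phi = \langle\bv_j,\bv_i\rangle_\phi$, and equating the two instances of the Gram formula forces $d^2_\cY(y_i,y_j) = d^2_\cY(y_j,y_i)$. This is a consistency constraint on the hypothesis rather than a step you deduce from, but it is a sense in which symmetry ``enters.'' Worth tightening if you want to keep that remark.
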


  This bilinear form is very similar to the one used for MDS embeddings \cite{KruskalWish1978}---it is closely related to the squared distance matrix. The main information needed is what the signature (i.e., how many positive, negative, and zero eigenvalues) of this bilinear form is. If the dimension of the pseudo-Euclidean space we choose to embed in is at least as large as the number of positive and negative eigenvalues, we can obtain isometric embeddings. Because we are working with finite metric spaces, this number is always finite, and, in fact, is never larger than the size of the metric space. This means we can always produce isometric embeddings.

The practical aspects of how to produce the embedding are shown in the first half of Algorithm~\ref{alg:finite}. The basic idea is to do an eigendecomposition and capture eigenvectors corresponding to the positive and negative eigenvalues. These allow us to perfectly reproduce the positive and negative components of the distances separately; the resulting distance is the difference between the two components. The process of performing the eigendecomposition is standard, so that the overall procedure has the same complexity as running MDS. Compare this to MDS: there, we only capture the eigenvectors corresponding to the positive eigenvalues and ignore the negative ones. Otherwise the procedure is identical.

We note that in fact it is possible to embed pseudo-metric spaces isometrically into pseudo-Euclidean spaces, but we never use this fact. Our only application of this tool is to embed conventional metric spaces. However, our results directly lift to this more general setting.

The idea of using pseudo-Euclidean spaces for embeddings that can then be used in kernel-based or other classifiers or other approaches to machine learning is not new. For example, \cite{Pekalska01} used these spaces for kernel-based learning, \cite{laub2006information} used them for generic pairwise learning, and \cite{Pekalska06} showed that they are among several non-standard spaces that provide high-quality representations. Our contribution is using these in the context of weak supervision and learning latent variable models.

\paragraph{Dimensionality} We also give more detail on the example we provided showing that pseudo-Euclidean embeddings can have arbitrarily better dimensionality compared to one-hot encodings. The idea here is simple. We start with a particular kind of tree with a root and three branches that are simply long chains (paths) and have $t$ nodes each, for a total of $3t+1$ nodes. One-hot encodings have dimension that scales with the number of nodes, i.e., dimension $3t+1$.

Pseudo-euclidean embeddings enable us to embed such a tree into a space of finite (and in fact, very small) dimension while preserving the shortest-hops distances between each pair of nodes in the graph. As described above, the key question is what the number of positive and negative eigenvalues for the squared distance matrix (and thus the bilinear form) is. Fortunately, for such graphs, the signature of the squared-distance matrix is known (Theorem 20 in \cite{Bapat16}). Applying this result shows that the pseudo-Euclidean dimension is just 3, a tiny fixed value regardless of the value of $t$ above.

\end{document}